\DeclareMathOperator{\Tr}{tr}
\numberwithin{equation}{section}
\setlist[enumerate]{noitemsep}		% don't skip spaces in enumerate
\newtheoremstyle{cited}%
{\topsep}% (space above)
{\topsep}% (space below)
{\itshape}% (body font)
{0pt}% (indent amount)
{\bfseries}% {theorem head font}
{.}% {punctuation after theorem head}
{5pt plus 1pt minus 1pt}% {space after theorem head}
{\thmname{#1}\thmnumber{ #2}\thmnote{ {\normalfont{#3}}}}% {theorem head spec}
\theoremstyle{cited}
\newtheorem{mythm}{Theorem}
\newtheorem{mylem}{Lemma}
\newtheorem{mycor}{Corollary}
\newtheorem{mydef}{Definition}
\begin{document}

%% *** Frontmatter *** 

\begin{frontmatter}
\title{On the Consistency of Optimal Bayesian Feature Selection in the Presence of Correlations}

\runtitle{On the Consistency of Optimal Bayesian Feature Selection}

\begin{aug}

\author{\fnms{Ali} \snm{Foroughi pour}\thanksref{addr1}\ead[label=e1]{foroughipour.1@osu.edu}}
\and
\author{\fnms{Lori A.} \snm{Dalton}\thanksref{addr1}\ead[label=e2]{dalton@ece.osu.edu}}

\runauthor{A. Foroughi pour and L. A. Dalton}

\address[addr1]{Department	of Electrical and Computer Engineering, The Ohio State University, Columbus, OH, 43210
	\printead{e1}
	\printead{e2}
}

\end{aug}

\begin{abstract}
Optimal Bayesian feature selection (OBFS) is a multivariate supervised screening method designed from the ground up for biomarker discovery. In this work, we prove that Gaussian OBFS is strongly consistent under mild conditions, and provide rates of convergence for key posteriors in the framework.  These results are of enormous importance, since they identify precisely what features are selected by OBFS asymptotically, characterize the relative rates of convergence for posteriors on different types of features, provide conditions that guarantee convergence, justify the use of OBFS when its internal assumptions are invalid, and set the stage for understanding the asymptotic behavior of other algorithms based on the OBFS framework.  
\end{abstract}

%% ** Keywords **
\begin{keyword}[class=MSC]
\kwd{62F15}
\kwd{62C10}
\kwd{62F07}
\kwd{92C37}
%\kwd[]{}
\end{keyword}

\begin{keyword}
	\kwd{Bayesian decision theory}
	\kwd{variable selection}
	\kwd{biomarker discovery}
\end{keyword}

\end{frontmatter}

\thispagestyle{empty}

%% ** Mainmatter **

\section{Introduction}

Biomarker discovery aims to identify biological markers (genes or gene products) that lead to improved diagnostic or prognostic tests, better treatment recommendations, or advances in our understanding of the disease or biological condition of interest~\citep{ilyin_biomarker_2004, ref18, ref17}.  
Reliable and reproducible biomarker discovery has proven to be difficult~\citep{diamandis_cancer_2010}; while one reason is that preliminary small-sample datasets are inherently difficult to analyze, another factor is that most popular algorithms and methods employed in bioinformatics have inherent limitations that make them unsuitable for many discovery applications. 

Consider univariate filter methods like t-tests, which are perhaps the most ubiquitous throughout the bioinformatics literature.  Since they focus on only one feature at a time, filter methods cannot take advantage of potential correlations between markers.  In particular, they cannot identify pairs (or sets) of markers with tell-tale behaviors only when observed in tandem.  
Multivariate methods, on the other hand, can account for correlations, but the vast majority are wrapper or embedded feature selection algorithms designed to aid in classification or regression \emph{model reduction}.  Model construction is not a reasonable goal in most small-scale exploratory studies, particularly in biology where the expected number of variables is large and the nature of their interactions can be highly complex and context dependent.  Furthermore, feature selection methods designed for model reduction intrinsically penalize redundant features and reward smaller feature sets~\citep{sima2008peaking, awada2012review, ang2016supervised, li2017feature}.  This can be counterproductive in biomarker discovery; so much so that filter methods often far outperform multivariate methods~\citep{sima2006should,sima2008peaking,foroughipour2018optimal}.  

\emph{Optimal Bayesian feature selection} (OBFS) is a supervised multivariate screening method designed to address these problems~\citep{dalton2013optimal,igib1}.  
The OBFS modeling framework, discussed in detail in Section~\ref{sec:OBFS}, assumes that features can be partitioned into two types of independent blocks: blocks of correlated ``good'' features that have distinct joint distributions between two classes, and blocks of correlated ``bad'' features that have identical joint distributions in all classes~\citep{dalton2018heuristic}.  
Given distributional assumptions on each block, and priors on the distribution parameters, OBFS can be implemented using one of many selection criteria, for example the \emph{maximum number correct} (MNC) criterion, which maximizes the posterior expected number of features correctly identified as belonging to good versus bad blocks, or the \emph{constrained MNC} (CMNC) criterion, which additionally constrains the number of selected features.  
In this way, OBFS aims to optimally identify and rank the set of \emph{all} features with distinct distributions between the classes, which represents the predicted set of biomarkers, while accounting for correlations.  

\emph{Optimal Bayesian feature filtering} (OBF) is a special case of OBFS that assumes that features are independent (blocks of size one) and the parameters governing each feature are independent~\citep{ref8}.  
Gaussian OBF (which assumes independent Gaussian features with conjugate priors, henceforth referred to as simply OBF), has very low computation cost, robust performance when its modeling assumptions are violated, and particularly excels at identifying strong markers with low correlations~\citep{ref10,dalton2018heuristic}.  
Furthermore, it has been shown that OBF is \emph{strongly consistent} under very mild conditions, including cases where the features are dependent and non-Gaussian~\citep{igib1}.  
In particular, OBF, in the long run, selects precisely the set of features with distinct means or variances between the classes.  The consistency theorem in~\citep{igib1} formalizes our intuition that OBF cannot take advantage of correlations; although OBF identifies features that are individually discriminating, we see that it has no capacity to identify features that only discriminate when grouped together, or features that are merely correlated with (or linked through a chain of correlation with) other discriminating features.  

In bioinformatics, features that only discriminate when grouped together and features that are correlated with other strong markers are of tremendous interest, since they may represent important actors in the biological condition under study.  While multivariate methods have the potential to discover such features, as discussed above, methods focused on model reduction tend to do so unreliably and inconsistently.  Rather than involve classification or regression models, OBFS searches for intrinsic differences between features.  Like most multivariate methods, one difficulty with OBFS is that it is computationally expensive (except in certain special cases, like OBF).  If an exact solution is desired, even Gaussian OBFS (which assumes independent \emph{blocks} of Gaussian features with conjugate priors, henceforth referred to as simply OBFS) is currently only tractable in small-scale problems with up to approximately ten features.  This has lead to the development of a number of promising computationally efficient heuristic algorithms based on OBFS theory, for example 2MNC-Robust~\citep{pour2014optimal}, POFAC, REMAIN, and SPM~\citep{dalton2018heuristic}.  

Although OBFS and heuristic algorithms based on OBFS have demonstrated better performance than other multivariate methods in biomarker discovery, it is currently unknown precisely what features they select in the long run.  Our main contribution is a theorem presented in Section~\ref{sec:consistency}, analogous to the consistency theorem presented in~\citep{igib1} for OBF, that shows that OBFS is strongly consistent under mild conditions.  The consistency proof for OBFS utilizes nine lemmas and is significantly more complex than that of OBF.  This theorem identifies precisely what features are selected by OBFS asymptotically, provides conditions that guarantee convergence, justifies the use of OBFS in non-design settings where its internal assumptions are invalid, and characterizes rates of convergence for key posteriors in the framework, including marginal posteriors on different types of features.  

The asymptotic behavior of optimal feature selection provides a frame of reference to understand the performance of heuristic algorithms based on OBFS; for instance, we may compare the sets of features selected asymptotically, the conditions required to guarantee convergence, and rates of convergence.  
Furthermore, while numerous works emphasize the need to account for gene interactions and to detect families of interacting biomarkers [e.g., see~\cite{han2017synergistic}, \cite{xi2018novel} and~\cite{fang2019discovering}], typically the focus is on simple statistics that measure pairwise interactions in the data, rather than on establishing intrinsic characteristics of complete marker families, or on quantifying the performance and properties of selection algorithms designed to identify marker families.  
Thus, this work is also important because it proposes a formal definition for marker families (correlated blocks of features), and shows that these marker families are identifiable (via OBFS).  

\section{Gaussian Optimal Bayesian Feature Selection}
\label{sec:OBFS}

We begin with an overview of the Gaussian modeling framework presented in~\cite{dalton2018heuristic}, and a derivation of the corresponding optimal selection rule, OBFS.  
Let $F$ be the set of feature indices, each corresponding to a real-valued feature.  
We call feature indices we wish to select ``true \emph{good} features,'' denoted by $\bar{G}$, and we call feature indices we wish to not select ``true \emph{bad} features,'' denoted by $\bar{B}=F \backslash \bar{G}$.  
In this model, good features assign (w.p.~$1$ over the prior) distinct distributions between two classes, labeled $y = 0$ and $y = 1$, while bad features assign the same distribution across the whole sample.  
We further assume that $\bar{G}$ and $\bar{B}$ can each be partitioned into sub-blocks of interacting features.  We call the set of all sub-blocks a ``true feature partition,'' denoted by $\bar{P}$.  If $\bar{G}$ and $\bar{B}$ are non-empty, we write $\bar{P} = (\{{\bar{G}_1},\ldots,{\bar{G}_{\bar{u}}}\},\{{\bar{B}_1},\ldots,{\bar{B}_{\bar{v}}}\})$, where $\bar{u}$ and $\bar{v}$ are positive integers, and the set of all $\bar{G}_i$'s and $\bar{B}_j$'s are disjoint such that $\bar{G} = \cup_{i=1}^{\bar{u}} \bar{G}_i$, $\bar{B}=\cup_{j=1}^{\bar{v}} \bar{B}_j$, and all $\bar{G}_i$'s and $\bar{B}_j$'s are non-empty.  
If $\bar{G}$ is empty, then we still denote $\bar{P}$ this way, but also define $\cup G_i = \varnothing$, and define sums and products from $1$ to $\bar{u}$ to be $0$ and $1$, respectively.  We define similar conventions when $\bar{B}$ is empty.  

In the Bayesian model, $\bar{G}$, $\bar{B}$ and $\bar{P}$ are random sets, and we denote instantiations of these random sets by $G$, $B$, and $P = (\{{G_1},\ldots,{G_u}\},\{{B_1},\ldots,{B_v}\})$, respectively.  We define a prior on the true feature partition, $\pi(P) \equiv \text{P}(\bar{P} = P)$, which induces a prior on the true good features, $\pi(G) \equiv \text{P}(\bar{G} = G) = \sum_{P:\cup G_i = G} \pi(P)$.  

Given a true feature partition $\bar{P} = P$, define $\theta^{G_i}_y$ for each class $y = 0, 1$ and each good block index $i = 1, \ldots, u$, and $\theta^{B_j}$ for each bad block index $j = 1, \ldots v$.  Let $\theta^{P}$ denote the collection of all $\theta^{G_i}_y$'s and $\theta^{B_j}$'s.  Assume the $\theta^{G_i}_y$'s and $\theta^{B_j}$'s are mutually independent, i.e., we have a prior of the form
\begin{equation}
\pi(\theta^P|\bar{P}=P)=\prod_{i=1}^u \pi(\theta^{G_i}_0) \pi(\theta^{G_i}_1) \prod_{j=1}^v \pi(\theta^{B_j}).
\end{equation}
For good block $A = G_i$, assume $\theta^{A}_y=[\mu^{A}_y, \Sigma^{A}_y]$ for each class $y = 0, 1$, where $\mu^{A}_y$ is a length $|A|$ column vector, $\Sigma^{A}_y$ is an $|A|\times |A|$ matrix, and $|A|$ denotes the cardinality of set $A$.  
We assume $\pi(\theta^{A}_y)=\pi(\mu^{A}_y|\Sigma^{A}_y)\pi(\Sigma^{A}_y)$ is normal-inverse-Wishart:
\begin{align}
\pi(\Sigma^{A}_y) &= K^{A}_y |\Sigma^{A}_y|^{-0.5(\kappa^{A}_y+|A|+1)} 
\exp\left( -0.5 \Tr( S^{A}_y (\Sigma^{A}_y)^{-1} ) \right), \\
\pi(\mu^{A}_y|\Sigma^{A}_y) &= L^{A}_y  |\Sigma^{A}_y|^{-0.5} 
\exp\left( -0.5 \nu^{A}_y (\mu^{A}_y-m^{A}_y)^T (\Sigma^{A}_y)^{-1} (\mu^{A}_y-m^{A}_y)  \right),
\end{align}
where $|X|$ is the determinant, $\Tr(X)$ is the trace, and $X^T$ is the transpose of matrix $X$.  
For a proper prior, 
$\kappa^{A}_y>|A|-1$, 
$S^{A}_y$ is a symmetric positive-definite $|A| \times |A|$ matrix, 
$\nu^{A}_y>0$, 
$m^{A}_y$ is an $|{A}| \times 1$ vector,
\begin{align}
K^{A}_y&=|S^{A}_y|^{0.5 \kappa^{A}_y}2^{-0.5 \kappa^{A}_y |{A}| } /  \Gamma_{|{A}|}(0.5 \kappa^{A}_y), 
\label{eq:K_y} \\
L^{A}_y&=(2\pi/\nu^{A}_y)^{-0.5|{A}|}, 
\label{eq:L_y}
\end{align}
and $\Gamma_{d}$ denotes the multivariate gamma function, where $d$ is a positive integer. 
Likewise, for bad block $A = B_j$ assume $\theta^{A}=[\mu^{A}, \Sigma^{A}]$, and that  $\pi(\theta^{A})$ is normal-inverse-Wishart with hyperparameters $\kappa^{A}$, $S^{A}$, $\nu^{A}$, and $m^{A}$, and scaling constants $K^{A}$ and $L^{A}$. 

Let $S_n$ be a sample composed of $n$ labeled points with $n_y$ points in class $y$, where labels are determined by a process independent from $\bar{P}$ and $\theta^{\bar{P}}$ (for instance, using \emph{random sampling} or \emph{separate sampling}).  Given $\bar{P} = P$, $\theta^{P}$, and the labels, assume all sample points are mutually independent, and features in different blocks are also independent.  Assume that features in block $G_i$, $i=1, \ldots, u$, are jointly Gaussian with mean $\mu^{G_i}_y$ and covariance matrix $\Sigma^{G_i}_y$ under class $y$, and that features in block $B_j$, $j = 1, \ldots, v$, are jointly Gaussian with mean $\mu^{B_j}$ and covariance $\Sigma^{B_j}$ across the whole sample.  

The posterior on $\bar{P}$ over the set of all valid feature partitions is given by the normalized product of the prior and likelihood function.  It can be shown that $\pi^*(P) \equiv \text{P}(\bar{P}=P|S_n) \propto \pi(P) q(P) a(P)$, where $\pi^*(P)$ is normalized to have unit sum, and
\begin{align}
q(P) &= 
\prod_{\substack{i=1, \ldots, u, \\ y = 0,1}} Q^{G_i}_y (\kappa^{G_i*}_y-|G_i|-1)^{-0.5 |G_i| \kappa^{G_i*}_y} 
\nonumber \\ 
&\quad \times
\prod_{j=1, \ldots, v}
Q^{B_j} (\kappa^{B_j*}-|B_j|-1)^{-0.5 |B_j| \kappa^{B_j*}}, \\
a(P)&= \Bigg(  \prod_{\substack{i=1, \ldots, u, \\ y = 0,1}} |C^{G_i}_y|^{\kappa^{G_i*}_y}
\prod_{j=1, \ldots, v} |C^{B_j}|^{\kappa^{B_j*}} \Bigg) ^{-0.5}.
\end{align}
For each good block $A = G_i$, $i=1, \ldots, u$, and class $y=0, 1$, 
\begin{align}
\nu^{A*}_y&=\nu^{A}_y+n_y, 
\label{eq:nu_y} \\
\kappa^{A*}_y&=\kappa^{A}_y+n_y, 
\label{eq:kappa_y} \\
Q^{A}_y &=K^{A}_y L^{A}_y 2^{0.5 \kappa^{A*}_y |A|} \Gamma_{|A|}(0.5\kappa^{A*}_y) \left( {2 \pi}/{\nu^{A*}_y} \right)^{0.5|A|},
\label{eq:Q_y} \\ 
S^{A*}_y&=S^{A}_y+(n_y-1) \hat{\Sigma}^{A}_y+\frac{\nu^{A}_y n_y}{\nu^{A*}_y} (\hat{\mu}^{A}_y-m^{A}_y)(\hat{\mu}^{A}_y-m^{A}_y)^T, 
\label{eq:S_y} \\
C^A_y&={S^{A*}_y}/{(\kappa^{A*}_y-|A|-1)}, 
\label{eq:C_y}
\end{align}
and $\hat{\mu}^{A}_y$ and $\hat{\Sigma}^{A}_y$ are the sample mean and unbiased sample covariance of features in $A$ under class $y$.  
Similarly, for each bad block $A = B_j$, we find $\nu^{A*}$, $\kappa^{A*}$, $Q^{A}$, $S^{A*}$ and $C^{A}$ using~\eqref{eq:nu_y} through~\eqref{eq:C_y} with all subscript $y$'s removed, where $\hat{\mu}^{A}$ and $\hat{\Sigma}^{A}$ are the sample mean and unbiased sample covariance of features in $A$ across the whole sample.  

The posterior on $\bar{P}$ induces a posterior on $\bar{G}$ over all subsets $G \subseteq F$, 
\begin{align}
\label{eq:marginal0}
\pi^*(G) 
\equiv \text{P}(\bar{G}=G|S_n) 
&=\sum_{P:\cup G_i = G} \pi^*(P),
\end{align}
as well as posterior probabilities that each individual feature $f \in F$ is in $\bar{G}$, 
\begin{align}
\pi^*(f) 
\equiv \text{P}(f \in \bar{G}|S_n) 
&=\sum_{G: f \in G} \pi^*(G).
\label{eq:posterior_f}
\end{align}
Note $\pi^*(f)=\text{P}(f \in \bar{G}|S_n)$ and $\pi^*(\{f\})=\text{P}( \bar{G}=\{f\}|S_n)$ are distinct.  

The objective of OBFS is to identify the set of true good features, $\bar{G}$. We will consider two objective criteria: MNC and CMNC.  MNC maximizes the expected number of correctly identified features; that is, MNC outputs the set $G \subseteq F$ with complement $B = F \backslash G$ that maximizes $\text{E}[|G \cap \bar{G}| + |B \cap \bar{B}|]$, which is given by $G^{\textrm{MNC}} = \{f \in F: \pi^*(f) >0.5 \}$~\citep{pour2014optimal}.  CMNC maximizes the expected number of correctly identified features under the constraint of selecting a specified number of features, $D$, and $G^{\textrm{CMNC}}$ is found by picking the $D$ features with largest $\pi^*(f)$~\citep{fdin}.
Both MNC and CMNC require computing $\pi^*(f)$ for all $f \in F$, which generally requires computing $\pi(P)$ for all valid feature partitions $P$, and is generally intractable unless $|F|$ is small.  

Under proper priors, Gaussian OBFS takes the following modeling parameters as input: (1) $\pi(P)$ for each valid feature partition, $P$, (2) $\nu^{A}_y>0$, $m^A_y$, $\kappa^{A}_y >|A|-1$, and symmetric positive-definite $S^{A}_y$ for all $y$ and all possible good blocks $A$ in valid $P$, and (3) $\nu^{A}>0$, $m^A$, $\kappa^{A} >|A|-1$, and symmetric positive-definite $S^{A}$ for all possible bad blocks $A$ in valid $P$.  If CMNC is used, it also takes $D$ as input.
When $\pi(\theta^P|\bar{P}=P)$ is improper, the above derivations are invalid, but $\pi^*(P) \propto \pi(P) q(P) a(P)$ can still be taken as a definition and computed as long as: (1) $\pi(P)$ is proper, (2) $\nu^{A*}_y>0$, $\kappa^{A*}_y >|A|-1$, and $S^{A*}_y$ is symmetric positive-definite, (3) $\nu^{A*}>0$, $\kappa^{A*}>|A|-1$, and $S^{A*}$ is symmetric positive-definite, and (4) $K^{A}_y$ and $L^{A}_y$ are no longer given by~\eqref{eq:K_y} and~\eqref{eq:L_y}, and similarly $K^{A}$ and $L^{A}$ are no longer given by analogous equations; instead these are positive input parameters specified by the user.  

Gaussian OBF is a special case where $\pi(P)$ assumes that all blocks $\bar{G}_1, \ldots, \bar{G}_{\bar{u}}$ and $\bar{B}_1, \ldots, \bar{B}_{\bar{v}}$ are of size one, and the events $\{f \in \bar{G}\}$ are mutually independent.  
For each $f \in F$, OBF takes as input (1) marginal priors $\pi(f) \equiv \text{P}(f \in \bar{G})$, (2) scalars $\nu_y^{\{f\}}$, $m_y^{\{f\}}$, $\kappa_y^{\{f\}}$ and $S_y^{\{f\}}$ for all $y$, (3) scalars $\nu^{\{f\}}$, $m^{\{f\}}$, $\kappa^{\{f\}}$ and $S^{\{f\}}$, and (4) $L^f \equiv K_0^{\{f\}} L_0^{\{f\}} K_1^{\{f\}} L_1^{\{f\}} / ( K^{\{f\}} L^{\{f\}} )$ if improper priors are used.  OBF also takes $D$ as input if CMNC is used.  
Under OBF, it can be shown that $\pi^*(f)$, defined in~\eqref{eq:posterior_f}, simplifies to $\pi^*(f) = h(f)/(1+h(f))$, where 
\begin{equation}
h(f) =  
\frac{\pi(f)}{1-\pi(f)} 
\cdot \frac{Q_0^{\{f\}}Q_0^{\{f\}}}{Q^{\{f\}}}
\cdot \frac{(S^{\{f\}*})^{0.5\kappa^{\{f\}*}}}
{(S_0^{\{f\}*})^{0.5\kappa_y^{\{f\}*}}(S_1^{\{f\}*})^{0.5\kappa_y^{\{f\}*}}},
\label{eq:OBF_posterior}
\end{equation}
$\kappa_y^{\{f\}*}$, $Q_y^{\{f\}}$ and $S_y^{\{f\}*}$ are defined in~\eqref{eq:kappa_y}, \eqref{eq:Q_y} and~\eqref{eq:S_y}, respectively, and $\kappa^{\{f\}*}$, $Q^{\{f\}}$ and $S^{\{f\}*}$ are defined similarly.  Rather than evaluate $\pi^*(P)$ for all feature partitions, OBF under both MNC and CMNC reduces to simple filtering, where features are scored by the $h(f)$ given in~\eqref{eq:OBF_posterior}.  

\section{Consistency}
\label{sec:consistency}

Let $\mathcal{F}_{\infty}$ be an arbitrary sampling distribution on an infinite sample, $S_{\infty}$.  Each sample point in $S_{\infty}$ consists of a binary label, $y=0,1$, and a set of features corresponding to a set of feature indices, $F$.  
For each $n = 1, 2, \ldots$, let $S_n$ denote a sample consisting of the first $n$ points in $S_{\infty}$, let $n_y$ denote the number of points in class $y$, and define $\rho_n=n_0/n$.  

The goal of feature selection is to identify a specific subset of features (say those with different distributions between two classes), which we denote by $\bar{G}$.  Proving strong consistency for a feature selection algorithm thus amounts to showing that
\begin{align}
\lim_{n \to \infty} \hat{G}(S_n)=\bar{G}
\label{eq:strong_consistency}
\end{align}
with probability~$1$ (w.p.~$1$) over the infinite sampling distribution, where $n$ is the sample size, $S_n$ is a sample of size $n$, and $\hat{G}(S_n)$ is the output of the selection algorithm.
Here, $\hat{G}(S_n)$ and $\bar{G}$ are sets, and we define $\lim_{n \to \infty} \hat{G}(S_n)=\bar{G}$ to mean that $\hat{G}(S_n)=\bar{G}$ for all but a finite number of $n$.  

OBFS and OBF fix the sample and model the set of features we wish to select and the sampling distribution as random. 
To study consistency, we reverse this: now the sampling distribution is fixed and the sample is random.  
We begin by reviewing a few definitions and a special case of the strong consistency theorem for OBF.  

\begin{mydef}[\citep{igib1}]
	\label{def:indep_unambiguous}
	$\bar{G}$ is an \emph{independent unambiguous set of good features} if the following hold, where $\mu^{f}_y$ and $\sigma^{f}_y$ are the mean and variance of feature $f$ under class $y$, respectively:
	\begin{enumerate}
		\item{For each $g \in \bar{G}$, $\mu^{g}_y$ and $\sigma^{g}_y$ exist for all $y$ such that either $\mu^{g}_0 \neq \mu^{g}_1$ or $\sigma^{g}_0 \neq \sigma^{g}_1$.}
		\item{For each $b \notin \bar{G}$, $\mu^{b}_y$ and $\sigma^{b}_y$ exist for all $y$ such that $\mu^{b}_0 = \mu^{b}_1$ and $\sigma^{b}_0 = \sigma^{b}_1$.}
	\end{enumerate}
\end{mydef}

\begin{mydef}[\citep{igib1}]
	\label{def:balanced}
	An infinite sample, $S_{\infty}$, is a \emph{balanced sample} if $\liminf_{n \to \infty} \rho_n >0$, $\limsup_{n \to \infty} \rho_n <1$, and, conditioned on the labels, sample points are independent with points belonging to the same class identically distributed.  
\end{mydef}

\begin{mythm}[\citep{igib1}]
	Let $S_{\infty}$ be a fixed infinite sample and let $G$ be a fixed subset of $F$.  If $\lim_{n \to \infty} \pi^*(G) = 1$, then $\lim_{n \to \infty} \hat{G}_{\mathrm{MNC}}(S_n) = G$  and $\lim_{n \to \infty} \hat{G}_{\mathrm{CMNC}}(S_n) = G$ if $D = |G|$, where $\hat{G}_{\mathrm{MNC}}(S_n)$ and $\hat{G}_{\mathrm{CMNC}}(S_n)$ are the MNC and CMNC feature sets under $\pi^*(G)$, respectively.  
	\label{theorem:main}
\end{mythm}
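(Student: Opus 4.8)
The plan is to exploit the elementary relationship between the joint posterior $\pi^*(G)$ and the marginal feature posteriors $\pi^*(f)$ defined in~\eqref{eq:posterior_f}, and then read off the behavior of the MNC and CMNC rules directly. The only facts I need are that the marginalization $\pi^*(f) = \sum_{G' : f \in G'} \pi^*(G')$ always holds and that $\sum_{G'} \pi^*(G') = 1$.

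First I would record two bounds that hold at every sample size and that are \emph{uniform} in $f$. For any $f \in G$, the set $G$ appears in the sum defining $\pi^*(f)$, so
\begin{equation}
\pi^*(f) = \sum_{G' : f \in G'} \pi^*(G') \ge \pi^*(G).
\end{equation}
For any $f \notin G$, the set $G$ does not appear in that sum, so $\pi^*(f)$ is bounded by the posterior mass off $G$:
\begin{equation}
\pi^*(f) = \sum_{G' : f \in G'} \pi^*(G') \le \sum_{G' \neq G} \pi^*(G') = 1 - \pi^*(G).
\end{equation}
Crucially, the lower bound $\pi^*(G)$ and the upper bound $1 - \pi^*(G)$ do not depend on which feature is considered, so a single cutoff on $\pi^*(G)$ controls every feature at once.

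Next I would invoke the hypothesis $\pi^*(G) \to 1$ to choose $N$ with $\pi^*(G) > 1/2$ for all $n \ge N$. For such $n$, the first bound gives $\pi^*(f) \ge \pi^*(G) > 1/2$ for every $f \in G$, and the second gives $\pi^*(f) \le 1 - \pi^*(G) < 1/2$ for every $f \notin G$. The MNC rule $G^{\mathrm{MNC}} = \{f : \pi^*(f) > 1/2\}$ then selects exactly the members of $G$, so $\hat{G}_{\mathrm{MNC}}(S_n) = G$ for all $n \ge N$; this is precisely the set-limit statement $\lim_{n\to\infty}\hat{G}_{\mathrm{MNC}}(S_n) = G$. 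For CMNC with $D = |G|$, the same $N$ works: for $n \ge N$ there is a strict gap, with every $f \in G$ above $1/2$ and every $f \notin G$ below $1/2$, so the $|G|$ features with the largest $\pi^*(f)$ are precisely the members of $G$, giving $\hat{G}_{\mathrm{CMNC}}(S_n) = G$ for all $n \ge N$.

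The proof is short because the marginalization does all of the work, and there is no genuine analytic obstacle. The one point deserving attention is the tie-breaking in CMNC: I would observe that the uniform $1/2$-gap between $G$ and its complement rules out any tie straddling that boundary, so the ``pick the $D$ largest'' rule returns $G$ unambiguously regardless of how ties within the complement are resolved. It is also worth emphasizing that the thresholds here are uniform in $f$ by construction, so the conclusion does not rely on any finiteness of $F$ beyond $|G| = D$ being finite.
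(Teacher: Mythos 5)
Your proof is correct. Note that the paper you were given does not actually prove this theorem---it is imported verbatim from \citep{igib1} as a citation---so there is no internal proof to compare against; your argument (lower-bounding $\pi^*(f)$ by $\pi^*(G)$ for $f \in G$, upper-bounding it by $1-\pi^*(G)$ for $f \notin G$, and thresholding at $1/2$ once $\pi^*(G) > 1/2$) is the natural and essentially canonical one, and it correctly handles both rules, including the CMNC tie-breaking issue via the strict gap across the $1/2$ boundary when $D = |G|$.
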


\begin{mythm}[\citep{igib1}]
Suppose the following are true:

\begin{enumerate}
	\item[(i)] $\bar{G}$ is the independent unambiguous set of good features.  
	\item[(ii)] For each feature not in $\bar{G}$, the fourth order moment across the whole sample exists.  
	\item[(iii)] $S_{\infty}$ is a balanced sample w.p.~$1$.  
	\item[(iv)] $\pi^*(G)$ assumes a fixed Gaussian OBF model for all $n$ with $0 < \pi(f) < 1$ for all $f$.  
\end{enumerate}

\noindent
Then $\lim_{n \to \infty} \pi^*(\bar{G}) = 1$ for $\mathcal{F}_{\infty}$-almost all infinite samples.
\label{theorem:OBF}
\end{mythm}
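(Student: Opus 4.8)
The plan is to exploit the product structure of the OBF posterior and reduce the claim to a feature-by-feature analysis of the score $h(f)$ in \eqref{eq:OBF_posterior}. Since OBF assumes the events $\{f\in\bar G\}$ are mutually independent (blocks of size one), the induced marginal posterior factorizes as $\pi^*(\bar G)=\prod_{g\in\bar G}\pi^*(g)\prod_{b\notin\bar G}(1-\pi^*(b))$, where $\pi^*(f)=h(f)/(1+h(f))$. As $F$ is finite, it therefore suffices to show, for $\mathcal F_\infty$-almost every sample, that $h(g)\to\infty$ for every good feature $g\in\bar G$ and $h(b)\to0$ for every bad feature $b\notin\bar G$; the finite product then converges to $1$. (By Theorem~\ref{theorem:main} this also yields consistency of the MNC and CMNC rules.)

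First I would record the almost-sure behavior of the per-class sufficient statistics. By Definition~\ref{def:balanced} the points within each class are i.i.d., and since $\liminf\rho_n>0$ and $\limsup\rho_n<1$ we have $n_0,n_1\to\infty$ with $\rho_n$ confined to a compact interval $[\rho_{\min},\rho_{\max}]\subset(0,1)$ for large $n$. The strong law gives $\hat\mu_y\to\mu^f_y$ and $\hat\sigma_y^2\to\sigma^f_y$ a.s., and the law of the iterated logarithm bounds the fluctuations, namely $\hat\mu_0-\hat\mu_1=O(\sqrt{\log\log n/n})$ and, using the fourth-moment hypothesis (ii) for bad features, $\hat\sigma_0^2-\hat\sigma_1^2=O(\sqrt{\log\log n/n})$ a.s. Since $\nu^{*}_y,\kappa^{*}_y\sim n_y$, the representation \eqref{eq:S_y} gives $S^{\{f\}*}_y=n_y\hat\sigma_y^2+O(1)$ and $S^{\{f\}*}=n\hat\sigma^2+O(1)$, while the elementary identity $n\hat\sigma^2=n_0\hat\sigma_0^2+n_1\hat\sigma_1^2+n\rho_n(1-\rho_n)(\hat\mu_0-\hat\mu_1)^2+O(1)$ ties the pooled and within-class variance estimates together.

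The heart of the argument is an asymptotic expansion of $\log h(f)$. Applying Stirling's formula to the gamma factors in $Q^{\{f\}}_y$ and $Q^{\{f\}}$ from \eqref{eq:Q_y}, and Taylor-expanding $\log S^{\{f\}*}_y=\log n_y+\log\hat\sigma_y^2+O(1/n)$, one finds that the leading $n\log n$ and $n_y\log n_y$ contributions of the gamma functions exactly cancel those of the power $(S^{\{f\}*})^{0.5\kappa^{\{f\}*}}$, leaving
\begin{equation}
\log h(f)=\tfrac12\big(n\log\hat\sigma^2-n_0\log\hat\sigma_0^2-n_1\log\hat\sigma_1^2\big)-\log n+O(\log\log n)\qquad\text{a.s.},\nonumber
\end{equation}
where the $-\log n$ is the Bayesian complexity penalty for the two extra parameters of the two-class model and absorbs the bounded term $-\log(\rho_n(1-\rho_n))$. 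Writing the bracket as $nB_n$ with $B_n=\log\hat\sigma^2-\rho_n\log\hat\sigma_0^2-(1-\rho_n)\log\hat\sigma_1^2$, the good-feature case is then immediate: here $\hat\sigma^2\to\rho\sigma^g_0+(1-\rho)\sigma^g_1+\rho(1-\rho)(\mu^g_0-\mu^g_1)^2$, and two applications of the concavity of the logarithm give $B_n\ge0$ with equality only when the means and variances both coincide; since $g$ is good, $B_n$ is bounded below by some $\delta>0$ for all large $n$, so $\log h(g)\ge\tfrac12 n\delta-\log n+O(\log\log n)\to\infty$.

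I expect the main obstacle to be the bad-feature case, where $\sigma^b_0=\sigma^b_1$ and $\mu^b_0=\mu^b_1$, all three variance estimates share the same limit, and the entire $O(n)$ and $O(\log n)$ structure of $\log h(b)$ beyond the penalty cancels, so the sign is decided only at the $O(\log n)$ scale. One must show $nB_n=o(\log n)$. Substituting the pooled-variance identity and expanding $\log(1+t)$ to second order, the first-order part of $nB_n$ is of order $\sigma^{-1}n\rho_n(1-\rho_n)(\hat\mu_0-\hat\mu_1)^2=O(\log\log n)$, and the second-order part is a multiple of $n(\hat\sigma_0^2-\hat\sigma_1^2)^2=O(\log\log n)$; both are $o(\log n)$ precisely by the iterated-logarithm bounds, and it is the control of $\hat\sigma_0^2-\hat\sigma_1^2$ that requires hypothesis (ii). Hence $\log h(b)=-\log n+o(\log n)\to-\infty$. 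Establishing $h(g)\to\infty$ and $h(b)\to0$ off a single null set (a finite union over $f\in F$), the finite product defining $\pi^*(\bar G)$ converges to $1$ almost surely, completing the proof.
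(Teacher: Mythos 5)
Your proposal is sound, but note that a literal comparison with ``the paper's proof'' is not possible here: Theorem~\ref{theorem:OBF} is quoted from \citep{igib1} and is never proved in this paper; the only consistency proof given is for the correlated analogue, Theorem~\ref{sec:thm_conv_1}. Measured against that machinery, your route is organized genuinely differently, though the analytic ingredients are parallel. You exploit the fact that under the OBF prior the posterior factorizes feature-by-feature, reduce everything to the scalar score $h(f)$ of \eqref{eq:OBF_posterior}, and establish the single expansion $\log h(f)=\tfrac12 nB_n-\log n+O(1)$; this cancellation does check out (the entropy terms $n(\rho_n\log\rho_n+(1-\rho_n)\log(1-\rho_n))$ and all hyperparameter-dependent $\log n$ terms cancel exactly, so your $O(\log\log n)$ error is even conservative). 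The paper, by contrast, cannot factorize once blocks are correlated, so it works with ratios $\pi^*(P)/\pi^*(\bar{P})$ split into a $q$-part and an $a$-part: Lemma~\ref{lem:ratio_q} (polynomial decay of the $q$-ratio) is the analogue of your $-\log n$ complexity penalty; Lemma~\ref{lem:ratio_a_good_labeled_bad_different} (exponential decay of the $a$-ratio for mislabelled good blocks) is the analogue of your Jensen-gap lower bound $B_n\ge\delta>0$; and Lemma~\ref{lem:ratio_a_bad_labeled_good} (polylog control for bad blocks labelled good) is the analogue of your bound $nB_n=O(\log\log n)$ --- indeed that lemma runs on exactly the inequality underlying your second-order expansion, $\rho x^{1-\rho}+(1-\rho)x^{-\rho}\le 1+\rho(1-\rho)(x-1)^2$ (cited there as Lemma~S3 of \citep{igib1}), combined with the same law-of-the-iterated-logarithm bound on $\hat\sigma_0^2-\hat\sigma_1^2$ that consumes hypothesis (ii). What your approach buys is brevity and immediately readable marginal rates ($\pi^*(g)\ge 1-e^{-cn}$, $\pi^*(b)\le n^{-s}$, matching the rates the paper states after Theorem~\ref{sec:thm_conv_1}); what the partition-ratio approach buys is survival of the argument when independence is dropped, which is the entire point of the present paper. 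Two details worth tightening in a full write-up: your good-feature step implicitly needs $\sigma_0^f,\sigma_1^f>0$ (the scalar shadow of Condition (iii) of Theorem~\ref{sec:thm_conv_1}) so that the logarithms and the uniform-in-$\rho_n$ lower bound $\delta$ are legitimate, and hypothesis (ii), which is stated for the whole-sample mixture, should be explicitly converted into finite fourth moments within each class (immediate from $\liminf\rho_n>0$ and $\limsup\rho_n<1$), since that is what the per-class iterated-logarithm law actually requires.
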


By Theorems~\ref{theorem:main} and~\ref{theorem:OBF}, Gaussian OBF under MNC and Gaussian OBF under CMNC with ``correct'' $D$ (the user knows in advance how many features to select) are strongly consistent 
if the conditions of Theorem~\ref{theorem:OBF} hold.  
Condition (i) uses Definition~\ref{def:indep_unambiguous}, and characterizes the features that OBF aims to select.  
Condition (iii) uses Definition~\ref{def:balanced}, and characterizes requirements of the sample.  
Conditions (i)--(iii) impose very mild assumptions on the data generating process, which is not required to satisfy the Gaussian OBF modeling assumptions in Condition (iv).  

Let us now turn to OBFS.  
Denote the true mean and covariance of features in feature set $A$ and class $y$ by $\mu^{A}_y$ and $\Sigma^{A}_y$, respectively (the features need not be Gaussian).
OBFS aims to select the smallest set of features, $\bar{G}$, with different means or covariances between the classes, where $\bar{B} = F \backslash \bar{G}$ has the same means and covariances between the classes and is uncorrelated with $\bar{G}$.  
This is formalized in the following definition.  

\begin{mydef}
	\label{sec:ded}
	$\bar{G}$ is an \emph{unambiguous set of good features} if the following hold:
	\begin{enumerate}
		\item $\mu^{F}_y$ and $\Sigma^{F}_y$ exist for all $y$.  
		\item{Either $\mu^{\bar{G}}_0 \neq \mu^{\bar{G}}_1$ or $\Sigma^{\bar{G}}_0 \neq \Sigma^{\bar{G}}_1$.}
		\item{$\mu^{\bar{B}}_0 = \mu^{\bar{B}}_1$ and $\Sigma^{\bar{B}}_0 = \Sigma^{\bar{B}}_1$, where $\bar{B} = F \backslash \bar{G}$.}
		\item{Each feature in $\bar{G}$ is uncorrelated with each feature in $\bar{B}$ in both classes.}
		\item{Conditions 1-4 do not all hold for any strict subset $G \subset \bar{G}$.}
	\end{enumerate}
\end{mydef}

Assuming all first and second order moments exist, an unambiguous set of good features always exists and is unique.  To prove uniqueness, let $\bar{G}_1$ and $\bar{G}_2$ be arbitrary unambiguous sets of good features, and 
let $\bar{G}_3 = \bar{G}_1 \cap \bar{G}_2$.
By condition 4, $\Sigma_y^F$, has a block diagonal structure for each $y=0,1$ corresponding to the sets of features $\bar{G}_3$, $\bar{G}_1 \backslash \bar{G}_3$, $\bar{G}_2 \backslash \bar{G}_3$, and $F \backslash (\bar{G}_1 \cup \bar{G}_2)$.  Thus, condition 4 holds for $\bar{G}_3$.  
By condition 3, $\mu^{A}_0 = \mu^{A}_1$ and $\Sigma^{A}_0 = \Sigma^{A}_1$ for all of these blocks except $\bar{G}_3$.  Thus, condition 3 holds for $\bar{G}_3$.  
If $\mu^{\bar{G}_1}_0 \neq \mu^{\bar{G}_1}_1$, then (since the means for each class are equal for $\bar{G}_1 \backslash \bar{G}_3$) $\mu^{\bar{G}_3}_0 \neq \mu^{\bar{G}_3}_1$.  Alternatively, if $\Sigma^{\bar{G}_1}_0 \neq \Sigma^{\bar{G}_1}_1$, then (since $\bar{G}_3$ and $\bar{G}_1 \backslash \bar{G}_3$ are uncorrelated for each class, and the covariances between each class are equal for $\bar{G}_1 \backslash \bar{G}_3$) $\Sigma^{\bar{G}_3}_0 \neq \Sigma^{\bar{G}_3}_1$.  In either case, condition 2 holds for $\bar{G}_3$.  
Since $\bar{G}_3 \subseteq \bar{G}_1$ and $\bar{G}_3 \subseteq \bar{G}_2$, by condition 5 we must have $\bar{G}_1 = \bar{G}_2 = \bar{G}_3$.
We denote the unique unambiguous set of good features by $\bar{G}$, and its complement by $\bar{B}$, throughout.  

Similar to the Bayesian model, define a feature partition to be an ordered set of the form $P=(\{G_1, \ldots, G_{u}\}, \{B_1, \ldots, B_{v}\})$, where the set of $G_i$'s and $B_j$'s partition $F$.
We call each $G_i$ a ``good block'' and each $B_j$ a ``bad block'', keeping in mind that these terms are always relative to a specified arbitrary partition. 
Feature partitions with no good blocks or no bad blocks are permitted, with appropriate conventions for unions, sums, and products over the (empty set of) corresponding blocks.  
The following definitions formalize a non-Bayesian analog of the true feature partition.  

\begin{mydef}
	Let $P_1$ and $P_2$ be arbitrary feature partitions.  
	$P_1$ is a \emph{mesh} of $P_2$ if every block in $P_1$ is a subset of a block in $P_2$.
	$P_1$ is a \emph{refinement} of $P_2$ if every good block in $P_1$ is a subset of a good block in $P_2$ and every bad block in $P_1$ is a subset of a bad block in $P_2$.  
	$P_1$ is a \emph{strict refinement} of $P_2$ if it is a refinement and $P_1 \neq P_2$.
\end{mydef}

\begin{mydef}
	$\bar{P}=(\{\bar{G}_1, \ldots, \bar{G}_{\bar{u}}\}, \{\bar{B}_1, \ldots, \bar{B}_{\bar{v}}\})$ is an \emph{unambiguous feature partition} if the following hold:
	\begin{enumerate}
		\item $\bar{G} = \cup_{i=1}^{\bar{u}} \bar{G}_i$ is an unambiguous set of good features.  
		\item Each feature in any arbitrary block is uncorrelated with each feature in any other block in both classes.
		\item Conditions 1 and 2 do not hold for any feature partition $P$ that is a strict refinement of $\bar{P}$.
	\end{enumerate}
\end{mydef}

An unambiguous feature partition always exists and is unique, and we denote it by $\bar{P}$ throughout.  By definition, the unambiguous feature partition $\bar{P}$ induces the unambiguous set of good features $\bar{G}$.  

Our main result is given in the following theorem (Theorem~\ref{sec:thm_conv_1}), which provides sufficient conditions for the (almost sure) convergence of $\pi^*(P)$ to a point mass at $\bar{P}$, thereby guaranteeing the (almost sure) convergence of $\pi^*(G)$ to a point mass at $\bar{G}$.  By Theorem~\ref{theorem:main}, Gaussian OBFS under MNC and Gaussian OBFS under CMNC with ``correct'' $D$ are strongly consistent if the conditions of Theorem~\ref{sec:thm_conv_1} hold, which are very mild.  
Condition (i) is based on Definition~\ref{sec:ded} and essentially guarantees that $\bar{G}$ really represents the type of features OBFS aims to select (those with different means or covariances between the classes).  
Conditions (i) and (ii) ensure certain moments exist, and Condition (iii) ensures that all covariances are full rank.  There are no other distributional requirements. 
Condition (iv) is based on Definition~\ref{def:balanced} and characterizes the sampling strategy; the proportion of points observed in any class must not converge to zero, and sample points should be independent.  
Condition (v) places constraints on the inputs to the OBFS algorithm; we must assign a non-zero probability to the true feature partition.  

Finally, from the proof of Theorem~\ref{sec:thm_conv_1} we have that, under the conditions stated in the theorem, w.p.~$1$ there exist $0<r<1$ and $N>0$ such that
\begin{align}
\frac{\pi^*(P)}{\pi^*(\bar{P})} < r^n
\label{eq:rate2}
\end{align}
for all $n > N$ and all $P \neq \bar{P}$ that label any good features as bad or that put features that are correlated in separate blocks.  Also, w.p.~$1$ there exist $s, N>0$ such that
\begin{align}
\frac{\pi^*(P)}{\pi^*(\bar{P})} < n^{-s}
\label{eq:rate1}
\end{align}
for all $n > N$ and all $P \neq \bar{P}$.  
Fix $f \in \bar{G}$.  By~\eqref{eq:rate2}, for each feature partition $P$ such that $f \notin \cup G_i$, w.p.~$1$ there exist $0 < r_P < 1$ and $N_P > 0$ such that $\pi^*(P)/\pi^*(\bar{P}) < r_P^n$ for all $n > N_P$.  Therefore, w.p.~$1$, 
\begin{equation}
\frac{1 - \pi^*(f)}{\pi^*(\bar{P})}
= \sum_{P : f \notin \cup G_i} \frac{\pi^*(P)}{\pi^*(\bar{P})}
< r^n
\end{equation}
for all $n>N$, where $\max_{P : f \notin \cup G_i} r_P < r < 1$ and $N > \max_{P : f \notin \cup G_i} N_P$.  Thus, w.p.~$1$, 
\begin{equation}
\pi^*(f)
> 1 - \pi^*(\bar{P}) r^n
\geq 1 - r^n
\end{equation}
for all $n>N$. The marginal posterior of good features converges to $1$ at least exponentially w.p.~$1$.
Now fix $f \in \bar{B}$.  
By~\eqref{eq:rate1}, w.p.~$1$ there exist $s, N > 0$ such that
\begin{equation}
\frac{\pi^*(f)}{\pi^*(\bar{P})}
= \sum_{P : f \in \cup G_i} \frac{\pi^*(P)}{\pi^*(\bar{P})}
< n^{-s}
\end{equation}
for all $n>N$.  Thus, w.p.~$1$, 
\begin{equation}
\pi^*(f)
< n^{-s}
\end{equation}
for all $n>N$.  In other words, the marginal posterior of bad features converges to $0$ at least polynomially w.p.~$1$.  
This characterizes rates of convergence of the posterior on feature partitions, and the marginal posterior probabilities on individual features.  

Throughout, ``$f$ and $g$ are asymptotically equivalent'' and ``$f \sim g$ as $n \to \infty$'' mean that $\lim_{n \to \infty} f(n)/g(n) = 1$, $v(i)$ denotes the $i$th element of vector $v$, $M(i,j)$ denotes the $i$th row, $j$th column element of matrix $M$, $0_{a,b}$ denotes the all-zero $a \times b$ matrix, and the sample mean and unbiased sample covariance of features in block $A$ and class $y$ are denoted by $\hat{\mu}_y^A$ and $\hat{\Sigma}_y^A$, respectively.  

\begin{mythm}
	\label{sec:thm_conv_1}
	Suppose the following hold:
	\begin{enumerate}
		\item[(i)]{$\bar{P}$ is the unambiguous feature partition.}
		\item[(ii)]{For all $f \in F$, the fourth order moment across both classes exists and is finite.}
		\item[(iii)]{$\Sigma_0^F$ and $\Sigma_1^F$ are full rank.}
		\item[(iv)]{$S_{\infty}$ is a balanced sample w.p.~$1$.}
		\item[(v)]{$\pi^*(P)$ assumes a fixed Gaussian OBFS model for all $n$ with $\pi(\bar{P}) > 0$.}
	\end{enumerate}
	Then $\lim_{n \to \infty} \pi^*(\bar{P}) = 1$ for $\mathcal{F}_{\infty}$-almost all sequences.  
\end{mythm}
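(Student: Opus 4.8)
The plan is to show $\pi^*(\bar{P}) \to 1$ by controlling, for each alternative partition $P \neq \bar{P}$, the posterior ratio $\pi^*(P)/\pi^*(\bar{P}) = [\pi(P)/\pi(\bar{P})]\,[q(P)/q(\bar{P})]\,[a(P)/a(\bar{P})]$ and proving it tends to $0$ w.p.~$1$. Because $F$ is finite there are only finitely many feature partitions, so it suffices to prove each ratio vanishes on a full-measure event and intersect the finitely many events; then $\pi^*(\bar{P}) = (1 + \sum_{P \neq \bar{P}} \pi^*(P)/\pi^*(\bar{P}))^{-1} \to 1$ on that intersection. The prior ratio is a fixed positive constant by Condition~(v) (which guarantees $\pi(\bar{P}) > 0$), so the whole argument reduces to the asymptotics of $q$ and $a$.

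First I would record the almost-sure limits of all sample statistics. Conditions~(ii) and~(iv) let me invoke the SLLN separately within each class (the balanced condition forces $n_0, n_1 \to \infty$), so w.p.~$1$ every class-conditional sample mean and covariance $\hat{\mu}^A_y, \hat{\Sigma}^A_y$ converges to its population value, and every pooled whole-sample statistic tracks the $\rho_n$-weighted mixture given by the law of total covariance. Condition~(iii) makes the population covariances full rank, so w.p.~$1$ each $C^A_y$ is eventually positive definite with $C^A_y \to \Sigma^A_y$ and determinant bounded away from $0$ and $\infty$. Feeding these limits into the definitions and applying Stirling's formula to the multivariate gamma functions inside each $Q^A_y$, I would establish the expansion $-2\log \pi^*(P) = L_n(P) + d_P \log n + O(1)$ w.p.~$1$, where $L_n(P) = \sum_{\text{good }A,y} n_y \log|\Sigma^A_y| + \sum_{\text{bad }A} n \log|\Sigma^A_{\mathrm{pool}}|$ is the linear-in-$n$ fit term ($\Sigma^A_{\mathrm{pool}}$ denoting the $\rho_n$-weighted pooled covariance), and $d_P$ is proportional to the number of free parameters in $P$. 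This is a BIC-type decomposition, and consistency follows once I show that $\bar{P}$ strictly minimizes the fit term to first order and strictly minimizes the parameter count among the partitions whose fit term is tied with that of $\bar{P}$.

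The core is two strict determinant inequalities governing $L_n(P) - L_n(\bar{P})$. If $P$ separates two features that are correlated, some block's limiting covariance factors over sub-blocks, and Fischer's inequality gives $|\Sigma^{A_1}|\,|\Sigma^{A_2}| > |\Sigma^{A_1 \cup A_2}|$ strictly, so $L_n(P) - L_n(\bar{P})$ gains a strictly positive multiple of $n$. If instead $P$ labels some good feature as bad, that feature's block is modeled by the pooled covariance; writing $\Sigma_{\mathrm{pool}} = \rho\Sigma_0 + (1-\rho)\Sigma_1 + \rho(1-\rho)\delta\delta^T$ with $\delta = \mu_0 - \mu_1$, strict concavity of $\log\det$ together with the rank-one mean bump gives $\log|\Sigma_{\mathrm{pool}}| > \rho\log|\Sigma_0| + (1-\rho)\log|\Sigma_1|$ whenever $\mu^{\bar{G}}_0 \neq \mu^{\bar{G}}_1$ or $\Sigma^{\bar{G}}_0 \neq \Sigma^{\bar{G}}_1$, again forcing a positive linear gap. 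In both cases $-2\log[\pi^*(P)/\pi^*(\bar{P})] \ge cn$ for some $c > 0$, which is the exponential rate~\eqref{eq:rate2}. For every remaining $P \neq \bar{P}$ the fit terms agree asymptotically (uncorrelated blocks give equality in Fischer, identically distributed bad features give equality in the concavity bound), and the minimality built into $\bar{P}$ (the good set $\bar{G}$ is the smallest valid one and its blocks are the coarsest uncorrelated grouping) forces $d_P > d_{\bar{P}}$ strictly, so the $\log n$ term yields the polynomial rate~\eqref{eq:rate1}. Summing the finitely many bounds completes the proof.

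The main obstacle is making these strict inequalities uniform despite the fact that $\rho_n$ need not converge: the balanced condition only confines $\rho_n$ to a compact subinterval of $(0,1)$. I would handle this by viewing the concavity gap as a continuous, strictly positive function of $\rho$ on that compact interval and taking its positive minimum, so the linear penalty is genuinely of order $n$ rather than merely eventually positive. The second delicate point is bookkeeping the $O(1)$ remainder in the Stirling expansion—verifying that the hyperparameter contributions, the $\nu^A_y n_y / \nu^{A*}_y$ rank-one term in $S^{A*}_y$, and the gamma-function corrections stay bounded w.p.~$1$ rather than leaking into the $\log n$ or linear terms—which is what legitimizes the clean three-term decomposition and is presumably where most of the supporting lemmas are spent.
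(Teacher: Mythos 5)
Your global strategy (reduce to posterior ratios, get an exponential rate when the log-determinant ``fit'' terms differ, and a polynomial rate from the parameter-count penalty when they tie) matches the paper's result in spirit, and your exponential-rate ingredients --- strict Fischer/Schur-complement inequality for separated correlated features, strict concavity of $\log\det$ plus the rank-one mean term via the matrix determinant lemma, and compactness of the range of $\rho_n$ --- are essentially the paper's Lemmas 4, 5, 8 and 9. The gap is in the tie case, and it is fatal to the decomposition as you stated it: the expansion $-2\log\pi^*(P) = L_n(P) + d_P\log n + O(1)$ with $L_n(P)$ built from \emph{population} covariances is false. By the law of the iterated logarithm, $|C_y^A(i,j)-\Sigma_y^A(i,j)|$ is genuinely of order $\sqrt{\log\log n/n}$, so the gap between $n_y\log|C_y^A|$ and $n_y\log|\Sigma_y^A|$ is of order $\sqrt{n\log\log n}$ infinitely often --- a remainder that swamps the $d_P\log n$ term you need in order to discriminate $\bar{P}$ from the partitions tied with it in fit (those that relabel a truly bad block as good, or split/merge along uncorrelated boundaries). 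No per-partition expansion around population quantities can deliver a $o(\log n)$ remainder; the $\sqrt{n\log\log n}$ fluctuations must instead be shown to \emph{cancel inside the ratio} $a(P)/a(P')$.

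That cancellation is the real content of the paper's proof and is the idea your plan is missing. It is available only for structurally adjacent partitions: e.g., when a block $A$ is split into truly uncorrelated $A_1, A_2$, one writes $|C^A| = |C^{A_1}|\,|C^{A_2} - W|$ with $W = V^T (C^{A_1})^{-1} V$, where the sample cross-covariance block $V$ is $O(\sqrt{\log\log n/n})$ entrywise and enters only \emph{quadratically}; hence $|C^{A_1}||C^{A_2}|/|C^A| = 1 + O(\log\log n/n)$ and its $n$-th power is only $(\log n)^{O(1)}$ (paper's Lemmas 3 and 6; the relabel-bad case, Lemma 7, needs a separate second-order inequality of the form $\rho x^{1-\rho} + (1-\rho)x^{-\rho} \le 1 + \rho(1-\rho)(x-1)^2$). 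Because this ratio structure exists only when two partitions differ by a single split, merge, or relabel, the paper has to construct a chain of intermediate partitions from $P$ to $\bar{P}$ (its Steps 1--6) and telescope the ratios along it; your argument would need the same device. Note also that your closing self-diagnosis points at the wrong danger: the hyperparameter contributions, the rank-one terms in $S^{A*}_y$, and the Stirling corrections are indeed harmless and are dispatched in the paper's Lemmas 1 and 2; it is the sample-covariance fluctuations entering $a(P)$ linearly in $n$ that break the clean three-term decomposition.
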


\begin{proof}
	Since $\sum_P \pi^*(P) = 1$, it suffices to show that for all feature partitions $P \neq \bar{P}$, 
	\begin{equation}
	\lim_{n\to\infty} \frac{\pi^*(P)}{\pi^*(\bar{P})} 
	= \lim_{n\to\infty} \frac{\pi(P)}{\pi(\bar{P})} \times \frac{q(P)}{q(\bar{P})}	\times \frac{a(P)}{a(\bar{P})}
	= 0 \quad \mbox{w.p.~$1$}.
	\label{eq:hokm_thm_big}
	\end{equation}
	Fix $P \neq \bar{P}$. 
	If $\pi(P)=0$ then~\eqref{eq:hokm_thm_big} holds trivially.  In the remainder of this proof, assume $\pi(P) \neq 0$.  It suffices to show:
	\begin{equation}
	\lim_{n\to\infty} 
	\frac{q(P)}{q(\bar{P})}	\times 
	\frac{a(P)}{a(\bar{P})}
	= 0 \quad \mbox{w.p.~$1$}.
	\label{eq:hokm_thm}
	\end{equation}
	We prove this by constructing a sequence of partitions from $P$ to $\bar{P}$ in six steps.  In Step~1, blocks that are labeled bad in $P$ are split into subsets of bad blocks in $\bar{P}$ and subsets of the unambiguous set of good features $\bar{G}$.  In Steps~2 and~3, blocks that are labeled bad in the previous partition but that are actually subsets of $\bar{G}$ are labeled good (and in Step 3 they are also merged with other good blocks).  In Step~4, blocks that are labeled good in the previous partition are split into subsets of (good and bad) blocks in $\bar{P}$.  In Step~5, blocks that are labeled good in the previous partition but that are actually subsets of bad blocks in $\bar{P}$ are labeled bad.  In Step~6, we merge blocks in the previous partition until we arrive at $\bar{P}$.  An example of this sequence of partitions is illustrated in Fig.~\ref{fig:consistency_optimal_example2}.  Finally, Step~7 uses the sequence of partitions constructed in Steps~1-6 to show~\eqref{eq:hokm_thm}.  
	
	\begin{figure}[t!]
		\centering
		\includegraphics[width=\textwidth]{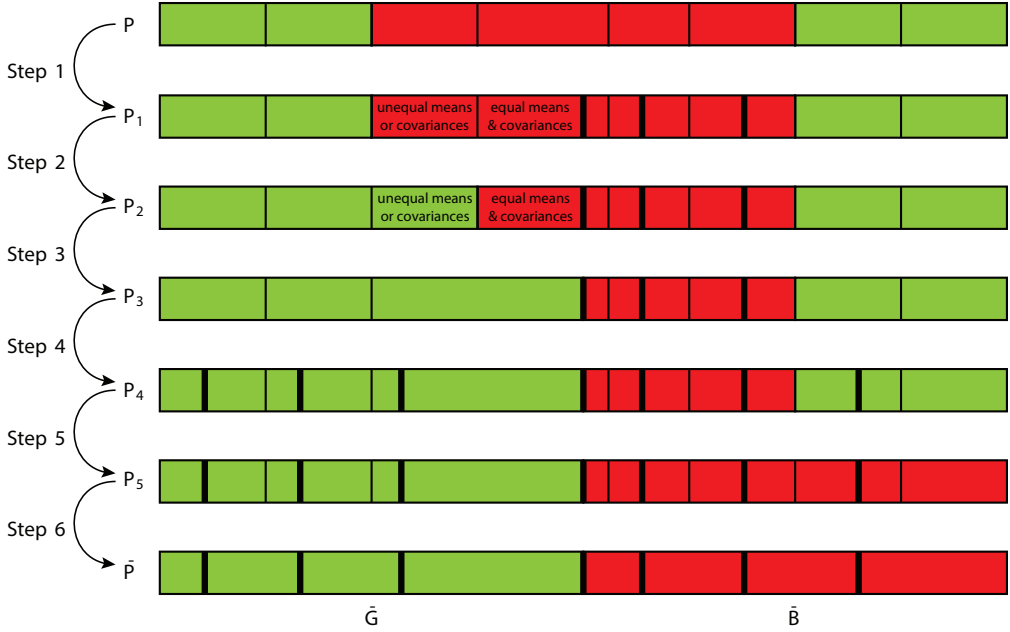}
		\caption{An example of the sequence of partitions constructed in the consistency proof for the OBFS algorithm, starting from the partition $P$ (top) and ending at $\bar{P}$ (bottom).  Green represents good blocks in the current partition (thus $\bar{G}$ is the set of green blocks at the bottom), and red represents bad blocks in the current partition (thus $\bar{B}$ is the set of red blocks at the bottom).}
		\label{fig:consistency_optimal_example2}
	\end{figure}
	
	\noindent\textbf{Step 1.} 
	Let $P_1^1, \ldots, P_1^{K_1}$, $K_1 \geq 2$, be a sequence of partitions where (1) we let $P_1^1 = P$, (2) for each $k = 1, \ldots, K_1-1$, we let $P_1^{k+1}$ be identical to $P_1^k$ except with one bad block partitioned into two bad blocks such that one of the smaller blocks is the intersection of the original block with a bad block in $\bar{P}$, and (3) we iterate until no more blocks can be split.  
	The order we split blocks does not matter; at the end we always obtain the partition $P_1 = P_1^{K_1}$ that is identical to $P$ except with each bad block split into smaller blocks that are each either a subset of a bad block in $\bar{P}$ or a subset of the unambiguous set of good features $\bar{G}$.  
	Suppose $P \neq P_1$.  
	Note $P_1$ is a strict refinement of $P$.
	By Lemma~\ref{lem:ratio_q},
	w.p.~$1$ there exist $s_1,N_1>0$ (which may depend on the sample) such that $q(P)/q(P_1) < n^{-s_1}$ for all $n > N_1$.  
	Furthermore,
	\begin{align}
	\frac{a(P)}{a(P_1)} = \prod_{k=1}^{K_1-1} \frac{a(P_1^k)}{a(P_1^{k+1})}.  
	\label{eq:ratio_a_indep_labeled_dep_bad}
	\end{align}
	For each $k = 1, \ldots, K_1-1$, by Lemma~\ref{lem:ratio_a_indep_labeled_dep_bad}, w.p.~$1$ there exist $c_1^k, N_1^k > 0$ (which may depend on the sample) such that $a(P_1^k)/a(P_1^{k+1}) < (\log n)^{c_1^k}$ for all $n > N_1^k$. 
	By~\eqref{eq:ratio_a_indep_labeled_dep_bad}, w.p.~$1$ there exist $s_1,c_1,N>0$ (namely $s_1$ from above, $c_1 = c_1^1 + \ldots + c_1^{K_1-1}$, and $N = \max(N_1, N_1^1, \ldots, N_1^{K_1-1})$) such that 
	$(q(P)/q(P_1)) \times (a(P)/a(P_1)) < n^{-s_1} (\log n)^{c_1}$
	for all $n > N$.  
	Finally, since $n^{-s_1}$ for $s_1 > 0$ dominates $(\log n)^{c_1}$, $(q(P)/q(P_1)) \times (a(P)/a(P_1)) \to 0$ w.p.~$1$.  
	
	\noindent\textbf{Step 2.} 
	Construct a sequence of partitions starting from $P_1$ where, in each step, one bad block in the current partition that is (a) contained in $\bar{G}$, and (b) has either different means or different covariances between the classes, is labeled good in the next partition, and iterate until no more blocks can be re-labeled.  
	The order we re-label blocks does not matter; we always obtain the same final partition, which we call $P_2$.  
	Suppose $P_1 \neq P_2$.  
	By Lemma~\ref{lem:ratio_q}, w.p.~$1$ there exists $s_2$ such that $q(P_1)/q(P_2) < n^{-s_2}$ for all $n$ large enough.  
	By Lemma~\ref{lem:ratio_a_good_labeled_bad_different}, w.p.~$1$ there exists $0 < r_2 < 1$ such that $a(P_1)/a(P_2) < r_2^n$ for all $n$ large enough.  
	Since $r_2^n$ for $0 < r_2 < 1$ dominates $n^{-s_2}$, $q(P_1)/q(P_2) \times a(P_1)/a(P_2) \to 0$ w.p.~$1$.  
	
	\noindent\textbf{Step 3.} 
	Some bad blocks in $P_2$ may have equal means and covariances between the classes, but may still be contained in $\bar{G}$ because they are correlated with, or connected through a chain of correlation with, features in $\bar{G}$ that have either different means or different covariances between the classes.  Here we construct a sequence of partitions to correct the label of these features.  
	The construction goes as follows: starting from $P_2$, in each step merge one bad block in the current partition that is contained in $\bar{G}$ (which must have the same means and covariances between classes after Step~2) with one good block in the current partition that is correlated in at least one class (because it is correlated it must also be in $\bar{G}$), label the resulting block as good in the next partition, and iterate until no more blocks can be merged.  
	
	First only blocks directly correlated with good blocks in the original partition can be moved, then only blocks directly correlated with good blocks in the original partition or the block just moved can be moved, etc.  All bad blocks in $P_2$ that are actually in $\bar{G}$ will eventually be moved because: (1) all features that have either different means or different variances have already been correctly labeled good in Step 2, 
	and (2) the definition of $\bar{P}$ guarantees that every feature in $\bar{G}$ is connected to at least one feature in $\bar{G}$ with either different means or different variances via at least one chain of pairwise-correlated features that are all in the same good block in $\bar{P}$.    
	Thus, each bad block in the final partition, call it $P_3$, is guaranteed to be a subset of a bad block in $\bar{P}$.  
	
	Suppose $P_2 \neq P_3$.  
	By Lemma~\ref{lem:ratio_q}, w.p.~$1$ there exists $s_3$ such that $q(P_2)/q(P_3) < n^{-s_3}$ for all $n$ large enough.  
	By Lemma~\ref{lem:ratio_a_good_labeled_bad_merge_same}, w.p.~$1$ there exists $0 < r_3 < 1$ such that $a(P_2)/a(P_3) < r_3^n$ for all $n$ large enough.  
	Since $r_3^n$ for $0 < r_3 < 1$ dominates $n^{-s_3}$, $q(P_2)/q(P_3) \times a(P_2)/a(P_3) \to 0$ w.p.~$1$.  
	
	\noindent\textbf{Step 4.} 
	Construct a sequence of partitions starting from $P_3$ where, in each step, one good block in the current partition is split into two good blocks such that one of the smaller blocks is the intersection of the original block with a (good or bad) block in $\bar{P}$, and iterate until no more blocks can be split.  
	The order we split blocks does not matter;
	in the final partition, call it $P_4$, each bad block is a subset of a bad block in $\bar{P}$ and each good block is a subset of a good or bad block in $\bar{P}$.  
	Suppose $P_3 \neq P_4$.  
	Note that $P_4$ is a strict refinement of $P_3$.
	By Lemma~\ref{lem:ratio_q}, w.p.~$1$ there exists $s_4>0$ such that $q(P_3)/q(P_4) < n^{-s_4}$ for all $n$ large enough.  
	By Lemma~\ref{lem:ratio_a_indep_labeled_dep_good}, w.p.~$1$ there exists $c_4 > 0$ such that $a(P_3)/a(P_4) < (\log n)^{c_4}$ for all $n$ large enough.  
	Finally, since $n^{-s_4}$ for $s_4 > 0$ dominates $(\log n)^{c_4}$, $q(P_3)/q(P_4) \times a(P_3)/a(P_4) \to 0$ w.p.~$1$.  
	
	\noindent\textbf{Step 5.} 
	Construct a sequence of partitions starting from $P_4$ where, in each step, one good block in the current partition that is contained in a bad block in $\bar{P}$ is labeled bad in the next partition, and iterate until no more blocks can be re-labeled.  
	The order we re-label blocks does not matter; in the final partition, call it $P_5$, each block is a subset of a block in $\bar{P}$ of the same type.  
	Suppose $P_4 \neq P_5$.  
	Note that $P_5$ is a mesh of $P_4$, every good block in $P_5$ is a good block in $P_4$, and every bad block in $P_5$ is a block (good or bad) in $P_4$.  
	By Lemma~\ref{lem:ratio_q}, w.p.~$1$ there exists $s_5>0$ such that $q(P_4)/q(P_5) < n^{-s_5}$ for all $n$ large enough.  
	By Lemma~\ref{lem:ratio_a_bad_labeled_good}, w.p.~$1$ there exists $c_5 > 0$ such that $a(P_4)/a(P_5) < (\log n)^{c_5}$ for all $n$ large enough.  
	Since $n^{-s_5}$ for $s_5 > 0$ dominates $(\log n)^{c_5}$, $q(P_4)/q(P_5) \times a(P_4)/a(P_5) \to 0$ w.p.~$1$.  
	
	\noindent\textbf{Step 6.}
	Construct a sequence of partitions from $P_5$ to $\bar{P}$ where each partition is a strict refinement of $\bar{P}$ and, in each step, two blocks of the same type (good or bad) in the current partition are merged into a larger block of the same type in the next partition that is contained in a block of the same type in $\bar{P}$.  The order we merge blocks does not matter, as long as the pair of blocks merged in each step are correlated in at least one class.  
	Suppose $P_5 \neq \bar{P}$.  
	By Lemma~\ref{lem:ratio_q}, w.p.~$1$ there exists $s_6$ such that $q(P_5)/q(\bar{P}) < n^{-s_6}$ for all $n$ large enough.  
	By Lemmas~\ref{lem:ratio_a_dep_labeled_indep_good} and~\ref{lem:ratio_a_dep_labeled_indep_bad}, w.p.~$1$ there exists $0 < r_6 < 1$ such that $a(P_5)/a(\bar{P}) < r_6^n$ for all $n$ large enough.  
	Since $r_6^n$ for $0 < r_6 < 1$ dominates $n^{-s_6}$, $q(P_5)/q(\bar{P}) \times a(P_5)/a(\bar{P}) \to 0$ w.p.~$1$.  
	
	\noindent\textbf{Step 7.}
	We have that 
	\begin{equation}
	\frac{q(P)a(P)}{q(\bar{P})a(\bar{P})} 
	= 
	\frac{q(P)a(P)}{q(P_1)a(P_1)}	\cdot 
	\frac{q(P_1)a(P_1)}{q(P_2)a(P_2)}	\cdot 
	\frac{q(P_2)a(P_2)}{q(P_3)a(P_3)}	\cdot 
	\frac{q(P_3)a(P_3)}{q(P_4)a(P_4)}	\cdot 
	\frac{q(P_4)a(P_4)}{q(P_5)a(P_5)}	\cdot 
	\frac{q(P_5)a(P_5)}{q(\bar{P})a(\bar{P})}.  
	\label{eq:hokm_thm_breakdown}
	\end{equation}
	Since $P \neq \bar{P}$, at least one of Steps~1-6 applies.  For the steps that apply, the corresponding ratio in~\eqref{eq:hokm_thm_breakdown} 
	converges to~$0$ w.p.~$1$.  For the steps that do not apply, the corresponding ratio equals~$1$.  Thus, \eqref{eq:hokm_thm} holds.  
\end{proof}

\begin{mylem}
	\label{lem:ratio_q}
	\fussy
	Let $P = ( \{G_1, \ldots, G_u\}, \{B_1, \ldots, B_v\} )$ and $P' = ( \{G_1', \ldots, G_{u'}'\}, \{B_1', \ldots, B_{v'}'\} )$ be arbitrary feature partitions, and let $S_{\infty}$ be a fixed sample such that $\liminf_{n \to \infty} \rho_n >0$ and $\limsup_{n \to \infty} \rho_n <1$.  Then there exists $s \in \mathbb{R}$ and $N > 0$ such that 
	${q(P)}/{q(P')} < n^{-s}$ for all $n > N$.  
	Further, this holds for $s > 0$ if: (i) $P'$ is a strict refinement of $P$, or (ii) $P'$ is a mesh of $P$ where $P \neq P'$, every good block in $P'$ is a subset of a good block in $P$, and every bad block in $P'$ is a subset of a good or bad block in $P$.  
	\sloppy
\end{mylem}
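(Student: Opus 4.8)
The plan is to exploit the fact that $q(P)$ depends on the sample only through the class counts $n_0,n_1$ (together with the fixed hyperparameters and block sizes), and not on the observed feature values at all. Consequently the statement is really a deterministic asymptotic comparison of two explicit functions of $n_0$ and $n_1$, and the only probabilistic input required is $\liminf_{n\to\infty}\rho_n>0$ and $\limsup_{n\to\infty}\rho_n<1$. These guarantee $n_0,n_1\to\infty$ with $\log n_y=\log n+O(1)$ for $y=0,1$, so I would not invoke any ``w.p.~$1$'' machinery here beyond these two hypotheses.

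First I would obtain an asymptotic expansion for the logarithm of a single block factor. For a good block $A$ in class $y$, writing $d=|A|$, $\kappa=\kappa^{A*}_y=\kappa^A_y+n_y$, and $\nu=\nu^{A*}_y=\nu^A_y+n_y$, the factor is $Q^A_y(\kappa-d-1)^{-0.5 d\kappa}$. Expanding $\log\Gamma_d(0.5\kappa)=\tfrac{d(d-1)}4\log\pi+\sum_{k=1}^d\log\Gamma(\tfrac{\kappa-k+1}2)$ by Stirling's formula and combining with the $2^{0.5\kappa d}$, $(2\pi/\nu)^{0.5 d}$, and $(\kappa-d-1)^{-0.5 d\kappa}$ terms, I expect to find
\begin{equation*}
\log\!\left[Q^A_y(\kappa-d-1)^{-0.5 d\kappa}\right]=-\tfrac{d}{2}\,n_y-\tfrac{d(d+3)}{4}\log n_y+O(1),
\end{equation*}
with the analogous expression (replace $n_y$ by $n$) for a bad block. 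The crucial bookkeeping is that the dangerous $O(\kappa\log\kappa)$ contributions cancel exactly: the $\tfrac{d\kappa}2\log\kappa$ arising from $\Gamma_d$ is killed by $-\tfrac{d\kappa}2\log(\kappa-d-1)$, and the $\log 2$ terms cancel against $2^{0.5\kappa d}$, leaving only the linear term $-\tfrac d2 n_y$ and the decisive logarithmic coefficient $-\tfrac{d(d+3)}4$.

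Next I would sum over all blocks. Since the blocks of any partition cover $F$, and each good block contributes to both classes while each bad block contributes the full count $n$, the linear terms always total $-\tfrac n2|F|$, independent of the partition, and hence cancel in $\log(q(P)/q(P'))$. Using $\log n_y=\log n+O(1)$, what remains is
\begin{equation*}
\log\frac{q(P)}{q(P')}=(A_P-A_{P'})\log n+O(1),\qquad A_P=-\sum_{i}\tfrac{|G_i|(|G_i|+3)}2-\sum_{j}\tfrac{|B_j|(|B_j|+3)}4,
\end{equation*}
where the first sum runs over good blocks and the second over bad blocks of $P$. Because the ratio therefore grows at most polynomially, some $s\in\mathbb{R}$ always works, which establishes the first claim; and $q(P)/q(P')<n^{-s}$ for some $s>0$ holds precisely when $A_{P'}>A_P$.

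Finally I would verify $A_{P'}>A_P$ in cases (i) and (ii) by a short combinatorial argument. Setting $g(d)=\tfrac{d(d+3)}2$ and $b(d)=\tfrac{d(d+3)}4=\tfrac12 g(d)$, this amounts to showing that the total ``cost'' $\sum_i g(|G_i|)+\sum_j b(|B_j|)$ strictly decreases from $P$ to $P'$. The key facts are strict superadditivity, $g(a+b)-g(a)-g(b)=ab>0$ and $b(a+b)-b(a)-b(b)=\tfrac12 ab>0$, together with $b(d)<g(d)$. In case (i) every block of $P$ is split into same-type pieces, so superadditivity forces a strict decrease. In case (ii) each bad block of $P$ is refined into bad blocks (superadditivity), while each good block of $P$ is split into good pieces and bad pieces, where replacing part of a good block's cost by the cheaper $b(\cdot)$ and again invoking superadditivity yields a decrease; since $P\neq P'$, at least one block is genuinely modified, giving the strict inequality. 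The main obstacle is the Stirling bookkeeping in the first step, namely ensuring that the $O(n\log n)$ and $O(n)$ terms cancel cleanly so that the $\log n$ coefficient is isolated correctly; once that is done, the combinatorics is routine.
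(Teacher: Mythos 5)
Your proposal is correct and follows essentially the same route as the paper's proof: Stirling's formula applied to each block's $\Gamma_{|A|}$ factor, cancellation of the exponential-in-$n$ terms (which total $-\tfrac{n}{2}|F|$ in the log for every partition), and reduction to comparing polynomial exponents that are quadratic in the block sizes — your $A_{P'}-A_P$ is exactly the paper's exponent $r_2$, and your per-block costs $g(d)=\tfrac{d(d+3)}{2}$ and $b(d)=\tfrac{d(d+3)}{4}$ match the paper's $r_1^P$, $r_2^P$ bookkeeping. The only differences are cosmetic or supplementary: you work on the log scale and carry the $2^{0.5\kappa^{A*}_y|A|}$ factor explicitly (the paper drops it from its expression for $q(P)$, which is harmless since it contributes a constant times $2^{0.5|F|n}$ for every partition and cancels in the ratio), and you spell out the superadditivity argument establishing $A_{P'}>A_P$ in cases (i) and (ii), a step the paper asserts without detail.
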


\begin{proof}
	For any feature partition $P$, we may write	
	\begin{align}
	q(P) 
	&= c^P_1 \prod_{\substack{i=1, \ldots, u, \\ y = 0,1}} 
	\left( \nu^{G_i*}_y \right)^{-0.5|G_i|}
	\Gamma_{|G_i|}(0.5\kappa^{G_i*}_y) 
	(\kappa^{G_i*}_y-|G_i|-1)^{-0.5 |G_i| \kappa^{G_i*}_y} 
	\nonumber \\ 
	&\quad \times
	\prod_{j=1, \ldots, v}
	\left( \nu^{B_j*} \right)^{-0.5|B_j|}
	\Gamma_{|B_j|}(0.5\kappa^{B_j*}) 
	(\kappa^{B_j*}-|B_j|-1)^{-0.5 |B_j| \kappa^{B_j*}}, 
	\label{eq:q_of_P}
	\end{align}
	where $c^P_1$ is a positive constant that does not depend on $n$.  
	We first focus on each block individually.  
	$\nu^{G_i*}_y$ and $\kappa^{G_i*}_y$ are asymptotically equivalent to $n_y$, and $\nu^{B_j*}$ and $\kappa^{B_j*}$ are asymptotically equivalent to $n$.  
	Further, for all positive integers $d$, 
	\begin{equation}
	\Gamma_d(x) = \pi^{d(d-1)/4} \prod_{k=1}^d \Gamma\left( x + (1-k)/2 \right), 
	\label{eq:gammadequiv}
	\end{equation}
	and by Stirling's formula, 
	\begin{equation}
	\Gamma_d(x) \sim \pi^{d(d-1)/4} \prod_{k=1}^d 
	\sqrt{\frac{2\pi}{x + (1-k)/2}}\left(\frac{x + (1-k)/2}{e}\right)^{x + (1-k)/2}
	\label{eq:gammadequiv2}
	\end{equation}
	as $x \to \infty$.  
	Thus, for all $A \subseteq F$, 
	\begin{align}
	&\left( \nu^{A*}_y \right)^{-0.5|A|}
	\Gamma_{|A|}(0.5\kappa^{A*}_y) 
	(\kappa^{A*}_y-|A|-1)^{-0.5 |A| \kappa^{A*}_y} \\
	&\qquad \sim
	c_2^A n_y^{-0.75|A| - 0.25|A|^2} (2e)^{-0.5|A|n_y} 
	\prod_{k = 1}^{|A|} \left(\frac{n_y + \kappa_y^A + 1 - k}{n_y + \kappa_y^A - |A| - 1}\right)^{0.5 n_y}
	\end{align}
	as $n_y \to \infty$, where $c_2^A>0$ does not depend on $n$.  
	For any constants $c_1$ and $c_2$, 
	\begin{equation}
	(x-c_1)^{-c_2 x} \sim e^{c_1 c_2} x^{-c_2 x}
	\label{eq:sim_property}
	\end{equation}
	as $x \to \infty$.  Hence, 
	\begin{align}
	\left( \nu^{A*}_y \right)^{-0.5|A|}
	\Gamma_{|A|}(0.5\kappa^{A*}_y) 
	(\kappa^{A*}_y-|A|-1)^{-0.5 |A| \kappa^{A*}_y} 
	&\sim
	c_3^A n_y^{-0.75|A| - 0.25|A|^2} (2e)^{-0.5|A|n_y}
	\label{eq:good_blocks}
	\end{align}
	as $n_y \to \infty$, where $c_3^A>0$ does not depend on $n$.  
	Similarly, 
	\begin{align}
	\left( \nu^{A*} \right)^{-0.5|A|}
	\Gamma_{|A|}(0.5\kappa^{A*}) 
	(\kappa^{A*}-|A|-1)^{-0.5 |A| \kappa^{A*}} 
	&\sim
	c_4^A n^{-0.75|A| - 0.25|A|^2} (2e)^{-0.5|A|n}
	\label{eq:bad_blocks}
	\end{align}
	as $n \to \infty$, where $c_4^A>0$ does not depend on $n$.  Applying~\eqref{eq:good_blocks} and~\eqref{eq:bad_blocks} in~\eqref{eq:q_of_P}, we have that 
	$q(P) \sim c^P_5 
	(n_0 n_1)^{-r_1^P}
	n^{-r_2^P} 
	(2e)^{-0.5 |F| n}$ as $n \to \infty$, 
	where $c^P_5 > 0$ does not depend on $n$, $r_1^P \equiv 0.75|G| +0.25 \sum_{i=1}^{u} |G_i|^2$, $r_2^P \equiv 0.75|B| + 0.25\sum_{j=1}^{v} |B_j|^2$, and we treat $n_0$ and $n_1$ as functions of $n$.  Applying this in the ratio, we have: 
	\begin{align}
	\frac{q(P)}{q(P')} 
	&\sim
	c (\rho_n (1-\rho_n))^{-r_1} n^{-r_2}
	\end{align}
	as $n \to \infty$, 
	where $c \equiv c^P_3 / c^{P'}_3$, $r_1 \equiv r_1^{P'} - r_1^P$ and $r_2 \equiv 2 r_1^{P'} - 2 r_1^{P} + r_2^{P'} - r_2^{P}$.  
	We always have $\limsup_{n \to \infty} \rho_n (1-\rho_n) \leq 0.25$, and by our assumed bounds on the limit inferior and limit superior of $\rho_n$ we also have $\liminf_{n \to \infty} \rho_n (1-\rho_n) > 0$.  Thus, for all $s < r_2$,
	\begin{align}
	\lim_{n \to \infty} \frac{q(P)/q(P')}{n^{-s}}
	&= \lim_{n \to \infty} \frac{c(\rho_n (1-\rho_n))^{-r_1} n^{-r_2}}{n^{-s}} 
	= 0.
	\label{eq:final_lim}
	\end{align}
	Further, 
	\begin{align}
	r_2
	&= \frac{1}{4}\left(
	3|G| - 3|G'| 
	+ 2\sum_{i=1}^{u} |G_i|^2 
	- 2\sum_{i=1}^{u'} |G_i'|^2 
	+ \sum_{j=1}^{v} |B_j|^2 
	- \sum_{j=1}^{v'} |B_j'|^2
	\right).
	\end{align}
	If $P'$ is a strict refinement of $P$, then $r_2>0$.  
	Also, if $P'$ is a mesh of $P$ where $P \neq P'$, then every good block in $P'$ is a subset of a good block in $P$ and every bad block in $P'$ is a subset of a good or bad block in $P$, thus $r_2 > 0$.  
	If $r_2 > 0$, then \eqref{eq:final_lim} holds for all $0 < s < r_2$.  
\end{proof}

\begin{mylem}
	\label{lem:convergence_rate_of_C}
	Let $A \subseteq F$ be any non-empty feature set such that $\Sigma_0^A$ and $\Sigma_1^A$ are full rank.  
	Suppose for all $f \in F$ the fourth order moment across both classes exists and is finite. Let $S_{\infty}$ be a balanced sample.  Then, w.p.~$1$ there exist $K, N>0$ (which may depend on the sample) such that
	\begin{align}
	|\hat{\mu}_y^A(i) - \mu_y^A(i)| &< K \sqrt{\frac{\log \log n}{n}}, 
	\label{eq:bound_means} \\
	\left| \frac{|C_y^A|}{|\hat{\Sigma}_y^A|} - 1\right| &< \frac{K}{n}, 
	\label{eq:bound_Cy_ratio}\\
	\left| \hat{\Sigma}_y^A(i,j) - \Sigma_y^A(i,j)\right| &< K \sqrt{\frac{\log \log n}{n}},
	\label{eq:bound_covar_hat} \\
	\left| \frac{|\hat{\Sigma}_y^A|}{|\Sigma_y^A|} - 1\right| &< K \sqrt{\frac{\log \log n}{n}}, 
	\label{eq:bound_covar_hat_ratio} \\
	\left| C_y^A(i,j) - \Sigma_y^A(i,j)\right| &< K \sqrt{\frac{\log \log n}{n}},
	\label{eq:CSylog_points} \\
	\left| \frac{|C^A|}{|\hat{\Sigma}^A|} - 1\right| &< \frac{K}{n}, 
	\label{eq:bound_C_ratio} \\
	\left| \frac{|\hat{\Sigma}^A|}{|\Sigma_n^A|} - 1\right| &< K \sqrt{\frac{\log \log n}{n}}, 
	\label{eq:bound_covar_hat_ratio2} \\
	\left| C^A(i,j) - \Sigma^A_n(i,j)\right| &< K \sqrt{\frac{\log \log n}{n}}
	\label{eq:CSlog_points}
	\end{align}
	all hold for all $n > N$, $y=0,1$ and $i,j = 1, \ldots, |A|$, where 
	\begin{align}
	\Sigma_{n}^A 
	&= \rho_n \Sigma_0^A
	+ \left(1-\rho_n\right) \Sigma_1^A 
	+ \rho_n (1-\rho_n)
	(\mu_0^A - \mu_1^A)(\mu_0^A - \mu_1^A)^T.  
	\label{eq:Sigma_limit_mixed_class}	
	\end{align}
\end{mylem}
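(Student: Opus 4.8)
The plan is to reduce everything to the law of the iterated logarithm (LIL) applied coordinatewise within each class, and then to propagate the resulting $\sqrt{(\log\log n)/n}$ fluctuation bounds through the algebraic definitions of $C_y^A$, $C^A$, and the relevant determinants. Since $S_\infty$ is balanced, conditioned on the labels the points in class $y$ are i.i.d., and the hypotheses $\liminf \rho_n > 0$, $\limsup \rho_n < 1$ force $n_0, n_1 \to \infty$ with $n_y \asymp n$. First I would apply the LIL to each coordinate mean and to each pairwise product $X_iX_j$; finiteness of the fourth moment (Condition (ii)) guarantees $\mathrm{Var}(X_iX_j) \le (E[X_i^4]E[X_j^4])^{1/2} < \infty$ via Cauchy--Schwarz, which is exactly what the LIL requires. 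This gives~\eqref{eq:bound_means} for $\hat\mu_y^A$, and, after expanding $\hat\Sigma_y^A(i,j) = \frac{1}{n_y-1}(\sum_k x_{ki}x_{kj} - n_y\hat\mu_i\hat\mu_j)$ and combining the LIL bounds on $\frac{1}{n_y}\sum_k x_{ki}x_{kj}$ and on $\hat\mu_i\hat\mu_j$ (with the harmless factor $n_y/(n_y-1) = 1 + O(1/n)$), yields~\eqref{eq:bound_covar_hat}. Converting $\sqrt{(\log\log n_y)/n_y}$ to $\sqrt{(\log\log n)/n}$ costs only a constant because $n_y \asymp n$. There are finitely many scalar LIL statements, so intersecting the corresponding probability-one events and taking $K$ to be the largest constant and $N$ the largest threshold keeps everything on a single event of probability one.

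Next I would obtain the determinant-ratio bounds~\eqref{eq:bound_covar_hat_ratio} and~\eqref{eq:bound_covar_hat_ratio2} from the entrywise bounds by exploiting that the determinant is a polynomial, hence Lipschitz on any compact set. Because $\Sigma_y^A$ is full rank (Condition (iii)), $|\Sigma_y^A| > 0$, and for $n$ large the entries of $\hat\Sigma_y^A$ lie in a fixed compact neighbourhood of $\Sigma_y^A$; thus $||\hat\Sigma_y^A| - |\Sigma_y^A|| \le L\max_{i,j}|\hat\Sigma_y^A(i,j) - \Sigma_y^A(i,j)|$, and dividing by $|\Sigma_y^A|$ gives~\eqref{eq:bound_covar_hat_ratio}.

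For the $C$-quantities I would use the explicit formula $C_y^A = S_y^{A*}/(\kappa_y^{A*} - |A| - 1)$ with $S_y^{A*} = S_y^A + (n_y-1)\hat\Sigma_y^A + \frac{\nu_y^A n_y}{\nu_y^{A*}}(\hat\mu_y^A - m_y^A)(\hat\mu_y^A - m_y^A)^T$ and $\kappa_y^{A*} = \kappa_y^A + n_y$. Writing $C_y^A = \frac{n_y-1}{n_y + \kappa_y^A - |A| - 1}\hat\Sigma_y^A + \frac{1}{n_y + \kappa_y^A - |A| - 1}(S_y^A + \text{rank-one})$ and noting that the leading coefficient equals $1 + O(1/n)$ while the remaining matrix is $O(1)$ entrywise (the deviation $\hat\mu_y^A - m_y^A$ is bounded w.p.~$1$), I get $|C_y^A(i,j) - \hat\Sigma_y^A(i,j)| = O(1/n)$ entrywise. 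Feeding this into the same determinant-Lipschitz argument gives~\eqref{eq:bound_Cy_ratio}, and combining the $O(1/n)$ bound with~\eqref{eq:bound_covar_hat} (since $1/n$ is dominated by $\sqrt{(\log\log n)/n}$) gives~\eqref{eq:CSylog_points}.

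Finally, the whole-sample (bad-block) bounds~\eqref{eq:bound_C_ratio}, \eqref{eq:bound_covar_hat_ratio2} and~\eqref{eq:CSlog_points} require the law of total covariance, which I expect to be the main obstacle. The exact ANOVA identity gives $\hat\Sigma^A = \frac{n_0-1}{n-1}\hat\Sigma_0^A + \frac{n_1-1}{n-1}\hat\Sigma_1^A + \frac{n}{n-1}\rho_n(1-\rho_n)(\hat\mu_0^A - \hat\mu_1^A)(\hat\mu_0^A - \hat\mu_1^A)^T$; substituting the class-wise bounds~\eqref{eq:bound_means} and~\eqref{eq:bound_covar_hat}, absorbing the $n/(n-1)$-type factors as $O(1/n)$, and using that $v \mapsto vv^T$ is locally Lipschitz, this matches $\Sigma_n^A$ up to $O(\sqrt{(\log\log n)/n})$, which is~\eqref{eq:CSlog_points}. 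The delicate point is that the balanced-sample hypothesis does not force $\rho_n$ to converge, so $\Sigma_n^A$ is a moving target; I would verify that it remains uniformly nonsingular by noting $\Sigma_n^A \succeq \rho_n\Sigma_0^A + (1-\rho_n)\Sigma_1^A$ and that $\rho_n$ is eventually confined to a compact subinterval of $(0,1)$, so its smallest eigenvalue, and hence $|\Sigma_n^A|$, is bounded away from $0$ uniformly in $n$. This uniform nonsingularity is exactly what lets the determinant-Lipschitz step deliver~\eqref{eq:bound_covar_hat_ratio2}, and the $C^A$ versus $\hat\Sigma^A$ comparison~\eqref{eq:bound_C_ratio} then follows verbatim from the good-block argument with all subscripts $y$ removed.
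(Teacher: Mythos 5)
Your proposal is correct and follows essentially the same route as the paper's proof: the law of the iterated logarithm applied to class-conditional means and second-order products (with the fourth-moment hypothesis supplying the finite variances the LIL needs), propagation of entrywise bounds through determinants via their polynomial structure, the explicit expansion of $C_y^A$ and $C^A$ as $\hat{\Sigma}$-plus-$O(1/n)$ perturbations, and the pooled-covariance ANOVA identity compared against the moving target $\Sigma_n^A$. The one place you are actually tighter than the paper is the uniform positive lower bound on $|\Sigma_n^A|$: the paper only bounds the entries of $\Sigma_n^A$ and then asserts the determinant is bounded, whereas your observation that $\Sigma_n^A \succeq \rho_n \Sigma_0^A + (1-\rho_n)\Sigma_1^A$, so that its smallest eigenvalue stays bounded away from zero, is the cleaner justification for dividing by $|\Sigma_n^A|$ in~\eqref{eq:bound_covar_hat_ratio2}.
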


\begin{proof}
	Fix $A$.  Since $S_{\infty}$ is a balanced sample, $n_y \to \infty$ as $n \to \infty$, and by the strong law of large numbers $\hat{\mu}_y^A$ converges to $\mu_y^A$ and $\hat{\Sigma}_y^A$ converges to $\Sigma_y^A$ w.p.~$1$ for both $y=0, 1$.  In the rest of this proof, we only consider the event where $\hat{\mu}_y^A$ and $\hat{\Sigma}_y^A$ converge.  
	
	Since $S_{\infty}$ is a balanced sample, there exist $0 < p_0, p_1 < 1$  and $N_1>0$ such that $n_y/n > p_y$ for all $n > N_1$ and $y = 0,1$.  
	By the law of the iterated logarithm~\citep{hartman1941law}, w.p.~$1$ there exist $K_1>0$ and $N_2>N_1$ such that
	\begin{align}
	|\hat{\mu}_y^A(i) - \mu_y^A(i)|
	&< K_1\sqrt{\frac{\log \log n_y}{n_y}} \notag \\
	&< \frac{K_1}{\sqrt{p_y}} \sqrt{\frac{\log \log n}{n}}
	\end{align}
	for all $n>N_2$ and $i = 1, \ldots, |A|$~\citep{igib1}.  This proves~\eqref{eq:bound_means}.  
	
	We can decompose the sample covariance for class $y$ as follows:
	\begin{align}
	&\hat{\Sigma}_y^A(i,j) 
	= \frac{1}{n_y-1} \sum_{k = 1}^{n_y} (x_{y,k}^i - \hat{\mu}_y^A(i))(x_{y,k}^j - \hat{\mu}_y^A(j)) \notag \\
	&= \frac{1}{n_y-1} \sum_{k = 1}^{n_y} (x_{y,k}^i - \mu_y^A(i))(x_{y,k}^j - \mu_y^A(j)) 
	- \frac{n_y}{n_y-1} (\hat{\mu}_y^A(i) - \mu_y^A(i))(\hat{\mu}_y^A(j) - \mu_y^A(j))
	\label{eq:covar_bound_decompose}
	\end{align}
	where $x_{y,k}^i$ is the $k$th sample of feature $i$ in class $y$.  
	By~\eqref{eq:bound_means}, w.p.~$1$ there exists $K_2>0$ and $N_3>N_2$ such that 
	\begin{align}
	\left|\frac{n_y}{n_y-1} (\hat{\mu}_y^A(i) - \mu_y^A(i))(\hat{\mu}_y^A(j) - \mu_y^A(j))\right|
	&< K_2 \frac{\log \log n}{n}
	\label{eq:covar_bound_partial}
	\end{align}
	for all $n>N_3$ and $i,j = 1, \ldots, |A|$.  
	$(x_k^i - \mu_y^A(i))(x_k^j - \mu_y^A(j))$ is a random variable with mean $\Sigma_y^A(i,j)$ and finite variance $V_y(i,j)$.  
	Suppose $V_y(i,j) > 0$.  
	By the law of the iterated logarithm, w.p.~$1$ 
	\begin{align}
	\limsup_{n_y \to \infty} \frac{1}{\sqrt{2 n_y \log \log n_y}} \sum_{k = 1}^{n_y} \frac{(x_{y,k}^i - \mu_y^A(i))(x_{y,k}^j - \mu_y^A(j)) - \Sigma_y^A(i,j)}{\sqrt{V_y(i,j)}} = 1
	\end{align}
	for all $i,j = 1, \ldots, |A|$.  Thus, w.p.~$1$ there exists $K_3>0$ and $N_4>N_3$ such that
	\begin{align}
	&\left|\frac{1}{n_y-1}\sum_{k = 1}^{n_y} (x_{y,k}^i - \mu_y^A(i))(x_{y,k}^j - \mu_y^A(j)) 
	- \Sigma_y^A(i,j) \right| \notag \\
	&\qquad\leq \left|\frac{1}{n_y-1}\sum_{k = 1}^{n_y} (x_{y,k}^i - \mu_y^A(i))(x_{y,k}^j - \mu_y^A(j)) 
	- \frac{n_y}{n_y-1} \Sigma_y^A(i,j)\right| + \frac{1}{n_y-1} \Sigma_y^A(i,j) \notag \\
	&\qquad <  \frac{2 \sqrt{V_y(i,j) n_y \log \log n_y}}{n_y-1} 
	+ \frac{1}{n_y-1} \Sigma_y^A(i,j) \notag \\
	&\qquad < K_3 \sqrt{\frac{\log \log n}{n}}
	\label{eq:covar_bound_sum}
	\end{align}
	for all $n>N_4$ and $i,j = 1, \ldots, |A|$. A similar inequality holds when $V_y(i,j) = 0$. Combining~\eqref{eq:covar_bound_decompose}, \eqref{eq:covar_bound_partial} and~\eqref{eq:covar_bound_sum}, w.p.~$1$ there exists $K_4>0$ such that
	\begin{align}
	|\hat{\Sigma}_y^A(i,j) - \Sigma_y^A(i,j)|
	&< K_3 \sqrt{\frac{\log \log n}{n}} + K_2 \frac{\log \log n}{n} \notag \\
	&< K_4 \sqrt{\frac{\log \log n}{n}}
	\end{align}
	for all $n>N_4$ and $i,j = 1, \ldots, |A|$.  Thus~\eqref{eq:bound_covar_hat} holds.  
	Since $|\hat{\Sigma}_y^A|$ is a polynomial function of the $\hat{\Sigma}_y^A(i,j)$, where each term has degree $|A|$ and coefficient $\pm1$, w.p.~$1$ there exists $K_5>0$ such that 
	\begin{align}
	||\hat{\Sigma}_y^A| - |\Sigma_y^A|| < K_5 \sqrt{\frac{\log\log n}{n}}
	\label{eq:hat_Sigmay_det}
	\end{align}
	for all $n>N_4$.  Dividing both sides by $|\Sigma_y^A|$ proves~\eqref{eq:bound_covar_hat_ratio}.  
	
	Note that, 
	\begin{align}
	C^A_y&=
	\frac{n_y-1}{n_y+\kappa^{A}_y-|A|-1} \hat{\Sigma}^{A}_y
	+ \frac{1}{n_y+\kappa^{A}_y-|A|-1} S^{A}_y \notag \\
	&\quad + \frac{\nu^{A}_y n_y}{(n_y+\kappa^{A}_y-|A|-1)(n_y + \nu^{A}_y)} (\hat{\mu}^{A}_y-m^{A}_y)(\hat{\mu}^{A}_y-m^{A}_y)^T.
	\label{eq:C_class_y}
	\end{align}
	Further~\eqref{eq:bound_means} implies that w.p.~$1$ $\hat{\mu}_y^A(i) - m_y^A(i)$ is bounded for all $n>N_2$.  Similarly, \eqref{eq:bound_covar_hat} implies that w.p.~$1$ $\hat{\Sigma}_y^A(i,j)$ is bounded for all $n>N_3$.  
	Thus, from~\eqref{eq:C_class_y}, w.p.~$1$ there exists $K_6 > 0$ and $N_5>N_4$ such that
	\begin{align}
	|C_y^A(i,j) - \hat{\Sigma}_y^A(i,j)|
	&\leq \frac{K_6}{n}
	\label{eq:bound_Cy}
	\end{align}
	for all $n > N_5$ and $i,j = 1, \ldots |A|$.  
	Again noting that the determinant is a polynomial, w.p.~$1$ there exists $K_7 > 0$ such that 
	\begin{align}
	||C_y^A| - |\hat{\Sigma}_y^A|| < \frac{K_7}{n}
	\end{align}
	for all $n > N_5$.  By~\eqref{eq:hat_Sigmay_det}, $|\hat{\Sigma}_y^A|$ is bounded for all $n>N_4$ w.p.~$1$.  Thus, \eqref{eq:bound_Cy_ratio} holds.  
	Applying the triangle inequality on $C_y^A(i,j) - \Sigma_y^A(i,j)$ and applying~\eqref{eq:bound_covar_hat} and~\eqref{eq:bound_Cy}, we have that~\eqref{eq:CSylog_points} also holds.  
	
	The sample covariance across all samples in both classes can be decomposed as
	\begin{align}
	\hat{\Sigma}^A 
	&= \left(\rho_n - \frac{1-\rho_n}{n-1}\right) \hat{\Sigma}_0^A
	+ \left(1-\rho_n - \frac{\rho_n}{n-1}\right) \hat{\Sigma}_1^A \notag \\
	&\quad + \frac{\rho_n (1-\rho_n) n}{n-1} 
	(\hat{\mu}_0^A - \hat{\mu}_1^A)(\hat{\mu}_0^A - \hat{\mu}_1^A)^T.  
	\label{eq:Sigma_mixed_class}	
	\end{align}
	Define
	\begin{align}
	\hat{\Sigma}_n^A 
	&= \rho_n \hat{\Sigma}_0^A
	+ (1-\rho_n) \hat{\Sigma}_1^A 
	+ \rho_n (1-\rho_n) (\hat{\mu}_0^A - \hat{\mu}_1^A)(\hat{\mu}_0^A - \hat{\mu}_1^A)^T.  
	\label{eq:Sigma_n_mixed_class}	
	\end{align}
	Again since w.p.~$1$ $\hat{\mu}_y^A(i)$ and $\hat{\Sigma}_y^A(i,j)$ are bounded for all $n>N_3$, by the triangle inequality w.p.~$1$ there exists $K_8 > 0$ such that
	\begin{align}
	|\hat{\Sigma}^A(i,j)-\hat{\Sigma}^A_n(i,j)|
	&\leq \frac{1-\rho_n}{n-1} \hat{\Sigma}_0^A(i,j)
	+ \frac{\rho_n}{n-1} \hat{\Sigma}_1^A(i,j) \notag \\
	&\quad + \frac{\rho_n (1-\rho_n)}{n-1} 
	|\hat{\mu}_0^A(i) - \hat{\mu}_1^A(i)| 
	|\hat{\mu}_0^A(j) - \hat{\mu}_1^A(j)| \notag \\
	&<\frac{K_8}{n}
	\label{eq:hShSn}
	\end{align}
	for all $n>N_3$ and $i,j =1,\ldots, |A|$.  
	Further, by the triangle inequality
	\begin{align}
	|\hat{\Sigma}^A_n(i,j)-\Sigma^A_n(i,j)|
	&\leq \rho_n |\hat{\Sigma}_0^A(i,j) - \Sigma_0^A(i,j)|
	+ (1-\rho_n) |\hat{\Sigma}_1^A(i,j) - \Sigma_1^A(i,j)| \notag \\
	&\quad + \rho_n (1-\rho_n) d(i,j),
	\label{eq:hatSigman_Sigman}
	\end{align}
	for all $n> N_3$ and $i,j =1,\ldots, |A|$ w.p.~$1$, where 
	\begin{align}
	d(i,j) &= \left| (\hat{\mu}_0^A(i) - \hat{\mu}_1^A(i))
	(\hat{\mu}_0^A(j) - \hat{\mu}_1^A(j))
	- (\mu_0^A(i) - \mu_1^A(i))
	(\mu_0^A(j) - \mu_1^A(j)) \right| \notag \\
	&= \left| (d_0(i) + d_{01}(i) + d_1(i))
	(d_0(j) + d_{01}(j) + d_1(j))
	- d_{01}(i)d_{01}(j) \right| \notag \\
	&\leq 
	|d_0(i)d_0(j)|
	+|d_0(i)d_{01}(j)|
	+|d_0(i)d_1(j)|
	+|d_{01}(i)d_0(j)|
	+|d_{01}(i)d_1(j)| \notag \\
	&\quad +|d_1(i)d_0(j)|
	+|d_1(i)d_{01}(j)|
	+|d_1(i)d_1(j) |,
	\end{align}
	$d_0 = \hat{\mu}_0^A - \mu_0^A$, $d_{01} = \mu_0^A - \mu_1^A$ and $d_1 = \mu_1^A - \hat{\mu}_1^A$.  
	$d_{01}(i)$ is constant for all $i=1, \ldots, |A|$.  By~\eqref{eq:bound_means}, w.p.~$1$ there exists $K_9, N_6>N_5$ such that $|d_y(i)| < K_9 \sqrt{\log\log n / n}$ for all $n>N_6$, $y=0,1$ and $i = 1, \ldots, |A|$. Thus, w.p.~$1$ there exists $K_{10}>0$ such that $d(i,j) < K_{10} \sqrt{\log\log n / n}$ for all $n>N_6$ and $i,j = 1, \ldots, |A|$.  Applying this fact and~\eqref{eq:bound_covar_hat} to~\eqref{eq:hShSn}, w.p.~$1$ there exists $K_{11} > 0$ and $N_7>N_6$ such that 
	\begin{align}
	|\hat{\Sigma}_n^A(i,j) - \Sigma_n^A(i,j)| < K_{11} \sqrt{\frac{\log\log n}{n}}
	\end{align}
	for all $n>N_7$ and $i,j =1,\ldots, |A|$.  Combining this fact with~\eqref{eq:hatSigman_Sigman}, w.p.~$1$ there exists $K_{12}>0$ such that 
	\begin{align}
	|\hat{\Sigma}^A(i,j) - \Sigma_n^A(i,j)| < K_{12} \sqrt{\frac{\log\log n}{n}}
	\label{eq:hatSigma_bound}
	\end{align}
	for all $n>N_7$ and $i,j =1,\ldots, |A|$.  
	Again since the determinant is a polynomial, w.p.~$1$ there exists $K_{13}$ such that 
	\begin{align}
	||\hat{\Sigma}^A| - |\Sigma_n^A|| < K_{13} \sqrt{\frac{\log\log n}{n}}
	\label{eq:hatSigma_det}
	\end{align}
	for all $n>N_7$.  
	Observe from~\eqref{eq:Sigma_limit_mixed_class} that $\Sigma_{n}^A(i,j)$ is lower bounded by 
	\begin{equation}
	\min(\Sigma_0^A(i,j), \Sigma_1^A(i,j)) + \min(0, 0.25 (\mu_0^A(i) - \mu_1^A(i))(\mu_0^A(j) - \mu_1^A(j))),
	\end{equation}
	and upper bounded by 
	\begin{equation}
	\max(\Sigma_0^A(i,j), \Sigma_1^A(i,j)) + \max(0, 0.25 (\mu_0^A(i) - \mu_1^A(i))(\mu_0^A(j) - \mu_1^A(j))).
	\end{equation}
	Since $|\Sigma_n^A|$ is a polynomial function of the $\Sigma_n^A(i,j)$, where each term has degree $|A|$ and coefficient $\pm1$, $|\Sigma_n^A|$ must also be upper and lower bounded when $n$ is large enough.  
	\eqref{eq:bound_covar_hat_ratio2} follows by dividing both sides of~\eqref{eq:hatSigma_det} by $|\Sigma_n^A|$, and applying a bound on $|\Sigma_n^A|$ on the right hand side.  
	
	Observe that
	\begin{align}
	C^A&=
	\frac{n-1}{n+\kappa^{A}-|A|-1} \hat{\Sigma}^{A}
	+ \frac{1}{n+\kappa^{A}-|A|-1} S^{A} \notag \\
	&\quad + \frac{\nu^{A} n}{(n+\kappa^{A}-|A|-1)(n + \nu^{A})} (\hat{\mu}^{A}-m^{A})(\hat{\mu}^{A}-m^{A})^T.
	\label{eq:C_mixed_class}
	\end{align}
	Since $\hat{\mu}^{A} = \rho_n \hat{\mu}_0^{A} + (1-\rho_n) \hat{\mu}_1^{A}$, \eqref{eq:bound_means} implies that w.p.~$1$ $\hat{\mu}^A(i) - m^A(i)$ is bounded for all $n>N_2$.  
	Similarly, \eqref{eq:hatSigma_bound} and the fact that $\Sigma_n^A(i,j)$ is bounded for $n$ large enough implies that w.p.~$1$ there exists $N_8>N_7$ such that $\hat{\Sigma}^A(i,j)$ is bounded for all $n>N_8$.  
	Thus, from~\eqref{eq:C_mixed_class}, w.p.~$1$ there exists $K_{14} > 0$ and $N_9>N_8$ such that
	\begin{align}
	|C^A(i,j) - \hat{\Sigma}^A(i,j)|
	&\leq \frac{K_{14}}{n}
	\label{eq:bound_C}
	\end{align}
	for all $n > N_9$ and $i,j = 1, \ldots |A|$.  
	Again since the determinant is a polynomial, w.p.~$1$ there exists $K_{15} > 0$ such that 
	\begin{align}
	||C^A| - |\hat{\Sigma}^A|| < \frac{K_{15}}{n}
	\end{align}
	for all $n > N_9$.  By~\eqref{eq:hatSigma_det} and the fact that $|\Sigma_n^A|$ is bounded for $n$ large enough, w.p.~$1$ there exists $N_{10}>N_9$ such that $|\hat{\Sigma}_y^A|$ is bounded for all $n>N_{10}$.  Thus, \eqref{eq:bound_C_ratio} holds.  
	Applying the triangle inequality on $C^A(i,j) - \Sigma_n^A(i,j)$ and applying~\eqref{eq:hatSigma_bound} and~\eqref{eq:bound_C}, we have that~\eqref{eq:CSlog_points} also holds.  
\end{proof}

\begin{mycor}
	\label{lem:convergence_of_C}
	Let $A \subseteq F$ be any non-empty feature set such that $\Sigma_0^A$ and $\Sigma_1^A$ are full rank.  
	Suppose for all $f \in F$ the fourth order moment across both classes exists and is finite. Let $S_{\infty}$ be a balanced sample.  
	Then, w.p.~$1$ the following all hold for all $y=0,1$ and $i,j=1, \ldots, |A|$, where $\Sigma_n^A$ is given in~\eqref{eq:Sigma_limit_mixed_class}.  
	\begin{enumerate}
		\item $C_y^A(i,j) \to \Sigma_y^A(i,j)$ as $n \to \infty$.
		\item $|C_y^A| \to |\Sigma_y^A|$ as $n \to \infty$.
		\item If $\Sigma^A \equiv \Sigma_0^A = \Sigma_1^A$ and $\mu_0^A = \mu_1^A$, then $C^A(i,j) \to \Sigma^A(i,j)$ as $n \to \infty$.
		\item If $\Sigma^A \equiv \Sigma_0^A = \Sigma_1^A$ and $\mu_0^A = \mu_1^A$, then $|C^A| \to |\Sigma^A|$ as $n \to \infty$.
		\item $C^A(i,j) - \Sigma_n^A(i,j) \to 0$ as $n \to \infty$.
		\item $|C^A| \sim |\Sigma_n^A|$ as $n \to \infty$.
		\item There exist $L,U \in \mathbb{R}$ and $N > 0$ such that $L \leq C^A(i,j) \leq U$ for all $n>N$.
		\item There exist $L,U, N > 0$ such that $L \leq |C^A| \leq U$ for all $n>N$.
	\end{enumerate} 
\end{mycor}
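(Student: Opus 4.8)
The plan is to obtain every one of the eight claims by passing to the limit in the rate bounds supplied by Lemma~\ref{lem:convergence_rate_of_C}, since each bound there vanishes as $n \to \infty$ (both $\sqrt{\log\log n / n} \to 0$ and $1/n \to 0$). Throughout I would work on the probability-one event on which all eight displays of that lemma hold for all $n > N$, so that establishing deterministic limits on this event proves the w.p.~$1$ statements.

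For the class-conditional quantities, statement~1 is immediate from \eqref{eq:CSylog_points}. For statement~2, multiplying \eqref{eq:bound_Cy_ratio} and \eqref{eq:bound_covar_hat_ratio} gives $|C_y^A|/|\Sigma_y^A| \to 1$; since $|\Sigma_y^A| > 0$ by the full-rank hypothesis, this yields $|C_y^A| \to |\Sigma_y^A|$. For the mixed-class quantities, statement~5 is again immediate from \eqref{eq:CSlog_points}, and statement~6 follows by multiplying \eqref{eq:bound_C_ratio} and \eqref{eq:bound_covar_hat_ratio2} to get $|C^A|/|\Sigma_n^A| \to 1$, provided $|\Sigma_n^A|$ is bounded away from zero (addressed below). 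Statements~3 and~4 are then corollaries of~5 and~6: when $\Sigma_0^A = \Sigma_1^A =: \Sigma^A$ and $\mu_0^A = \mu_1^A$, the defining formula~\eqref{eq:Sigma_limit_mixed_class} collapses to $\Sigma_n^A = \Sigma^A$ for every $n$, so the limits in~5 and~6 become the asserted constants $\Sigma^A(i,j)$ and $|\Sigma^A|$.

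The one point requiring genuine care---and the main obstacle---is establishing that $\Sigma_n^A$ has entries bounded above and below and determinant bounded above and, crucially, bounded away from zero, uniformly in large $n$; this underlies statements~6, 7, and~8. Since $S_\infty$ is balanced, $\rho_n$ is eventually confined to a compact subinterval of $(0,1)$, so $\rho_n \Sigma_0^A + (1-\rho_n)\Sigma_1^A$ is a convex combination of two fixed positive-definite matrices while the added rank-one term $\rho_n(1-\rho_n)(\mu_0^A-\mu_1^A)(\mu_0^A-\mu_1^A)^T$ is positive semidefinite. Hence $\Sigma_n^A$ remains positive definite with smallest eigenvalue bounded below by a positive constant, giving a strictly positive lower bound on $|\Sigma_n^A|$, while its entries (and thus $|\Sigma_n^A|$) are bounded above. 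This is precisely the two-sided bound on $\Sigma_n^A(i,j)$ and $|\Sigma_n^A|$ already derived in the proof of Lemma~\ref{lem:convergence_rate_of_C}, so I would simply invoke it.

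With this boundedness in hand, the remaining claims close quickly: statement~7 follows from statement~5 together with the boundedness of $\Sigma_n^A(i,j)$, since $C^A(i,j) = \Sigma_n^A(i,j) + o(1)$ is then eventually confined to a bounded interval; and statement~8 follows from statement~6 together with the two-sided bound on $|\Sigma_n^A|$, since $|C^A| = |\Sigma_n^A|(1+o(1))$ inherits a positive lower bound and a finite upper bound. I expect no serious difficulty beyond carefully bookkeeping the full-measure event and invoking the positive-definiteness argument for the lower bound on $|\Sigma_n^A|$.
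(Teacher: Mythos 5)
Your proposal is correct and follows essentially the same route as the paper's proof: items 1, 2, 5, and 6 are read off directly from the rate bounds \eqref{eq:CSylog_points}, \eqref{eq:bound_Cy_ratio}--\eqref{eq:bound_covar_hat_ratio}, \eqref{eq:CSlog_points}, and \eqref{eq:bound_C_ratio}--\eqref{eq:bound_covar_hat_ratio2} of Lemma~\ref{lem:convergence_rate_of_C}; items 3 and 4 are the special cases where $\Sigma_n^A = \Sigma^A$ is constant; and items 7 and 8 follow from the boundedness of $\Sigma_n^A(i,j)$ and $|\Sigma_n^A|$ established in that lemma's proof. Your eigenvalue argument (convex combination of positive-definite matrices plus a positive-semidefinite rank-one term, with $\rho_n$ eventually in a compact subinterval of $(0,1)$) is in fact a cleaner justification of the strictly positive lower bound on $|\Sigma_n^A|$ than the paper's entrywise min/max bounds, but it plays the same role in the same overall argument.
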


\begin{proof}
	(1) follows from~\eqref{eq:CSylog_points}.
	(2) follows by combining~\eqref{eq:bound_Cy_ratio} and~\eqref{eq:bound_covar_hat_ratio}.  
	(5) follows from~\eqref{eq:CSlog_points}.
	(6) follows by combining~\eqref{eq:bound_C_ratio} and~\eqref{eq:bound_covar_hat_ratio2}.  
	(3) and (4) are special cases of (5) and (6), where in this case $\Sigma_n^A = \Sigma^A$ is constant.  
	Recall that in Lemma~\ref{lem:convergence_rate_of_C} we showed that $\Sigma_{n}^A(i,j)$ and $|\Sigma_n^A|$ are lower and upper bounded when $n$ is large enough.  
	By~\eqref{eq:CSlog_points}, $C^A(i,j)$ must also be upper and lower bounded when $n$ is large enough, so (7) holds.
	Finally, since by item 6 $|C^A|$ and $|\Sigma_n^A|$ are asymptotically equivalent, $|C^A|$ must also be upper and lower bounded when $n$ is large enough, so (8) holds.  
\end{proof}

\begin{mylem}
	\label{lem:ratio_a_indep_labeled_dep_bad}
	Let $A_1$ and $A_2$ be any disjoint feature sets such that $\mu_0^{A_1} = \mu_1^{A_1}$, $\Sigma^{A_1} \equiv \Sigma_0^{A_1} = \Sigma_1^{A_1}$, features in $A_1$ and $A_2$ are uncorrelated in both classes, and $\Sigma_0^{A}$ and $\Sigma_1^{A}$ are full rank, where $A = A_1 \cup A_2$.  
	Suppose for all $f \in F$ the fourth order moment across both classes exists and is finite. 
	Let $S_{\infty}$ be a balanced sample.  Then, w.p.~$1$ there exist $c, N>0$ such that 
	\begin{align}
	\left( \frac
	{|C^{A_1}|^{\kappa^{A_1*}} 
		|C^{A_2}|^{\kappa^{A_2*}}}
	{|C^{A}|^{\kappa^{A*}}}
	\right)^{0.5}
	< (\log n)^c
	\label{eq:ratio_split_bad}
	\end{align}
	for all $n > N$.  
	The left hand side of~\eqref{eq:ratio_split_bad} is $a(P)/a(P')$ when $P$ and $P'$ are identical feature partitions except $A$ is a bad block in $P$, and $A_1$ and $A_2$ are bad blocks in $P'$.  
\end{mylem}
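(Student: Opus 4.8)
The plan is to take logarithms, isolate the only part of the expression that grows with $n$, and show it grows no faster than $\log\log n$. Writing $\kappa^{A_1*}=n+\kappa^{A_1}$, $\kappa^{A_2*}=n+\kappa^{A_2}$, and $\kappa^{A*}=n+\kappa^A$ with the prior constants $\kappa^{A_1},\kappa^{A_2},\kappa^A$ fixed, the square of the left-hand side of~\eqref{eq:ratio_split_bad} factors as
\[
\left(\frac{|C^{A_1}|\,|C^{A_2}|}{|C^A|}\right)^{n}\cdot |C^{A_1}|^{\kappa^{A_1}}|C^{A_2}|^{\kappa^{A_2}}|C^A|^{-\kappa^A}.
\]
By Corollary~\ref{lem:convergence_of_C} (item 8), applied to $A$, $A_1$, and $A_2$ (whose class covariances are full rank, being principal submatrices of the full-rank $\Sigma_0^A,\Sigma_1^A$), each determinant is bounded above and away from zero w.p.~$1$ for large $n$, so the second factor is $O(1)$. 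It therefore suffices to show that $R_n:=|C^{A_1}|\,|C^{A_2}|/|C^A|$ satisfies $R_n\le 1+K\log\log n/n$ w.p.~$1$ for large $n$; then $\log R_n\le K\log\log n/n$, so $R_n^{\,n}\le\exp(K\log\log n)=(\log n)^{K}$, and the claim follows with a suitable $c$.

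To bound $R_n$, I would first use the block structure. Because $A_1$ and $A_2$ are uncorrelated in both classes and $\mu_0^{A_1}=\mu_1^{A_1}$, the matrix $\Sigma_n^A$ in~\eqref{eq:Sigma_limit_mixed_class} is block diagonal, with $A_1$-block equal to $\Sigma^{A_1}$ and vanishing off-diagonal blocks. Partitioning $C^A$ conformally as $\begin{pmatrix} C^A_{11} & C^A_{12} \\ C^A_{21} & C^A_{22}\end{pmatrix}$, equation~\eqref{eq:CSlog_points} gives $|C^A(i,j)-\Sigma_n^A(i,j)|=O(\sqrt{\log\log n/n})$ for all $i,j$, so the off-diagonal blocks $C^A_{12},C^A_{21}$ are $O(\sqrt{\log\log n/n})$. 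Applying the Schur-complement identity $|C^A|=|C^A_{11}|\,|C^A_{22}-C^A_{21}(C^A_{11})^{-1}C^A_{12}|$ and noting that $(C^A_{11})^{-1}$ is bounded (since $C^A_{11}\to\Sigma^{A_1}$, full rank), the correction $C^A_{21}(C^A_{11})^{-1}C^A_{12}$ is \emph{quadratic} in the off-diagonal blocks and hence only $O(\log\log n/n)$. Since $|C^A_{22}|$ is bounded away from zero, this yields $|C^A|=|C^A_{11}|\,|C^A_{22}|\,(1+O(\log\log n/n))$.

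It remains to compare $|C^A_{11}|$ with $|C^{A_1}|$, and likewise for $A_2$. The key observation is that the within-block sample statistics agree \emph{exactly}: the $A_1$-block of the pooled sample covariance $\hat{\Sigma}^A$ equals $\hat{\Sigma}^{A_1}$, and the $A_1$-components of $\hat{\mu}^A$ equal $\hat{\mu}^{A_1}$. Reading off the $A_1$-block of~\eqref{eq:C_mixed_class} and comparing with the analogous expression for $C^{A_1}$, the two differ only through the fixed prior hyperparameters and the normalizing denominators ($\kappa^A$ vs.\ $\kappa^{A_1}$, $S^A$ vs.\ $S^{A_1}$, $\nu^A$ vs.\ $\nu^{A_1}$, $m^A$ vs.\ $m^{A_1}$), each contributing an $O(1/n)$ entrywise perturbation atop bounded sample statistics. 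Hence $C^A_{11}=C^{A_1}+O(1/n)$ entrywise, and because the determinant is a polynomial in bounded entries with $|C^{A_1}|$ bounded away from zero, $|C^A_{11}|=|C^{A_1}|(1+O(1/n))$, and symmetrically $|C^A_{22}|=|C^{A_2}|(1+O(1/n))$. Combining the three estimates gives $R_n=(1+O(1/n))^{2}(1+O(\log\log n/n))^{-1}=1+O(\log\log n/n)$, as required.

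The main obstacle is the determinant estimate in the second paragraph. A naive comparison of $|C^A|$ with $|\Sigma_n^A|$ through~\eqref{eq:CSlog_points} yields only a relative error of $O(\sqrt{\log\log n/n})$, which is fatal once raised to the $n$-th power. The essential gain comes from the block-diagonal structure of the limit: the off-diagonal perturbations enter the determinant only at second order via the Schur complement, upgrading the rate to $O(\log\log n/n)$, while the exact agreement of the within-block sample statistics pins the diagonal blocks to $C^{A_1},C^{A_2}$ at the even faster $O(1/n)$ rate. Some care is needed to ensure all boundedness and invertibility statements hold simultaneously w.p.~$1$ for large $n$, which is guaranteed by Lemma~\ref{lem:convergence_rate_of_C} and Corollary~\ref{lem:convergence_of_C}.
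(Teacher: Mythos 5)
Your proposal is correct and follows essentially the same route as the paper's proof: factor the left-hand side into a bounded fixed-exponent part (via Corollary~\ref{lem:convergence_of_C}) times $R_n = \left(|C^{A_1}||C^{A_2}|/|C^A|\right)^{0.5n}$, use the block-diagonality of $\Sigma_n^A$ together with~\eqref{eq:CSlog_points} to bound the off-diagonal blocks by $O(\sqrt{\log\log n/n})$, and observe that the Schur-complement correction is quadratic in those blocks, so the determinant ratio is $1+O(\log\log n/n)$, which raised to the power $n$ gives $(\log n)^c$. One point of difference worth noting: the paper simply writes $C^A$ as having diagonal blocks \emph{exactly} equal to $C^{A_1}$ and $C^{A_2}$, which is not literally true since the two partitions carry independent hyperparameters ($S^A$ vs.\ $S^{A_1}$, $\kappa^A$ vs.\ $\kappa^{A_1}$, etc.) and different normalizing denominators; your proposal explicitly bounds this discrepancy as an $O(1/n)$ entrywise perturbation, which is harmless after exponentiation. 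In that respect your argument is slightly more careful than the paper's own.
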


\begin{proof}
	We have that 
	\begin{align}
	\left( \frac
	{|C^{A_1}|^{\kappa^{A_1*}} 
		|C^{A_2}|^{\kappa^{A_2*}}}
	{|C^{A}|^{\kappa^{A*}}}
	\right)^{0.5}
	&= \left( \frac
	{|C^{A_1}|^{\kappa^{A_1}} 
		|C^{A_2}|^{\kappa^{A_2}}}
	{|C^{A}|^{\kappa^{A}}}
	\right)^{0.5}
	\times R_n,
	\label{eq:a_ratio_split_bad}
	\end{align}
	where 
	\begin{align}
	R_n = 
	\left( \frac
	{|C^{A_1}||C^{A_2}|}
	{|C^{A}|}
	\right)^{0.5n}.
	\label{eq:Rn}
	\end{align}
	By Corollary~\ref{lem:convergence_of_C}, the first term in~\eqref{eq:a_ratio_split_bad} is upper bounded by a positive finite constant for $n$ large enough w.p.~$1$.
	Without loss of generality, assume features in $A$ are ordered such that 
	\begin{equation}
	C^A =
	\begin{pmatrix}
	C^{A_1} & V \\
	V^T & C^{A_2}
	\end{pmatrix}
	\end{equation}
	for some matrix $V$. 
	Observe that 
	\begin{align}
	|C^A|=|C^{A_1}| |C^{A_2} - W|,
	\label{eq:decompose_CA}
	\end{align}
	where $W \equiv V^T (C^{A_1})^{-1} V$.  
	Since $\mu_0^{A_1} = \mu_1^{A_1}$, $\Sigma_0^{A_1} = \Sigma_1^{A_1}$, and features in $A_1$ and $A_2$ are uncorrelated in both classes,
	\begin{equation}
	\Sigma^A_n =
	\begin{pmatrix}
	\Sigma^{A_1} & 0_{|A_1|, |A_2|} \\
	0_{|A_2|, |A_1|} & \Sigma^{A_2}_n
	\end{pmatrix},
	\end{equation}
	where $\Sigma^A_n$ and $\Sigma^{A_2}_n$ are defined in~\eqref{eq:Sigma_limit_mixed_class}.
	By Lemma~\ref{lem:convergence_rate_of_C}, w.p.~$1$ there exist $K_1, N_1 > 0$ such that for all $n>N_1$, $i=1,\ldots,|A_1|$ and $j=1,\ldots,|A_2|$, 
	\begin{align}
	\label{eq:Vrate1}
	|V(i,j)|<K_1 \sqrt{\frac{\log \log n}{n}}.
	\end{align}
	Since $W$ is comprised of only quadratic terms in $V$ and $C^{A_1}(i,j) \to \Sigma^{A_1}(i,j)$ w.p.~$1$ (Corollary~\ref{lem:convergence_of_C}), w.p.~$1$ there exist $K_2, N_2 > 0$ such that for all $n>N_2$, $i=1,\ldots,|A_1|$ and $j=1,\ldots,|A_2|$,
	\begin{align}
	|W(i,j)| < K_2 \frac{\log \log n}{n}.
	\end{align}
	Further, since $|C^{A_2}| - |C^{A_2} - W|$ is a polynomial function of the elements of $W$, where each term has a degree between $1$ and $|A_2|$ and a coefficient that is a polynomial function of the elements of $C^{A_2}$ of at most degree $|A_2|-1$, and since $C^{A_2}(i,j)$ is upper and lower bounded for $n$ large enough w.p.~$1$ for all $i,j = 1, \ldots, |A|$ (Corollary~\ref{lem:convergence_of_C}), w.p.~$1$ there exists $K_3, N_3>0$ such that for all $n>N_3$,
	\begin{align}
	\left|  |C^{A_2}| -  |C^{A_2} - W| \right| < K_3 \frac{\log \log n}{n}.  
	\label{eq:CA2_bound}
	\end{align}
	Therefore,
	\begin{align}
	R_n 
	&=
	\left(\frac{|C^{A_1}||C^{A_2}|}{|C^{A_1}| |C^{A_2}- W|}\right)^{0.5n} \nonumber \\ 
	&<
	\left(1-\frac{K_3}{|C^{A_2}|}  \frac{\log \log n}{n}\right)^{-0.5n}
	\end{align}
	for $n>N_3$ w.p.~$1$, where the first line follows from~\eqref{eq:Rn} and~\eqref{eq:decompose_CA} and the second line from~\eqref{eq:CA2_bound}.  Further, by Corollary~\ref{lem:convergence_of_C}, w.p.~$1$ there exist $K_4>0$ and $N_4>N_3$ such that $|C^{A_2}|>K_4$ for all $n>N_4$, thus 
	\begin{align}
	R_n 
	&< 
	\left(1-\frac{K_3}{K_4}  \frac{\log \log n}{n}\right)^{-0.5n} \nonumber \\ 
	&<
	\left(1+\frac{2K_3}{K_4}  \frac{\log \log n}{n}\right)^{0.5n} \nonumber \\ 
	&<
	(\log n)^{K_3/K_4}
	\end{align}
	for all $n > N_4$.  
	The second line holds for $N_4$ large enough that $x = (K_3/ K_4)\log\log n / n$ is between $0$ and $0.5$ for all $n>N_4$, so that $(1-x)^{-1} < 1+2x$.  The last line follows from the fact that $(1+t/x)^x < e^t$ for all $x,t>0$.  
	From~\eqref{eq:a_ratio_split_bad}, the theorem holds with $c=K_5 + {K_3/K_4}$ and $N = N_4$, where $K_5$ is such that $(\log n)^{K_5}$ exceeds a bound on the first term in~\eqref{eq:a_ratio_split_bad}.
\end{proof}

\begin{mylem}
	\label{lem:ratio_a_good_labeled_bad_different}
	Let $G$ be any feature set such that either $\mu_0^G \neq \mu_1^G$ or $\Sigma_0^G \neq \Sigma_1^G$, and $\Sigma_0^G$ and $\Sigma_1^G$ are full rank.  Let $S_{\infty}$ be a balanced sample.  Then, w.p.~$1$ there exist $0 \leq r < 1$ and $N>0$ such that 
	\begin{align}
	\left(
	\frac
	{|C^{G}_0|^{\kappa^{G*}_0} |C^{G}_1|^{\kappa^{G*}_1}}
	{|C^{G}|^{\kappa^{G*}}}
	\right) ^{0.5}
	< r^n
	\label{eq:ratio_correct_good_diff}
	\end{align}
	for all $n > N$.  
	The left hand side of~\eqref{eq:ratio_correct_good_diff} is $a(P)/a(P')$ when $P$ and $P'$ are identical feature partitions except $G$ is a bad block in $P$ and a good block in $P'$.  
\end{mylem}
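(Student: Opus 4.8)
The plan is to pass to logarithms and show that the exponent is linear in $n$ with a strictly negative leading coefficient. Write the left-hand side of~\eqref{eq:ratio_correct_good_diff} as $\exp(\tfrac12\Phi_n)$, where $\Phi_n = \kappa_0^{G*}\log|C_0^G| + \kappa_1^{G*}\log|C_1^G| - \kappa^{G*}\log|C^G|$. Substituting $\kappa_y^{G*}=n_y+\kappa_y^G$ and $\kappa^{G*}=n+\kappa^G$ (see~\eqref{eq:kappa_y}) splits $\Phi_n = n\psi_n + E_n$, where $\psi_n = \rho_n\log|C_0^G| + (1-\rho_n)\log|C_1^G| - \log|C^G|$ and $E_n = \kappa_0^G\log|C_0^G| + \kappa_1^G\log|C_1^G| - \kappa^G\log|C^G|$. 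By Corollary~\ref{lem:convergence_of_C}, $|C_y^G|\to|\Sigma_y^G|>0$ and $|C^G|$ is bounded away from $0$ and $\infty$, so $E_n$ is bounded w.p.~$1$ for $n$ large. It therefore suffices to show that $\psi_n$ is bounded above by a strictly negative constant for $n$ large, w.p.~$1$; the bounded remainder $E_n$ cannot affect the linear rate.

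Next I would replace the empirical quantities by their limits. By Corollary~\ref{lem:convergence_of_C}, $\log|C_y^G|=\log|\Sigma_y^G|+o(1)$, and since $|C^G|\sim|\Sigma_n^G|$ with $|\Sigma_n^G|$ bounded away from $0$ and $\infty$ (Lemma~\ref{lem:convergence_rate_of_C}), also $\log|C^G|=\log|\Sigma_n^G|+o(1)$, all w.p.~$1$. As $\rho_n\in[0,1]$ is bounded, this gives $\psi_n=g(\rho_n)+o(1)$ w.p.~$1$, where $g(\rho)=\rho\log|\Sigma_0^G|+(1-\rho)\log|\Sigma_1^G|-\log|\Sigma_n^G|$ and $\Sigma_n^G$ is given by~\eqref{eq:Sigma_limit_mixed_class} with $\rho_n$ replaced by $\rho$. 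The problem reduces to proving that $g(\rho)<0$ on $(0,1)$, uniformly over the range of $\rho_n$.

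The key step is the strict inequality $g(\rho)<0$ for $\rho\in(0,1)$, which is exactly where the hypothesis that either $\mu_0^G\neq\mu_1^G$ or $\Sigma_0^G\neq\Sigma_1^G$ enters. From~\eqref{eq:Sigma_limit_mixed_class}, $\Sigma_n^G$ is the sum of the positive-definite matrix $\rho\Sigma_0^G+(1-\rho)\Sigma_1^G$ and the positive-semidefinite rank-one matrix $\rho(1-\rho)(\mu_0^G-\mu_1^G)(\mu_0^G-\mu_1^G)^T$. Two classical facts then give
\[
\log|\Sigma_n^G| \;\geq\; \log\bigl|\rho\Sigma_0^G+(1-\rho)\Sigma_1^G\bigr| \;\geq\; \rho\log|\Sigma_0^G|+(1-\rho)\log|\Sigma_1^G|,
\]
namely, strict monotonicity of the determinant under addition of a nonzero positive-semidefinite matrix for the first inequality (strict iff $\mu_0^G\neq\mu_1^G$), and strict concavity of the log-determinant on positive-definite matrices for the second (strict iff $\Sigma_0^G\neq\Sigma_1^G$). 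Under the hypothesis at least one is strict, so $\log|\Sigma_n^G|>\rho\log|\Sigma_0^G|+(1-\rho)\log|\Sigma_1^G|$, i.e.\ $g(\rho)<0$, for every $\rho\in(0,1)$.

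Finally I would upgrade this pointwise inequality to a uniform bound and conclude. Since $S_\infty$ is balanced, $\liminf_n\rho_n>0$ and $\limsup_n\rho_n<1$, so there is a compact interval $[a,b]\subset(0,1)$ with $\rho_n\in[a,b]$ for all $n$ large. The function $g$ is continuous in $\rho$ and strictly negative on $[a,b]$, hence $\max_{\rho\in[a,b]}g(\rho)=-c<0$. Therefore $\psi_n\leq -c+o(1)\leq -c/2$ for $n$ large, giving $\Phi_n\leq -(c/2)n+E_n\leq-(c/4)n$ and $\exp(\tfrac12\Phi_n)\leq r^n$ with $r=e^{-c/8}\in(0,1)$. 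I expect the main obstacle to be the heart of the argument above: carefully establishing both strictness claims across the two separate cases and verifying that the strict gap does not degenerate as $\rho_n$ ranges over $[a,b]$, which is what the compactness and continuity step secures. The bookkeeping of the $o(1)$ and bounded-remainder terms is then routine once the uniform negativity of $g$ is in hand.
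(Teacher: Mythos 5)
Your proof is correct and takes essentially the same route as the paper's: your decomposition $\Phi_n = n\psi_n + E_n$ is just the logarithm of the paper's factorization into a bounded constant times $R_n^n$, your replacement of $\log|C_y^G|$ and $\log|C^G|$ by $\log|\Sigma_y^G|$ and $\log|\Sigma_n^G|$ is exactly the paper's appeal to Corollary~\ref{lem:convergence_of_C}, and your two strictness facts (determinant monotonicity under adding a nonzero PSD rank-one term, and strict log-concavity of the determinant) are precisely the paper's matrix determinant lemma and Ky Fan's inequality, with balancedness supplying the uniform gap in both arguments. The only stylistic difference is that you chain the two inequalities in a single unified argument closed off by a compactness/continuity step, whereas the paper splits into the cases ($\Sigma_0^G = \Sigma_1^G$ with $\mu_0^G \neq \mu_1^G$; and $\Sigma_0^G \neq \Sigma_1^G$) and bounds each explicitly.
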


\begin{proof}
	We have that 
	\begin{align}
	\left(
	\frac
	{|C^{G}_0|^{\kappa^{G*}_0} |C^{G}_1|^{\kappa^{G*}_1}}
	{|C^{G}|^{\kappa^{G*}}}
	\right) ^{0.5}
	&= \left(
	\frac
	{|C^{G}_0|^{\kappa^{G}_0} |C^{G}_1|^{\kappa^{G}_1}}
	{|C^{G}|^{\kappa^{G}}}
	\right) ^{0.5}
	\times R_n^n,
	\label{eq:a_ratio_correct_good_diff}
	\end{align}
	where 
	\begin{align}
	R_n = 
	\left( 
	\frac
	{|C^{G}_0|^{\rho_n} |C^{G}_1|^{1-\rho_n}}
	{|C^{G}|}
	\right)^{0.5}.  
	\label{eq:Rn_lemma4}
	\end{align}
	By Corollary~\ref{lem:convergence_of_C}, the first term in~\eqref{eq:a_ratio_correct_good_diff} is upper bounded by a positive finite constant for $n$ large enough, and
	\begin{align}
	R_n \sim
	\left( 
	\frac
	{|\Sigma^{G}_0|^{\rho_n} |\Sigma^{G}_1|^{1-\rho_n}}
	{|\rho_n \Sigma_0^G + (1-\rho_n) \Sigma_1^G + \rho_n(1-\rho_n)(\mu_0^G - \mu_1^G)(\mu_0^G - \mu_1^G)^T|}
	\right)^{0.5}
	\equiv T_n
	\label{eq:Rn_lemma4_equiv}
	\end{align}
	as $n \to \infty$, both w.p.~$1$.  
	Suppose $\Sigma^G \equiv \Sigma_0^G = \Sigma_1^G$ and $\mu_0^G \neq \mu_1^G$.  Then,
	\begin{align}
	T_n 
	&= \left(
	\frac{|\Sigma^{G}|}
	{|\Sigma^G + \rho_n (1-\rho_n)(\mu_0^G - \mu_1^G)(\mu_0^G - \mu_1^G)^T|}
	\right)^{0.5} \notag \\
	&= \left(
	1 + \rho_n (1-\rho_n)(\mu_0^G - \mu_1^G)^T(\Sigma^G)^{-1}(\mu_0^G - \mu_1^G)
	\right)^{-0.5}.
	\label{eq:T_unequal_mean}
	\end{align}
	The last line follows from the matrix determinant lemma~\citep{harville1998matrix}.  
	Since $S_{\infty}$ is balanced and $\Sigma^G$ is positive-definite, there exists $0 < T < 1$ such that $0 \leq T_n < T$ for all $n$ large enough.  
	The theorem holds for any $T < r < 1$.  
	If $\Sigma_0^G \neq \Sigma_1^G$, then 
	\begin{align}
	T_n \leq \left(
	\frac{|\Sigma_0^{G}|^{\rho_n} |\Sigma_1^{G}|^{1-\rho_n}}
	{|\rho_n \Sigma_0^G
		+ (1-\rho_n) \Sigma_1^G|}
	\right)^{0.5}.
	\label{eq:T_unequal_covar}
	\end{align}
	\cite{fan1950theorem} showed that for any symmetric positive-definite $X$ and $Y$ and $0 \leq p \leq 1$, 
	\begin{equation}
	\label{eq:bm_ineq}
	|p X + (1-p) Y| \geq |X|^{p}  |Y|^{1-p}.
	\end{equation}
	Although not mentioned by Fan, if $0 < p < 1$ then equality holds if and only if $X=Y$ (by the weighted arithmetic-geometric-mean inequality). 
	Suppose $X \neq Y$ and fix $0 < \delta < 0.5$.  Then there exists $\epsilon > 0$ such that $|X|^{p}  |Y|^{1-p} / |p X + (1-p) Y| < 1-\epsilon$ for all $p \in (\delta, 1-\delta)$.  
	Applying this fact here, since $S_{\infty}$ is balanced there exists $0 < T < 1$ such that $0 \leq T_n < T$.  The theorem holds for any $T < r < 1$.  
\end{proof}

\begin{mylem}
	\label{lem:ratio_a_good_labeled_bad_merge_same}
	Let $G_1$ and $G_2$ be any disjoint feature sets such that $\mu_0^{G_2} = \mu_1^{G_2}$, $\Sigma^{G_2} \equiv \Sigma_0^{G_2} = \Sigma_1^{G_2}$, 
	there exists at least one feature in $G_1$ and one feature in $G_2$ that are correlated in at least one class, and $\Sigma_0^{G}$ and $\Sigma_1^{G}$ are full rank, where $G = G_1 \cup G_2$.  
	Let $S_{\infty}$ be a balanced sample.  Then, w.p.~$1$ there exist $0 \leq r < 1$ and $N>0$ such that
	\begin{align}
	\left(
	\frac{|C^{G}_0|^{\kappa^{G*}_0}|C^{G}_1|^{\kappa^{G*}_1}}
	{|C^{G_1}_0|^{\kappa^{G_1*}_0}|C^{G_1}_1|^{\kappa^{G_1*}_1}|C^{G_2}|^{\kappa^{G_2*}}}
	\right)^{0.5}
	< r^n
	\label{eq:ratio_merge_good_diff}
	\end{align}
	for all $n > N$.  The left hand side of~\eqref{eq:ratio_merge_good_diff} is $a(P)/a(P')$ when $P$ and $P'$ are identical feature partitions except $G_1$ is a good block in $P$, $G_2$ is a bad block in $P$, and $G$ is a good block in $P'$.  
\end{mylem}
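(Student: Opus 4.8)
The plan is to follow the template of Lemma~\ref{lem:ratio_a_good_labeled_bad_different}, splitting the left-hand side of~\eqref{eq:ratio_merge_good_diff} into a factor whose exponents are the fixed prior constants and a factor carrying all the sample-size dependence. Writing $\kappa^{G*}_y = \kappa^G_y + n_y$, $\kappa^{G_1*}_y = \kappa^{G_1}_y + n_y$, and $\kappa^{G_2*} = \kappa^{G_2} + n$, and using $n_0 = \rho_n n$, $n_1 = (1-\rho_n)n$, I would first rewrite the left-hand side as
\begin{align}
\left(\frac{|C^G_0|^{\kappa^G_0}|C^G_1|^{\kappa^G_1}}{|C^{G_1}_0|^{\kappa^{G_1}_0}|C^{G_1}_1|^{\kappa^{G_1}_1}|C^{G_2}|^{\kappa^{G_2}}}\right)^{0.5} \times R_n^n,
\qquad
R_n = \left(\frac{|C^G_0|^{\rho_n}|C^G_1|^{1-\rho_n}}{|C^{G_1}_0|^{\rho_n}|C^{G_1}_1|^{1-\rho_n}|C^{G_2}|}\right)^{0.5}.
\end{align}
By Corollary~\ref{lem:convergence_of_C} the leading factor converges to a positive finite constant (so it is bounded for $n$ large, w.p.~$1$). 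Since $\mu_0^{G_2}=\mu_1^{G_2}$ and $\Sigma^{G_2}=\Sigma_0^{G_2}=\Sigma_1^{G_2}$ force $|C^{G_2}|\to|\Sigma^{G_2}|$, while the remaining determinants converge to the corresponding $|\Sigma_y^G|$ and $|\Sigma_y^{G_1}|$, and each determinant ratio enters with a power in $[0,1]$, a factor-by-factor argument exactly as in~\eqref{eq:Rn_lemma4_equiv} gives $R_n \sim T_n$ w.p.~$1$, where
\begin{align}
T_n = \left(\frac{|\Sigma_0^G|^{\rho_n}|\Sigma_1^G|^{1-\rho_n}}{|\Sigma_0^{G_1}|^{\rho_n}|\Sigma_1^{G_1}|^{1-\rho_n}|\Sigma^{G_2}|}\right)^{0.5}.
\end{align}
It then suffices to exhibit $0 \le T < 1$ with $T_n < T$ for all $n$ large.

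The core of the argument is to simplify $T_n$ using the block structure of $\Sigma_y^G$ together with the hypotheses on $G_2$. Ordering the features so that $G_1$ precedes $G_2$, write $\Sigma_y^G = \left(\begin{smallmatrix} \Sigma_y^{G_1} & V_y \\ V_y^T & \Sigma^{G_2} \end{smallmatrix}\right)$, where $V_y$ is the cross-covariance between $G_1$ and $G_2$ in class $y$ and I have used $\Sigma_y^{G_2}=\Sigma^{G_2}$ for both $y$. The Schur-complement identity (as in~\eqref{eq:decompose_CA}) gives $|\Sigma_y^G| = |\Sigma_y^{G_1}| \cdot |\Sigma^{G_2} - W_y|$ with $W_y \equiv V_y^T (\Sigma_y^{G_1})^{-1} V_y$. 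Substituting into $T_n$, the $|\Sigma_y^{G_1}|$ factors cancel and, since $\rho_n+(1-\rho_n)=1$, the single $|\Sigma^{G_2}|$ distributes across both exponents, leaving
\begin{align}
T_n^2 = \left(\frac{|\Sigma^{G_2}-W_0|}{|\Sigma^{G_2}|}\right)^{\rho_n}\left(\frac{|\Sigma^{G_2}-W_1|}{|\Sigma^{G_2}|}\right)^{1-\rho_n}.
\end{align}

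The final step is to bound each ratio. Each $W_y = V_y^T (\Sigma_y^{G_1})^{-1} V_y$ is positive semidefinite, so $\Sigma^{G_2} - W_y \preceq \Sigma^{G_2}$; moreover $\Sigma^{G_2}-W_y$ is positive-definite, being the Schur complement of the full-rank $\Sigma_y^G$. For positive-definite $A \preceq B$ one has $|A|\le|B|$, with equality iff $A=B$ (write $|B|=|A|\cdot|A^{-1/2}BA^{-1/2}|$ and note $A^{-1/2}BA^{-1/2}\succeq I$ has all eigenvalues $\ge1$, strictly exceeding $1$ somewhere unless $A=B$); hence $|\Sigma^{G_2}-W_y|\le|\Sigma^{G_2}|$, strictly whenever $W_y\ne0$. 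The correlation hypothesis—some feature in $G_1$ and some feature in $G_2$ are correlated in at least one class—means $V_y\ne0$ for at least one $y$, whence $W_y\ne0$ and the corresponding ratio is a fixed constant strictly below $1$. Because $S_\infty$ is balanced, $\liminf\rho_n>0$ and $\limsup\rho_n<1$, so there is $\delta>0$ with $\rho_n,\,1-\rho_n\in[\delta,1-\delta]$ for all $n$ large; raising a fixed constant in $[0,1)$ to an exponent bounded below by $\delta$ keeps $T_n^2$ below a constant $T^2<1$. Thus $T_n<T<1$, so $R_n<r'<1$ for all $n$ large w.p.~$1$, giving $R_n^n<(r')^n$; absorbing the bounded leading factor then yields the claim for any $r'<r<1$. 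I expect the only delicate points to be that $\rho_n$ need not converge (resolved by the uniform $[\delta,1-\delta]$ bound) and confirming that the single-pair correlation hypothesis suffices to force a strictly-less-than-one determinant ratio, which follows from the positive-semidefiniteness of $W_y$.
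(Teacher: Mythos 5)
Your proposal is correct and takes essentially the same approach as the paper's proof: the same factorization into a prior-constant term and a sample-size-dependent term, convergence of the determinants via Corollary~\ref{lem:convergence_of_C}, the Schur-complement identity $|\Sigma_y^G| = |\Sigma_y^{G_1}|\,|\Sigma^{G_2} - W_y|$, strictness of the determinant drop from the correlation hypothesis, and balancedness to control the exponents. The only difference is bookkeeping: you fold both classes into a single $\rho_n$-weighted ratio raised to the power $n$ (mirroring Lemma~\ref{lem:ratio_a_good_labeled_bad_different}), whereas the paper keeps per-class factors $R_{0,n}^{n_0}R_{1,n}^{n_1}$ and combines them at the end via bounds on $n_y/n$; if anything, your uniform $[\delta,1-\delta]$ treatment of the exponents is the cleaner of the two.
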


\begin{proof}
	We have that 
	\begin{align}
	\left(
	\frac{|C^{G}_0|^{\kappa^{G*}_0}|C^{G}_1|^{\kappa^{G*}_1}}
	{|C^{G_1}_0|^{\kappa^{G_1*}_0}|C^{G_1}_1|^{\kappa^{G_1*}_1}|C^{G_2}|^{\kappa^{G_2*}}}
	\right)^{0.5}
	&= \left(
	\frac{|C^{G}_0|^{\kappa^{G}_0}|C^{G}_1|^{\kappa^{G}_1}}
	{|C^{G_1}_0|^{\kappa^{G_1}_0}|C^{G_1}_1|^{\kappa^{G_1}_1}|C^{G_2}|^{\kappa^{G_2}}}
	\right)^{0.5}
	\times  
	R_{0,n}^{n_0}R_{1,n}^{n_1},
	\label{eq:a_ratio_merge_good_diff}
	\end{align}
	where
	\begin{align}
	R_{y,n}
	= \left(\frac{|C^{G}_y|}
	{|C^{G_1}_y| |C^{G_2}|}\right)^{0.5}.
	\end{align}
	By Corollary~\ref{lem:convergence_of_C}, the first term in~\eqref{eq:a_ratio_merge_good_diff} converges to a positive finite constant w.p.~$1$, and 
	\begin{align}
	R_{y,n} \to
	\left( 
	\frac
	{|\Sigma^{G}_y|}
	{|\Sigma^{G_1}_y| |\Sigma^{G_2}_y|}
	\right)^{0.5}
	\equiv R_y
	\label{eq:R_y_merge_good_diff}
	\end{align}
	as $n \to \infty$, both w.p.~$1$.  
	Without loss of generality, assume features in $G$ are ordered such that 
	\begin{equation}
	\Sigma^{G}_y =
	\begin{pmatrix}
	\Sigma^{G_1}_y & V_y \\
	V_y^T & \Sigma^{G_2}_y
	\end{pmatrix}, 
	\end{equation}
	where $V_y$ is a matrix of correlations between features in $G_1$ and $G_2$.  Since 
	\begin{equation}
	|\Sigma_y^{G}| = |\Sigma_y^{G_1}| |\Sigma_y^{G_2} - V_y^T(\Sigma_y^{G_1})^{-1}V_y|,
	\end{equation}
	we have that 
	\begin{align}
	R_{y}
	=
	\left(\frac{
		|\Sigma_y^{G_2} - V_y^T(\Sigma_y^{G_1})^{-1}V_y|}
	{|\Sigma_y^{G_2}|}\right)^{0.5}
	\end{align}
	We always have $|\Sigma_y^{G_2} - V_y^T(\Sigma_y^{G_1})^{-1}V_y| \leq |\Sigma^{G_2}_y|$, and thus $0 \leq R_y \leq 1$.  
	Since there exists at least one feature in $G_1$ and one feature in $G_2$ that are correlated in at least one class, in this class $V_y$ is non-zero, $|\Sigma_y^{G_2} - V_y^T(\Sigma_y^{G_1})^{-1}V_y| < |\Sigma^{G_2}_y|$, and $R_y < 1$.  
	Since $S_{\infty}$ is balanced, there exist $0 < p_0, p_1 < 1$ such that $n_0/n < p_0$ and $n_1/n < p_1$ for $n$ large enough.
	Thus, 
	\begin{align}
	R_{0,n}^{n_0} R_{1,n}^{n_1} \leq 
	R_n^n
	\end{align}
	for $n$ large enough, where $R_n = R_{0,n}^{p_0} R_{1,n}^{p_1}$.  Note $R_n \to R_0^{p_0}R_1^{p_1} \equiv R$, where $0 \leq R < 1$.  The theorem holds for any $R < r < 1$.  
\end{proof}

\begin{mylem}
	\label{lem:ratio_a_indep_labeled_dep_good}
	Let $A_1$ and $A_2$ be any disjoint feature sets such that features in $A_1$ and $A_2$ are uncorrelated in both classes, and $\Sigma_0^{A}$ and $\Sigma_1^{A}$ are full rank, where $A = A_1 \cup A_2$.  
	Suppose for all $f \in F$ the fourth order moment across both classes exists and is finite. Let $S_{\infty}$ be a balanced sample.  
	Then, w.p.~$1$ there exist $c, N>0$ such that 
	\begin{align}
	\left( \frac
	{|C^{A_1}_0|^{\kappa^{A_1*}_0} 
	|C^{A_2}_0|^{\kappa^{A_2*}_0}
	|C^{A_1}_1|^{\kappa^{A_1*}_1}
	|C^{A_2}_1|^{\kappa^{A_2*}_1}}
	{|C^{A}_0|^{\kappa^{A*}_0}
	|C^{A}_1|^{\kappa^{A*}_1}}
	\right) ^{0.5}
	< (\log n)^c
	\label{eq:ratio_split_good}
	\end{align}
	for all $n > N$.  
	The left hand side of~\eqref{eq:ratio_split_good} is $a(P)/a(P')$ when $P$ and $P'$ are identical feature partitions except $A$ is a good block in $P$, and $A_1$ and $A_2$ are good blocks in $P'$.  
\end{mylem}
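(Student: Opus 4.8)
The plan is to reduce~\eqref{eq:ratio_split_good} to two per-class applications of the argument already developed for Lemma~\ref{lem:ratio_a_indep_labeled_dep_bad}. First I would factor the left-hand side as the product over $y = 0, 1$ of
\begin{equation}
\left( \frac{|C_y^{A_1}|^{\kappa_y^{A_1*}} |C_y^{A_2}|^{\kappa_y^{A_2*}}}{|C_y^A|^{\kappa_y^{A*}}} \right)^{0.5},
\end{equation}
and then, using $\kappa_y^{A_1*} = \kappa_y^{A_1} + n_y$ together with the analogous identities for $A_2$ and $A$, split each factor into a constant-exponent part times $R_{y,n}^{n_y}$, where
\begin{equation}
R_{y,n} = \left( \frac{|C_y^{A_1}| \, |C_y^{A_2}|}{|C_y^A|} \right)^{0.5}.
\end{equation}
By Corollary~\ref{lem:convergence_of_C}, each of $|C_y^A|$, $|C_y^{A_1}|$, $|C_y^{A_2}|$ converges to the corresponding (positive) limiting determinant, so the constant-exponent part converges to a positive finite constant and is harmless; the whole problem is to bound $R_{y,n}^{n_y}$ by a power of $\log n$ for each class.

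To bound $R_{y,n}$, I would write $C_y^A$ in block form with off-diagonal block $V_y$ (the sample cross-covariance between $A_1$ and $A_2$ in class $y$) and use the Schur-complement identity $|C_y^A| = |C_y^{A_1}| \, |C_y^{A_2} - W_y|$ with $W_y = V_y^T (C_y^{A_1})^{-1} V_y$, so that $R_{y,n}^2 = |C_y^{A_2}| / |C_y^{A_2} - W_y|$. The key input is that $A_1$ and $A_2$ are uncorrelated in class $y$, which makes $\Sigma_y^A$ block diagonal; hence by~\eqref{eq:CSylog_points} the entries of $V_y$ converge to zero at rate $\sqrt{\log\log n / n}$, and $W_y$ has entries of order $\log\log n / n$. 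Since the determinant is a polynomial in the entries of $W_y$ with coefficients controlled via Corollary~\ref{lem:convergence_of_C}, this yields $\big| |C_y^{A_2}| - |C_y^{A_2} - W_y| \big| < K \, \log\log n / n$ for some $K > 0$, exactly as in~\eqref{eq:CA2_bound}.

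Combining this with a lower bound $|C_y^{A_2}| > K_4 > 0$ from Corollary~\ref{lem:convergence_of_C} gives $R_{y,n}^2 < (1 - (K_3/K_4)\log\log n/n)^{-1}$ for $n$ large and some $K_3 > 0$, so that
\begin{equation}
R_{y,n}^{n_y} < \left(1 - \frac{K_3}{K_4} \frac{\log\log n}{n}\right)^{-n_y/2} \leq \left(1 - \frac{K_3}{K_4}\frac{\log\log n}{n}\right)^{-n/2} < (\log n)^{K_3/K_4},
\end{equation}
where the middle inequality uses $n_y \leq n$ and the fact that the base lies in $(0,1)$, and the last step follows from $(1-x)^{-1} < 1 + 2x$ for small $x$ and then $(1+t/x)^x < e^t$, precisely as in the proof of Lemma~\ref{lem:ratio_a_indep_labeled_dep_bad}. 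Taking the product over $y = 0, 1$ and absorbing the bounded constant-exponent factors yields the claim with suitable $c, N > 0$.

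I expect the only genuinely new point is to confirm that the per-class reduction is legitimate and that the extra hypotheses of Lemma~\ref{lem:ratio_a_indep_labeled_dep_bad} (namely $\mu_0^{A_1} = \mu_1^{A_1}$ and $\Sigma_0^{A_1} = \Sigma_1^{A_1}$) are \emph{not} needed here. Those hypotheses were required in the bad-block case solely to ensure that the limiting matrix $\Sigma_n^A$, which mixes both classes and carries the rank-one term $(\mu_0^A - \mu_1^A)(\mu_0^A - \mu_1^A)^T$, is block diagonal. Working class-by-class removes the mixture entirely, so the within-class uncorrelatedness of $A_1$ and $A_2$ already forces $\Sigma_y^A$ to be block diagonal, and no assumption relating the two classes is necessary.
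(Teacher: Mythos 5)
Your proposal is correct and follows essentially the same route as the paper's own proof: the same per-class factorization into a bounded constant-exponent part times $R_{y,n}^{n_y}$, the same Schur-complement bound $\bigl| |C_y^{A_2}| - |C_y^{A_2} - W_y| \bigr| = O(\log\log n / n)$ driven by Lemma~\ref{lem:convergence_rate_of_C}, and the same $(1-x)^{-1} < 1+2x$, $(1+t/x)^x < e^t$ chain to reach the $(\log n)^c$ bound. The only cosmetic difference is that you bound the exponent via $n_y \leq n$ with base in $(0,1)$ where the paper routes through the balanced-sample bound $n_y/n < p_y$; both are valid, and your closing observation about why the equal-mean/equal-covariance hypotheses of Lemma~\ref{lem:ratio_a_indep_labeled_dep_bad} are unnecessary here is exactly right.
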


\begin{proof}
	We have that 
	\begin{align}
	&\left( \frac
	{|C^{A_1}_0|^{\kappa^{A_1*}_0} 
		|C^{A_2}_0|^{\kappa^{A_2*}_0}
		|C^{A_1}_1|^{\kappa^{A_1*}_1}
		|C^{A_2}_1|^{\kappa^{A_2*}_1}}
	{|C^{A}_0|^{\kappa^{A*}_0}
		|C^{A}_1|^{\kappa^{A*}_1}}
	\right) ^{0.5} \notag \\
	&\qquad= \left( \frac
	{|C^{A_1}_0|^{\kappa^{A_1}_0} 
		|C^{A_2}_0|^{\kappa^{A_2}_0}}
	{|C^{A}_0|^{\kappa^{A}_0}}
	\times 
	\frac
	{|C^{A_1}_1|^{\kappa^{A_1}_1}
		|C^{A_2}_1|^{\kappa^{A_2}_1}}
	{|C^{A}_1|^{\kappa^{A}_1}}
	\right) ^{0.5}
	\times R_{0, n} R_{1, n},
	\label{eq:a_ratio_split_good}
	\end{align}
	where 
	\begin{align}
	R_{y,n} = 
	\left(\frac
	{|C^{A_1}_y||C^{A_2}_y|}
	{|C^{A}_y|}
	\right)^{0.5n_y}.
	\label{eq:R_yn}
	\end{align}
	By Corollary~\ref{lem:convergence_of_C}, the first term in~\eqref{eq:a_ratio_split_good} converges to a positive finite constant w.p.~$1$. 
	Fix $y \in \{0, 1\}$.  Without loss of generality, assume features in $A$ are ordered such that 
	\begin{equation}
	C^A_y =
	\begin{pmatrix}
	C^{A_1}_y & V_y \\
	V_y^T & C^{A_2}_y
	\end{pmatrix}
	\end{equation}
	for some matrix $V_y$. 
	Observe that 
	\begin{align}
	|C^A_y|=|C^{A_1}_y| |C^{A_2}_y - W_y|,
	\label{eq:decompose_CAy}
	\end{align}
	where $W_y \equiv V^T_y (C^{A_1}_y)^{-1} V_y$.  
	Since features in $A_1$ and $A_2$ are uncorrelated in both classes, 
	\begin{equation}
	\Sigma^A_y =
	\begin{pmatrix}
	\Sigma^{A_1}_y & 0_{|A_1|, |A_2|} \\
	0_{|A_2|, |A_1|} & \Sigma^{A_2}_y
	\end{pmatrix}.
	\end{equation}
	By Lemma~\ref{lem:convergence_rate_of_C}, w.p.~$1$ there exist $K_1, N_1 > 0$ such that for all $n>N_1$, $y=0,1$, $i=1,\ldots,|A_1|$ and $j=1,\ldots,|A_2|$ we have 
	\begin{align}
	\label{eq:Vrate}
	|V_y(i,j)|<K_1 \sqrt{\frac{\log \log n}{n}}.
	\end{align}
	Since $W_y$ is comprised of only quadratic terms in $V_y$ and $C^{A_1}_y(i,j) \to \Sigma^{A_1}_y(i,j)$ w.p.~$1$ (Corollary~\ref{lem:convergence_of_C}), w.p.~$1$ there exist $K_2, N_2 > 0$ such that for all $n>N_2$, $y=0,1$, $i=1,\ldots,|A_1|$ and $j=1,\ldots,|A_2|$
	\begin{align}
	|W_y(i,j)| < K_2 \frac{\log \log n}{n}.
	\end{align}
	Further, since $|C^{A_2}_y| -  |C^{A_2}_y - W_y|$ is a polynomial function of the elements of $W_y$, where each term has a degree between $1$ and $|A_2|$ and a coefficient that is a polynomial function of the elements of $C^{A_2}_y$ of at most degree $|A_2|-1$, and since $C^{A_2}_y(i,j) \to \Sigma^{A_2}_y(i,j)$ w.p.~$1$ (Corollary~\ref{lem:convergence_of_C}), w.p.~$1$ there exists $K_3,N_3>0$ such that for all $n>N_3$ and $y = 0, 1$, 
	\begin{align}
	\left|  |C^{A_2}_y| -  |C^{A_2}_y - W_y| \right| < K_3 \frac{\log \log n}{n}.  
	\label{eq:CA2y_bound}
	\end{align}
	Therefore, 
	\begin{align}
	R_{y,n}
	&=
	\left(\frac{|C^{A_1}_y||C^{A_2}_y|}{|C^{A_1}_y| |C^{A_2}_y - W_y|}\right)^{0.5 n_y} \nonumber \\ 
	&<
	\left(1-\frac{K_3}{|C^{A_2}_y|}\frac{\log \log n}{n}\right)^{-0.5 n_y}
	\end{align}
	for $n>N_3$ w.p.~$1$, where the first line follows from~\eqref{eq:R_yn} and~\eqref{eq:decompose_CAy} and the second line from~\eqref{eq:CA2y_bound}.  Further, w.p.~$1$ there exists $N_4>N_3$ such that for all $n>N_4$ and $y = 0, 1$,
	\begin{align}
	\label{eq:Rnyfinal}
	R_{y,n}
	&<
	\left( 1 - \frac{2K_3}{|\Sigma^{A_2}_y|} \frac{\log \log n}{n}  \right)^{-0.5 n_y} \nonumber \\ 
	&<
	\left( 1 + \frac{4K_3}{|\Sigma^{A_2}_y|} \frac{\log \log n}{n}  \right)^{0.5 n_y} \nonumber \\ 
	&<
	\left( 1 + \frac{4 p_y K_3}{|\Sigma^{A_2}_y|} \frac{\log \log n}{n_y}  \right)^{0.5 n_y} \nonumber \\ 
	&<
	(\log n)^{2 p_y K_3/|\Sigma^{A_2}_y|}.  
	\end{align}
	The first line holds as long as $N_4$ is large enough that $|C^{A_2}_y|>|\Sigma^{A_2}_y|/2$ for all $n>N_4$ and $y=0, 1$ (this is possible since both $C^{A_2}_y$ converge).  
	The second line holds as long as $N_4$ is large enough that $x = (2 K_3/ |\Sigma^{A_2}_y|)\log\log n / n$ is between $0$ and $0.5$ for all $n>N_4$ and $y=0,1$, so that $(1-x)^{-1} < 1+2x$.  
	The third line holds as long as $N_4$ is large enough that $n_y/n < p_y$ for all $n>N_4$ and $y=0, 1$ and some $0 < p_0,p_1 < 1$ (this is possible since $S_\infty$ is balanced).  
	Finally, the last line follows from the fact that $(1+t/x)^x < e^t$ for all $x,t>0$.  
	From~\eqref{eq:a_ratio_split_good}, the theorem holds with $c=K_4 + 2 p_0 K_3/|\Sigma^{A_2}_0| + 2 p_1 K_3/|\Sigma^{A_2}_1|$ and $N=N_4$, where $K_4$ is such that $(\log n)^{K_4}$ exceeds a bound on the first term in~\eqref{eq:a_ratio_split_good}.  
\end{proof}

\begin{mylem}
	\label{lem:ratio_a_bad_labeled_good}
	Let $B$ be any feature set such that $\mu^B \equiv \mu_0^B = \mu_1^B$, $\Sigma^B \equiv \Sigma_0^B = \Sigma_1^B$, and $\Sigma^{B}$ is full rank.  
	Suppose for all $f \in F$ the fourth order moment across both classes exists and is finite. 
	Let $S_{\infty}$ be a balanced sample.  Then, w.p.~$1$ there exist $c, N>0$ such that 
	\begin{align}
	\left(
	\frac
	{|C^{B}|^{\kappa^{B*}}}
	{|C^{B}_0|^{\kappa^{B*}_0} |C^{B}_1|^{\kappa^{B*}_1}}
	\right) ^{0.5}
	< (\log n)^c
	\label{eq:ratio_correct_bad}
	\end{align}
	for all $n > N$.  The left hand side of~\eqref{eq:ratio_correct_bad} is $a(P)/a(P')$ when $P$ and $P'$ are identical feature partitions except $B$ is a good block in $P$ and a bad block in $P'$.  
\end{mylem}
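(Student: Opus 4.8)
The plan is to mirror the structure of Lemma~\ref{lem:ratio_a_good_labeled_bad_different}, but to exploit that here $B$ is a genuine bad block ($\mu^B_0 = \mu^B_1$ and $\Sigma^B_0 = \Sigma^B_1$), so that the relevant determinant ratio tends to $1$ rather than to a constant strictly below $1$. This forces us to control the \emph{rate} at which it approaches $1$ rather than merely bounding it away from $1$. First I would factor the left-hand side of~\eqref{eq:ratio_correct_bad} using $\kappa^{B*} = \kappa^B + n$ and $\kappa^{B*}_y = \kappa^B_y + n_y$, as a product of a term built from the fixed hyperparameters $\kappa^B, \kappa^B_0, \kappa^B_1$ (which converges to a positive finite constant by Corollary~\ref{lem:convergence_of_C}, since $|C^B|$, $|C^B_0|$ and $|C^B_1|$ all tend to $|\Sigma^B|$) times $R_n$, where
\[
R_n^2 = \frac{|C^B|^n}{|C^B_0|^{n_0} |C^B_1|^{n_1}}.
\]
It then suffices to show $R_n < (\log n)^{c'}$ for some $c' > 0$ and all $n$ large, w.p.~$1$.

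Next I would strip away the difference between the $C$ matrices and the sample covariances. By the ratio bounds~\eqref{eq:bound_C_ratio} and~\eqref{eq:bound_Cy_ratio}, each of $|C^B|/|\hat{\Sigma}^B|$ and $|C^B_y|/|\hat{\Sigma}^B_y|$ equals $1 + O(1/n)$, so $(|C^B|/|\hat{\Sigma}^B|)^n$ and $(|C^B_y|/|\hat{\Sigma}^B_y|)^{n_y}$ are bounded w.p.~$1$; likewise~\eqref{eq:hShSn} lets me replace $\hat{\Sigma}^B$ by $\hat{\Sigma}^B_n$ (defined in~\eqref{eq:Sigma_n_mixed_class}) at the cost of another bounded factor. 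Thus it is enough to bound $|\hat{\Sigma}^B_n|^n / (|\hat{\Sigma}^B_0|^{n_0} |\hat{\Sigma}^B_1|^{n_1})$. Writing $M = \rho_n \hat{\Sigma}^B_0 + (1-\rho_n)\hat{\Sigma}^B_1$ and $\delta = \hat{\mu}^B_0 - \hat{\mu}^B_1$, the matrix determinant lemma gives $|\hat{\Sigma}^B_n| = |M|\,(1 + \rho_n(1-\rho_n)\,\delta^T M^{-1} \delta)$. Since $\mu^B_0 = \mu^B_1$, \eqref{eq:bound_means} forces the entries of $\delta$ to be $O(\sqrt{\log\log n / n})$, and $M^{-1} \to (\Sigma^B)^{-1}$, so the rank-one factor is $1 + O(\log\log n / n)$; raised to the power $n$ it is bounded by $(\log n)^{O(1)}$ via $(1+t/x)^x < e^t$, exactly as in Lemma~\ref{lem:ratio_a_indep_labeled_dep_good}.

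The remaining and hardest piece is the Fan ratio $\big( |M| / (|\hat{\Sigma}^B_0|^{\rho_n} |\hat{\Sigma}^B_1|^{1-\rho_n}) \big)^{n}$, whose base Fan's inequality~\eqref{eq:bm_ineq} shows is at least $1$. The subtlety is that bounding each determinant ratio individually yields only an $O(\sqrt{\log\log n / n})$ error, which would blow up to a super-polynomial factor when raised to the power $n$; I must instead exploit the cancellation that makes~\eqref{eq:bm_ineq} an \emph{equality} when $\hat{\Sigma}^B_0 = \hat{\Sigma}^B_1$. Concretely, set $g(X_0, X_1) = \log|\rho_n X_0 + (1-\rho_n) X_1| - \rho_n \log|X_0| - (1-\rho_n)\log|X_1|$ and observe that $g(X, X) = 0$ and that the gradient of $g$ vanishes along the diagonal $X_0 = X_1$; hence on the compact neighborhood of $\Sigma^B$ that eventually contains both $\hat{\Sigma}^B_0$ and $\hat{\Sigma}^B_1$ (with $\rho_n$ bounded away from $0$ and $1$, since $S_\infty$ is balanced), a second-order Taylor expansion gives $g(\hat{\Sigma}^B_0, \hat{\Sigma}^B_1) = \tfrac12 \rho_n(1-\rho_n)\Tr\big( ((\Sigma^B)^{-1}(\hat{\Sigma}^B_0 - \hat{\Sigma}^B_1))^2 \big) + O(\|\hat{\Sigma}^B_0 - \hat{\Sigma}^B_1\|^3)$. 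By~\eqref{eq:bound_covar_hat} both $\hat{\Sigma}^B_y \to \Sigma^B$ with error $O(\sqrt{\log\log n / n})$, so this is $O(\log\log n / n)$ and the Fan base equals $e^{g} = 1 + O(\log\log n / n)$; raising to the power $n$ again gives a factor bounded by $(\log n)^{O(1)}$. Multiplying the convergent leading constant, the bounded reduction factors, the rank-one factor, and the Fan factor, the left-hand side of~\eqref{eq:ratio_correct_bad} is $< (\log n)^c$ for a suitable $c > 0$ and all $n$ large, w.p.~$1$. The main obstacle is precisely this second-order control of the Fan gap; everything else is the same $(1 + O(\log\log n/n))^{O(n)}$ bookkeeping already used in Lemmas~\ref{lem:ratio_a_indep_labeled_dep_bad} and~\ref{lem:ratio_a_indep_labeled_dep_good}.
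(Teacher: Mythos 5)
Your proposal is correct, and its outer skeleton matches the paper's proof exactly: factor out the hyperparameter constant (convergent by Corollary~\ref{lem:convergence_of_C}), strip the $C$ matrices down to sample covariances at the cost of a bounded factor via \eqref{eq:bound_C_ratio} and \eqref{eq:bound_Cy_ratio}, isolate the mean contribution, and then show the Fan gap is $O(\log\log n/n)$ so that raising to the power $n$ costs only $(\log n)^{O(1)}$ by $(1+t/x)^x < e^t$. Where you genuinely diverge is in the two cancellation steps. For the mean term you use the matrix determinant lemma to pull out the rank-one factor $1+\rho_n(1-\rho_n)\delta^T M^{-1}\delta$, whereas the paper absorbs the means into a matrix $E$ with entries $O(\log\log n/n)$. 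For the crux --- the second-order control of the Fan gap --- you Taylor-expand $g(X_0,X_1)=\log|\rho_n X_0+(1-\rho_n)X_1|-\rho_n\log|X_0|-(1-\rho_n)\log|X_1|$ about the diagonal, using that $g$ and its gradient vanish there; your explicit quadratic term $\tfrac{1}{2}\rho_n(1-\rho_n)\Tr\bigl(((\Sigma^B)^{-1}(\hat{\Sigma}^B_0-\hat{\Sigma}^B_1))^2\bigr)$ is indeed the correct Hessian form. The paper never differentiates: it expands the determinants as multilinear polynomials (Levi-Civita symbols), observes that the terms first order in $E_y=\hat{\Sigma}^B_y-\Sigma^B$ cancel exactly between $|\hat{\Sigma}^B|$ and $\rho_n|\hat{\Sigma}^B_0|+(1-\rho_n)|\hat{\Sigma}^B_1|$, and then finishes by applying a scalar inequality, $\rho x^{1-\rho}+(1-\rho)x^{-\rho}\leq 1+0.25(x-1)^2$ for $x$ near $1$ (Lemma~S3 of \cite{igib1}), to $x=|\hat{\Sigma}^B_0|/|\hat{\Sigma}^B_1|$. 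Your route is more conceptual and self-contained (no appeal to the companion paper's scalar lemma); the paper's is more elementary, trading matrix calculus for polynomial bookkeeping. Two points to tighten in a full write-up: \eqref{eq:hShSn} is an intermediate display inside the proof of Lemma~\ref{lem:convergence_rate_of_C}, not part of its statement, so you should rederive it from \eqref{eq:Sigma_mixed_class} (it is immediate, since all quantities involved are bounded w.p.~$1$); and your Taylor remainder constant must be uniform in $n$, which holds because the Hessian of $g$ is bounded uniformly over $\rho_n\in[0,1]$ and over a compact neighborhood of $(\Sigma^B,\Sigma^B)$ inside the positive-definite cone --- worth stating explicitly, since $\rho_n$ varies with $n$.
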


\begin{proof}
	We have that 
	\begin{align}
	\left(
	\frac
	{|C^{B}|^{\kappa^{B*}}}
	{|C^{B}_0|^{\kappa^{B*}_0} |C^{B}_1|^{\kappa^{B*}_1}}
	\right) ^{0.5}
	&= \left(
	\frac
	{|C^{B}|^{\kappa^{B}}}
	{|C^{B}_0|^{\kappa^{B}_0} |C^{B}_1|^{\kappa^{B}_1}}
	\right) ^{0.5}
	\times R_n,
	\label{eq:a_ratio_correct_bad}
	\end{align}
	where 
	\begin{align}
	R_n = 
	\left( 
	\frac
	{|C^{B}|}
	{|C^{B}_0|^{\rho_n} |C^{B}_1|^{1-\rho_n}}
	\right)^{0.5n}.  
	\label{eq:Rn_thm_8}
	\end{align}
	By Corollary~\ref{lem:convergence_of_C}, the first term in~\eqref{eq:a_ratio_correct_bad} converges to a positive finite constant w.p.~$1$.  
	By Lemma~\ref{lem:convergence_rate_of_C}, w.p.~$1$ there exists $K_1, N_1>0$ such that for all $n> N_1$ and $y = 0, 1$,
	\begin{align}
	\left|\frac{|C^B_y|}{|\hat{\Sigma}^B_y|} - 1 \right| &<\frac{K_1}{n}, 
	\label{eq:Cy_bound_thm8} \\
	\left|\frac{|C^B|}{|\hat{\Sigma}^B|} - 1 \right| &<\frac{K_1}{n}.
	\label{eq:C_bound_thm8}
	\end{align}
	Let $N_2 > N_1$ be such that $x = K_1 / n$ is between $0$ and $0.5$ for all $n > N_2$, so that $(1-x)^{-1} < 1+2x$.  Then for all $n>N_2$, 
	\begin{align}
	R_n &< 
	\left( \frac
	{|\hat{\Sigma}^{B}|\left(1 + \frac{K_1}{n}\right)}
	{|\hat{\Sigma}_0^{B}|^{\rho_n}\left(1 - \frac{K_1}{n}\right)^{\rho_n} |\hat{\Sigma}_1^{B}|^{1-\rho_n}\left(1 - \frac{K_1}{n}\right)^{1-\rho_n}}
	\right)^{0.5n} \notag\\
	&< 	
	\left(1 + \frac{K_1}{n}\right)^{0.5n}
	\left(1 + \frac{2K_1}{n}\right)^{0.5 n \rho_n}
	\left(1 + \frac{2K_1}{n}\right)^{0.5 n (1-\rho_n)}
	\left( \frac
	{|\hat{\Sigma}^{B}|}
	{|\hat{\Sigma}_0^{B}|^{\rho_n} |\hat{\Sigma}_1^{B}|^{1-\rho_n}}
	\right)^{0.5n} \notag\\
	&< e^{1.5K_1} \left( 	\frac
	{|\hat{\Sigma}^B|}
	{|\hat{\Sigma}^B_0|^{\rho_n} |\hat{\Sigma}^B_1|^{1-\rho_n}}
	\right)^{0.5n}
	\label{eq:Rn6_e1}
	\end{align}	
	w.p.~$1$, where the last line follows from the fact that $(1+t/x)^x < e^t$ for all $x,t>0$.  
	From~\eqref{eq:Sigma_mixed_class}, 
	\begin{align}
	\hat{\Sigma}^B
	&= \Sigma^B + \rho_n E_0 + (1-\rho_n) E_1 + E,
	\end{align}
	and for $y = 0,1$, 
	\begin{align}
	E_y &= \hat{\Sigma}^B_y - \Sigma^B, \\
	E &= \frac{1-\rho_n}{n-1} \hat{\Sigma}^B_0 + \frac{\rho_n}{n-1} \hat{\Sigma}^B_1
	+ \frac{\rho_n (1-\rho_n) n}{n-1}(e_0 - e_1)(e_0 - e_1)^T,
	\end{align}
	where $e_y = \hat{\mu}_y^B - \mu^B$.
	By Lemma~\ref{lem:convergence_rate_of_C}, w.p.~$1$ there exists $K_2>0$ and $N_3>N_2$ such that 
	\begin{equation}
	|E_y(i,j)| < K_2 \sqrt{\frac{\log \log n}{n}}
	\label{eq:Ey_bound}
	\end{equation}
	for all $n>N_3$, $y=0,1$ and $i,j = 1, \ldots, |B|$.  
	Since $S_\infty$ is balanced, there exist $0 < p_0,p_1 < 1$ and $N_4 > N_3$ such that $n_y/n < p_y$ for all $n>N_4$ and $y=0, 1$.  Since $\hat{\Sigma}_y^B \to \Sigma^B$ as $n \to \infty$ w.p.~$1$, there exists $B > 0$ such that $|\hat{\Sigma}_y^B(i,j)|<B$ for all $n$, $y$, $i$ and $j$ w.p.~$1$.  
	By Lemma~\ref{lem:convergence_rate_of_C}, w.p.~$1$ there exists $K_3>0$ and $N_5>N_4$ such that $|e_y(i)| < K_3 \sqrt{\log \log n/n}$ for all $n> N_5$ and $y = 0, 1$.  
	By the triangle inequality, 
	\begin{align}
	|E(i,j)| 
	&\leq \frac{1-\rho_n}{n-1} |\hat{\Sigma}^B_0(i,j)| + \frac{\rho_n}{n-1} |\hat{\Sigma}^B_1(i,j)| \notag \\
	&\quad + \frac{\rho_n (1-\rho_n) n}{n-1}
	\left(|e_0(i)| + |e_1(i)|\right) 
	\left(|e_0(j)| + |e_1(j)|\right)  \notag \\
	&\leq \frac{p_1 B}{n-1} + \frac{p_0 B}{n-1} 
	+ 4 p_0 p_1 K_3^2 \frac{\log\log n}{n-1}.  
	\end{align}
	for all $n>N_5$ and $i,j = 1, \ldots, |B|$ w.p.~$1$.  In particular, there exists $K_4>0$ such that 
	\begin{align}
	|E(i,j)| 
	&< K_4 \frac{\log\log n}{n}.  
	\label{eq:E_bound}
	\end{align}
	Observe that $|\hat{\Sigma}^B|$ is a polynomial function of the elements of  $\Sigma^B$, $\rho_n E_0$, $(1-\rho_n)E_1$ and $E$, where each term has a degree of $|B|$ and a coefficient of $\pm 1$.  In particular, 
	\begin{align}
	|\hat{\Sigma}^B| 
	&= \sum_{i_1 = 1}^{|B|} \cdots \sum_{i_{|B|} = 1}^{|B|} \varepsilon_{i_1, \ldots, i_{|B|}} \hat{\Sigma}^B(1, i_1) \cdots \hat{\Sigma}^B(|B|, i_{|B|}) \notag \\
	&= \sum_{i_1 = 1}^{|B|} \cdots \sum_{i_{|B|} = 1}^{|B|} \varepsilon_{i_1, \ldots, i_{|B|}} 
	\sum_{k_1 = 1}^{4} \cdots \sum_{k_{|B|} = 1}^{4}
	X_{k_1}(1, i_1) \cdots X_{k_{|B|}}(|B|, i_{|B|}) \notag \\
	&= \sum_{k_1 = 1}^{4} \cdots \sum_{k_{|B|} = 1}^{4} m(k_1, \ldots, k_{|B|}),
	\end{align}
	where $\varepsilon_{i_1, \ldots, i_{|B|}}$ is the Levi-Civita symbol, equal to $+1$ if $(i_1, \ldots, i_{|B|})$ is an even permutation of $(1, \ldots, |B|)$, $-1$ if its an odd permutation, and $0$ otherwise, $X_1 = \Sigma^B$, $X_2 = \rho_n E_0$, $X_3 = (1-\rho_n) E_1$, $X_4 = E$, and 
	\begin{align}
	m(k_1, \ldots, k_{|B|}) = \sum_{i_1 = 1}^{|B|} \cdots \sum_{i_{|B|} = 1}^{|B|} \varepsilon_{i_1, \ldots, i_{|B|}} 
	X_{k_1}(1, i_1) \cdots X_{k_{|B|}}(|B|, i_{|B|}).  
	\end{align}
	From~\eqref{eq:Ey_bound} and~\eqref{eq:E_bound}, w.p.~$1$ there exists $K_5 \in \mathbb{R}$ such that 
	\begin{align}
	|\hat{\Sigma}^B| 
	&= m(1, \ldots, 1) 
	+ \sum_{(k_1, \ldots, k_{|B|}) \in \mathcal{M}_2} m(k_1, \ldots, k_{|B|})
	+ \sum_{(k_1, \ldots, k_{|B|}) \in \mathcal{M}_3} m(k_1, \ldots, k_{|B|}) \notag \\
	&\quad + \sum_{(k_1, \ldots, k_{|B|}) \in \mathcal{M}} m(k_1, \ldots, k_{|B|}) \notag \\
	&< m(1, \ldots, 1) 
	+ \sum_{(k_1, \ldots, k_{|B|}) \in \mathcal{M}_2} m(k_1, \ldots, k_{|B|})
	+ \sum_{(k_1, \ldots, k_{|B|}) \in \mathcal{M}_3} m(k_1, \ldots, k_{|B|}) \notag \\
	&\quad + K_5 \frac{\log \log n}{n}
	\label{eq:det_hat_Sigma}
	\end{align}
	for all $n > N_5$, where 
	\begin{align}
	\mathcal{M}_2 &= \{(k_1, \ldots, k_{|B|}) : \text{exactly one $k$ equals $2$, the rest equal $1$}\}, \notag \\
	\mathcal{M}_3 &= \{(k_1, \ldots, k_{|B|}) : \text{exactly one $k$ equals $3$, the rest equal $1$}\}, \notag \\
	\mathcal{M} &= \{(k_1, \ldots, k_{|B|}) : \text{at least two $k$'s are in $\{2,3\}$, or at least one $k$ equals $4$}\}. \notag
	\end{align}
	Further, w.p.~$1$ there exists $K_6 \in \mathbb{R}$ such that
	\begin{align}
	\rho_n|\hat{\Sigma}_0^B|
	&= \rho_n|\Sigma^B + E_0| \notag \\
	&= \rho_n \sum_{k_1 = 1}^{2} \cdots \sum_{k_{|B|} = 1}^{2} m^{\prime}(k_1, \ldots, k_{|B|}) \notag \\
	&= \rho_n m(1, \ldots, 1) \notag \\
	&\quad + \sum_{(k_1, \ldots, k_{|B|}) \in \mathcal{M}_2} m(k_1, \ldots, k_{|B|}) 
	+ \rho_n \sum_{(k_1, \ldots, k_{|B|}) \in \mathcal{M}^{\prime}} m^{\prime}(k_1, \ldots, k_{|B|}) \notag \\
	&> \rho_n m(1, \ldots, 1) 
	+ \sum_{(k_1, \ldots, k_{|B|}) \in \mathcal{M}_2} m(k_1, \ldots, k_{|B|}) 
	- K_6 \frac{\log \log n}{n}
	\label{eq:det_hat_Sigma_0}
	\end{align}
	for all $n > N_5$, where 
	\begin{align}
	m^{\prime}(k_1, \ldots, k_{|B|})
	&= \sum_{i_1 = 1}^{|B|} \cdots \sum_{i_{|B|} = 1}^{|B|} \varepsilon_{i_1, \ldots, i_{|B|}} 
	X^{\prime}_{k_1}(1, i_1) \cdots X^{\prime}_{k_{|B|}}(|B|, i_{|B|}), \notag \\
	\mathcal{M}^{\prime} &= \{(k_1, \ldots, k_{|B|}) : \text{at least two $k$'s equal $2$, the rest equal $1$}\},
	\end{align}
	$X^{\prime}_1 = \Sigma^B$, $X^{\prime}_2 = E_0$, and we have used the facts that $m^{\prime}(1, \ldots, 1) = m(1, \ldots, 1)$ and $\rho_n m^{\prime}(k_1, \ldots, k_{|B|}) = m(k_1, \ldots, k_{|B|})$ when $(k_1, \ldots, k_{|B|}) \in \mathcal{M}_2$.  
	Similarly, w.p.~$1$ there exists $K_7 \in \mathbb{R}$ such that
	\begin{align}
	(1-\rho_n)|\hat{\Sigma}_1^B|
	&> (1-\rho_n) m(1, \ldots, 1) 
	+ \sum_{(k_1, \ldots, k_{|B|}) \in \mathcal{M}_3} m(k_1, \ldots, k_{|B|}) 
	- K_7 \frac{\log \log n}{n}
	\label{eq:det_hat_Sigma_1}
	\end{align}
	for all $n > N_5$.  Combining~\eqref{eq:det_hat_Sigma}, \eqref{eq:det_hat_Sigma_0} and \eqref{eq:det_hat_Sigma_1}, w.p.~$1$ there exists $K_8 \in \mathbb{R}$ such that
	\begin{align}
	|\hat{\Sigma}^B| 
	< \rho_n|\hat{\Sigma}_0^B| + (1-\rho_n)|\hat{\Sigma}_1^B| 
	+ K_8 \frac{\log \log n}{n}
	\end{align}
	for all $n > N_5$.  
	Thus, w.p.~$1$ there exists $K_{9} \in \mathbb{R}$ such that 
	\begin{align}
	\frac{|\hat{\Sigma}^B|}
	{|\hat{\Sigma}^B_0|^{\rho_n} |\hat{\Sigma}^B_1|^{1-\rho_n}}
	&< \frac{\rho_n|\hat{\Sigma}_0^B| + (1-\rho_n)|\hat{\Sigma}_1^B|}
	{|\hat{\Sigma}^B_0|^{\rho_n} |\hat{\Sigma}^B_1|^{1-\rho_n}}
	+ K_{9} \frac{\log \log n}{n} \notag \\
	&= \rho_n \left(\frac{|\hat{\Sigma}^B_0|}{|\hat{\Sigma}^B_1|}\right)^{1-\rho_n}
	+ (1-\rho_n) \left(\frac{|\hat{\Sigma}^B_1|}{|\hat{\Sigma}^B_0|}\right)^{\rho_n}
	+ K_{9} \frac{\log \log n}{n}
	\end{align}
	for all $n>N_5$, where $K_{9}$ is chosen based on the fact that $|\hat{\Sigma}^B_y| \to |\Sigma^B|$ w.p.~$1$ (an application of Corollary~\ref{lem:convergence_of_C} with hyperparameters $\nu_y^B=0$, $\kappa_y^B=|A|$ and $S_y^B = 0$ in place of the hyperparameters used by the selection rule), and thus $|\hat{\Sigma}^B_y|$ must be bounded for all $n$.  
	In addition, w.p.~$1$ there exists $K_{10} > 0$ and $N_6>N_5$ such that 
	\begin{align}
	\left|\frac{|\hat{\Sigma}^B_0|}{|\hat{\Sigma}^B_1|} - 1\right|
	&= \frac{||\hat{\Sigma}^B_0| - |\hat{\Sigma}^B_1||}{|\hat{\Sigma}^B_1|} \notag \\
	&\leq \frac{||\hat{\Sigma}^B_0| - |\Sigma^B|| + ||\hat{\Sigma}^B_1| + |\Sigma^B||}{0.5|\Sigma^B|} \notag \\
	&= 2 \left| \frac{|\hat{\Sigma}^B_0|}{|\Sigma^B|} - 1 \right|
	+ 2 \left| \frac{|\hat{\Sigma}^B_1|}{|\Sigma^B|} - 1 \right| \notag \\
	&< 4K_{10} \sqrt{\frac{\log\log n}{n}}
	\end{align}
	for $n > N_6$, 
	where in the second line we have applied the triangle inequality and the fact that $|\hat{\Sigma}_1^B| \to |\Sigma^B|$ w.p.~$1$, so $N_6$ is chosen such that $|\hat{\Sigma}^B_1| > 0.5|\Sigma^B|$ for all $n > N_6$, and the last line follows from Lemma~\ref{lem:convergence_rate_of_C}.
	By Lemma~S3 in~\cite{igib1}, there exists $r \in (0,1)$ such that for all $\rho \in (0, 1)$ and $x \in (1-r, 1+r)$, 
	\begin{equation}
	\rho x^{1-\rho} + (1-\rho) x^{-\rho} 
	\leq 1 + \rho(1-\rho)(x-1)^2
	\leq 1 + 0.25(x-1)^2,  
	\end{equation}
	where in the last inequality we use the fact that $\rho(1-\rho) < 0.25$ for all $\rho \in [0,1]$.  
	Thus, w.p.~$1$ there exists $N_7>N_6$ such that 
	\begin{align}
	\frac{|\hat{\Sigma}^B|}
	{|\hat{\Sigma}^B_0|^{\rho_n} |\hat{\Sigma}^B_1|^{1-\rho_n}}
	&< 1 + 0.25 \left(\frac{|\hat{\Sigma}^B_0|}{|\hat{\Sigma}^B_1|} - 1\right)^{2}
	+ K_{9} \frac{\log \log n}{n} \notag \\
	&< 1 + K_{11} \frac{\log \log n}{n}
	\end{align}
	for all $n > N_7$, where $K_{11} = 4 K_{10}^2 + K_{9}$.  
	Thus, from~\eqref{eq:Rn6_e1}, w.p.~$1$
	\begin{align}
	R_n 
	&< e^{1.5K_1} \left( 1 + K_{11} \frac{\log \log n}{n} \right)^{0.5n} \notag \\
	&< e^{1.5K_1} (\log n)^{0.5K_{11}}
	\end{align}
	for all $n>N_7$, where the last line follows from the fact that $(1+t/x)^x < e^t$ for all $x,t>0$.  From~\eqref{eq:a_ratio_correct_bad}, the theorem holds with $c = K_{12} + 0.5K_{11}$ and $N = N_7$, where $K_{12}$ is such that $(\log n)^{K_{12}}$ exceeds $e^{1.5K_1}$ times a bound on the first term in~\eqref{eq:a_ratio_correct_bad}.  
\end{proof}

\begin{mylem}
	\label{lem:ratio_a_dep_labeled_indep_good}
	Let $G_1$ and $G_2$ be any disjoint feature sets such that there exists at least one feature in $G_1$ and one feature in $G_2$ that are correlated in at least one class, and $\Sigma_0^{G}$ and $\Sigma_1^{G}$ are full rank, where $G = G_1 \cup G_2$.  
	Let $S_{\infty}$ be a balanced sample.  Then, w.p.~$1$ there exist $0 \leq r < 1$ and $N>0$ such that
	\begin{align}
	\left(
	\frac{|C^{G}_0|^{\kappa^{G*}_0}
	|C^{G}_1|^{\kappa^{G*}_1}}
	{|C^{G_1}_0|^{\kappa^{G_1*}_0}
	|C^{G_2}_0|^{\kappa^{G_2*}_0}
	|C^{G_1}_1|^{\kappa^{G_1*}_1}
	|C^{G_2}_1|^{\kappa^{G_2*}_1}}
	\right)^{0.5}
	< r^n
	\label{eq:ratio_merge_good}
	\end{align}
	for all $n > N$.  The left hand side of~\eqref{eq:ratio_merge_good} is $a(P)/a(P')$ when $P$ and $P'$ are strict refinements of the unambiguous feature partition that are identical except $G_1$ and $G_2$ are good blocks in $P$ and $G$ is a good block in $P'$.  
\end{mylem}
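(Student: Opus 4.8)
The plan is to follow the template of the proof of Lemma~\ref{lem:ratio_a_good_labeled_bad_merge_same}, since merging two correlated \emph{good} blocks is structurally the same problem as merging a correlated good and bad block. First I would split off the fixed exponents by writing $\kappa^{A*}_y = \kappa^A_y + n_y$ for each block, so that the left-hand side of~\eqref{eq:ratio_merge_good} factors as
\begin{align}
\left(
\frac{|C^{G}_0|^{\kappa^{G}_0} |C^{G}_1|^{\kappa^{G}_1}}
{|C^{G_1}_0|^{\kappa^{G_1}_0} |C^{G_2}_0|^{\kappa^{G_2}_0} |C^{G_1}_1|^{\kappa^{G_1}_1} |C^{G_2}_1|^{\kappa^{G_2}_1}}
\right)^{0.5}
\times R_{0,n}^{n_0} R_{1,n}^{n_1},
\end{align}
where $R_{y,n} = \left(|C^G_y| / (|C^{G_1}_y| |C^{G_2}_y|)\right)^{0.5}$. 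By Corollary~\ref{lem:convergence_of_C}, every $|C^A_y| \to |\Sigma^A_y|$ w.p.~$1$, so the first factor converges to a positive finite constant and it suffices to bound $R_{0,n}^{n_0} R_{1,n}^{n_1}$ by $r^n$.

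Next I would identify the limit of $R_{y,n}$. Again by Corollary~\ref{lem:convergence_of_C}, $R_{y,n} \to R_y = \left(|\Sigma^G_y| / (|\Sigma^{G_1}_y| |\Sigma^{G_2}_y|)\right)^{0.5}$ w.p.~$1$. Writing $\Sigma^G_y$ in block form with off-diagonal block $V_y$ (the cross-covariance between $G_1$ and $G_2$ in class $y$) and applying the Schur-complement determinant identity gives $R_y = \left(|\Sigma^{G_2}_y - W_y| / |\Sigma^{G_2}_y|\right)^{0.5}$, where $W_y \equiv V_y^T (\Sigma^{G_1}_y)^{-1} V_y$. Since $\Sigma^{G_1}_y$ is a principal submatrix of the full-rank $\Sigma^G_y$ it is positive definite, so $W_y \succeq 0$ and hence $R_y \leq 1$ for every class. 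The crux is the strict inequality in a class $y^*$ where $G_1$ and $G_2$ are correlated: there $V_{y^*} \neq 0$, so $W_{y^*}$ is positive semidefinite and nonzero, while the Schur complement $\Sigma^{G_2}_{y^*} - W_{y^*}$ is itself positive definite (again because $\Sigma^G_{y^*}$ is full rank). Subtracting a nonzero positive-semidefinite matrix from a positive-definite matrix strictly decreases its determinant, so $|\Sigma^{G_2}_{y^*} - W_{y^*}| < |\Sigma^{G_2}_{y^*}|$ and therefore $R_{y^*} < 1$.

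Finally I would combine the two classes. Because $S_\infty$ is balanced there is $p_y > 0$ with $n_y > p_y n$ for $n$ large, both $y$. By hypothesis at least one class $y^*$ is correlated, so $R_{y^*} < 1$ and thus $R_{y^*,n}^{n_{y^*}} < \tilde{r}^{\,n}$ for some $\tilde{r} < 1$ w.p.~$1$. For the remaining class the limit satisfies $R_y \leq 1$, so its contribution grows at most sub-exponentially: if that class is also correlated it decays exponentially as well, and if it is uncorrelated the true off-diagonal block vanishes, whence the $O(\sqrt{\log\log n / n})$ entrywise rate on $C^G_y$ from Lemma~\ref{lem:convergence_rate_of_C} controls $W_y$ and the block comparison exactly as in the proof of Lemma~\ref{lem:ratio_a_indep_labeled_dep_good}, yielding $R_{y,n}^{n_y} < (\log n)^{c}$ w.p.~$1$. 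In either case $R_{0,n}^{n_0} R_{1,n}^{n_1} < \tilde{r}^{\,n} (\log n)^{c}$, and since exponential decay dominates any power of $\log n$, the left-hand side of~\eqref{eq:ratio_merge_good} is below $r^n$ for any $\tilde{r} < r < 1$ and $n$ large.

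I expect the main obstacle to be the strict determinant inequality $R_{y^*} < 1$: one must argue that a nonzero cross-covariance $V_{y^*}$ produces a nonzero positive-semidefinite correction $W_{y^*}$ and that subtracting it strictly shrinks the determinant, which is where the full-rank hypothesis on $\Sigma^G_{y^*}$ (guaranteeing positive definiteness of the Schur complement) is essential. A secondary technical point is ensuring the uncorrelated class cannot spoil the exponential decay, i.e.\ that its factor is bounded by a power of $\log n$ rather than growing exponentially; this reuses the convergence-rate machinery of Lemma~\ref{lem:convergence_rate_of_C}.
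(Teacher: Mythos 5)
Your proposal is correct, and its core is the same as the paper's: the same factorization into a bounded constant-exponent factor times $R_{0,n}^{n_0}R_{1,n}^{n_1}$, the same identification of the limit $R_y$ via Corollary~\ref{lem:convergence_of_C}, and the same Schur-complement determinant argument showing $R_{y^*}<1$ in a correlated class (the paper states this inequality without the positive-semidefinite-perturbation justification you supply, but it is the identical idea). Where you genuinely diverge is the final combination of the two classes. The paper simply says ``the rest of the proof proceeds exactly as in Lemma~\ref{lem:ratio_a_good_labeled_bad_merge_same},'' i.e.\ it bounds $R_{0,n}^{n_0}R_{1,n}^{n_1}\leq R_n^n$ with $R_n=R_{0,n}^{p_0}R_{1,n}^{p_1}\to R_0^{p_0}R_1^{p_1}<1$; as written that inequality is delicate (its direction depends on whether each $R_{y,n}$ sits above or below $1$ at finite $n$, which is not controlled for the uncorrelated class, whose factor can drift above $1$). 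You instead treat the classes separately: exponential decay $\tilde{r}^n$ from the correlated class, and a $(\log n)^c$ bound for a possibly uncorrelated class obtained by reusing the machinery of Lemmas~\ref{lem:convergence_rate_of_C} and~\ref{lem:ratio_a_indep_labeled_dep_good}. This buys a cleaner, airtight bound at exactly the point where the paper is loosest. The one caveat is that your uncorrelated-class bound invokes the iterated-logarithm rates of Lemma~\ref{lem:convergence_rate_of_C}, which require finite fourth moments --- a hypothesis stated in Lemma~\ref{lem:ratio_a_indep_labeled_dep_good} but absent from the statement of Lemma~\ref{lem:ratio_a_dep_labeled_indep_good} (it does hold where the lemma is applied, in Step~6 of Theorem~\ref{sec:thm_conv_1}, by Condition~(ii)). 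A cheaper patch avoiding that assumption: since $R_{y,n}\to R_y\leq 1$, for any $\epsilon>0$ eventually $R_{y,n}<1+\epsilon$, so that class contributes at most $(1+\epsilon)^n$, and choosing $\epsilon$ with $\tilde{r}(1+\epsilon)<1$ preserves the exponential decay using only the convergence statements of Corollary~\ref{lem:convergence_of_C}.
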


\begin{proof}
	We have that 
	\begin{align}
	&\left(
	\frac{|C^{G}_0|^{\kappa^{G*}_0}
		|C^{G}_1|^{\kappa^{G*}_1}}
	{|C^{G_1}_0|^{\kappa^{G_1*}_0}
		|C^{G_2}_0|^{\kappa^{G_2*}_0}
		|C^{G_1}_1|^{\kappa^{G_1*}_1}
		|C^{G_2}_1|^{\kappa^{G_2*}_1}}
	\right)^{0.5} \notag \\
	&\qquad= \left(
	\frac{|C^{G}_0|^{\kappa^{G}_0}}
	{|C^{G_1}_0|^{\kappa^{G_1}_0}|C^{G_2}_0|^{\kappa^{G_2}_0}}
	\times
	\frac{|C^{G}_1|^{\kappa^{G}_1}}
	{|C^{G_1}_1|^{\kappa^{G_1}_1}|C^{G_2}_1|^{\kappa^{G_2}_1}}
	\right)^{0.5}
	\times  
	R_{0, n}^{n_0} R_{1, n}^{n_1},
	\label{eq:a_ratio_merge_good}
	\end{align}
	where
	\begin{align}
	R_{y, n} 
	= \left(\frac{|C^{G}_y|}
	{|C^{G_1}_y||C^{G_2}_y|}\right)^{0.5}.
	\end{align}
	By Corollary~\ref{lem:convergence_of_C}, the first term in~\eqref{eq:a_ratio_merge_good} converges to a positive finite constant w.p.~$1$, and~\eqref{eq:R_y_merge_good_diff} holds.  The rest of the proof proceeds exactly as in Lemma~\ref{lem:ratio_a_good_labeled_bad_merge_same}.  
\end{proof}

\begin{mylem}
	\label{lem:ratio_a_dep_labeled_indep_bad}
	Let $B_1$ and $B_2$ be any disjoint feature sets such that $\mu_0^B = \mu_1^B$, $\Sigma^B \equiv \Sigma_0^B = \Sigma_1^B$, there exists at least one feature in $B_1$ and one feature in $B_2$ that are correlated in at least one class, and $\Sigma^{B}$ is full rank, 
	where $B = B_1 \cup B_2$.  
	Let $S_{\infty}$ be a balanced sample.  Then, w.p.~$1$ there exist $0 \leq r < 1$ and $N>0$ such that 
	\begin{align}
	\left(
	\frac{|C^{B}|^{\kappa^{B*}}}
	{|C^{B_1}|^{\kappa^{B_1*}}|C^{B_2}|^{\kappa^{B_2*}}}
	\right)^{0.5}
	< r^n
	\label{eq:ratio_merge_bad}
	\end{align}
	for all $n > N$.  The left hand side of~\eqref{eq:ratio_merge_bad} is $a(P)/a(P')$ when $P$ and $P'$ are strict refinements of the unambiguous feature partition that are identical except $B_1$ and $B_2$ are bad blocks in $P$ and $B$ is a bad block in $P'$.  
\end{mylem}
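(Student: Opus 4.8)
The plan is to mirror the structure of Lemma~\ref{lem:ratio_a_dep_labeled_indep_good}, but the argument is actually simpler here because every quantity is computed across the whole sample (there are no class-specific determinants $|C_y^B|$ to track). First I would peel off the sample-size-dependent part of each exponent. Since $\kappa^{B*} = \kappa^B + n$ and likewise for the sub-blocks, I can write
\begin{align}
\left(
\frac{|C^{B}|^{\kappa^{B*}}}
{|C^{B_1}|^{\kappa^{B_1*}}|C^{B_2}|^{\kappa^{B_2*}}}
\right)^{0.5}
&= \left(
\frac{|C^{B}|^{\kappa^{B}}}
{|C^{B_1}|^{\kappa^{B_1}}|C^{B_2}|^{\kappa^{B_2}}}
\right)^{0.5}
\times R_n^{\,n},
\end{align}
where $R_n = \bigl( |C^B| / (|C^{B_1}||C^{B_2}|) \bigr)^{0.5}$. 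Because $\mu_0^B = \mu_1^B$ and $\Sigma_0^B = \Sigma_1^B$, the same equalities hold for the sub-blocks $B_1$ and $B_2$, so parts (3) and (4) of Corollary~\ref{lem:convergence_of_C} apply: $|C^B|$, $|C^{B_1}|$ and $|C^{B_2}|$ converge w.p.~$1$ to $|\Sigma^B|$, $|\Sigma^{B_1}|$ and $|\Sigma^{B_2}|$, respectively (all positive by full rank). Hence the first factor converges to a positive finite constant, and $R_n \to R \equiv \bigl( |\Sigma^B|/(|\Sigma^{B_1}||\Sigma^{B_2}|) \bigr)^{0.5}$ w.p.~$1$.

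The crux is the strict inequality $R < 1$. Ordering the features so that
\begin{align}
\Sigma^B = \begin{pmatrix} \Sigma^{B_1} & V \\ V^T & \Sigma^{B_2} \end{pmatrix},
\end{align}
the block determinant identity gives $|\Sigma^B| = |\Sigma^{B_1}| \, |\Sigma^{B_2} - V^T (\Sigma^{B_1})^{-1} V|$, so that $R = \bigl( |\Sigma^{B_2} - V^T(\Sigma^{B_1})^{-1}V| / |\Sigma^{B_2}| \bigr)^{0.5}$. Since $\Sigma^B$ is full rank, the Schur complement $\Sigma^{B_2} - V^T(\Sigma^{B_1})^{-1}V$ is positive-definite, so $R \geq 0$. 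Moreover, since there is at least one correlated pair across $B_1$ and $B_2$ in some class, and the two classes share the common covariance $\Sigma^B$, the off-diagonal block $V$ is nonzero; thus $V^T(\Sigma^{B_1})^{-1}V$ is positive semidefinite and nonzero, and by strict monotonicity of the determinant on the positive-definite cone, $|\Sigma^{B_2} - V^T(\Sigma^{B_1})^{-1}V| < |\Sigma^{B_2}|$, giving $R < 1$.

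Finally I would convert $R < 1$ into the exponential bound. Fix any $r'$ with $R < r' < 1$; since $R_n \to R$ w.p.~$1$, there is $N_1$ with $R_n < r'$ for all $n > N_1$, so $R_n^{\,n} < (r')^n$. The first factor is bounded above by some constant $M$ for $n$ large enough, so the left-hand side is at most $M (r')^n$, and for any $r \in (r',1)$ we have $M(r')^n < r^n$ for all $n$ sufficiently large, establishing the claim. The only real obstacle is the strict inequality $R < 1$, and everything else is routine; the key point is that correlation between $B_1$ and $B_2$ forces $V \neq 0$, which strictly shrinks the Schur-complement determinant. This is precisely what produces exponential decay when merging correlated bad blocks, in contrast to the merely polylogarithmic behavior of the uncorrelated case in Lemma~\ref{lem:ratio_a_indep_labeled_dep_bad}.
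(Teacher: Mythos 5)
Your proposal is correct and follows essentially the same route as the paper's own proof: the same factorization into a constant-exponent term times $R_n^n$, the same appeal to Corollary~\ref{lem:convergence_of_C} for convergence of the determinants, and the same Schur-complement argument showing that a nonzero off-diagonal block $V$ forces $R<1$. Your added details (strict monotonicity of the determinant under subtraction of a nonzero positive-semidefinite matrix, and the explicit absorption of the bounded prefactor into a slightly larger $r$) merely make explicit steps the paper leaves implicit.
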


\begin{proof}
	We have that
	\begin{align}
	\left(
	\frac{|C^{B}|^{\kappa^{B*}}}
	{|C^{B_1}|^{\kappa^{B_1*}}|C^{B_2}|^{\kappa^{B_2*}}}
	\right)^{0.5}
	&= \left(
	\frac{|C^{B}|^{\kappa^{B}}}
	{|C^{B_1}|^{\kappa^{B_1}}|C^{B_2}|^{\kappa^{B_2}}}
	\right)^{0.5}
	\times  
	R_n^{n}, 
	\label{eq:a_ratio_merge_bad}
	\end{align}
	where 
	\begin{equation}
	R_n = \left(\frac{|C^{B}|}{|C^{B_1}||C^{B_2}|}\right)^{0.5}.
	\end{equation}	
	By Corollary~\ref{lem:convergence_of_C}, the first term in~\eqref{eq:a_ratio_merge_bad} converges to a positive finite constant w.p.~$1$, and 
	\begin{align}
	R_n \to
	\left( 
	\frac{|\Sigma^{B}|}
	{|\Sigma^{B_1}||\Sigma^{B_2}|}
	\right)^{0.5}
	\equiv R
	\end{align}
	as $n \to \infty$, both w.p.~$1$, where $\Sigma^{B_1} \equiv \Sigma^{B_1}_0 = \Sigma^{B_1}_1$ and $\Sigma^{B_2} \equiv \Sigma^{B_2}_0 = \Sigma^{B_2}_1$.
	Without loss of generality, assume features in $B$ are ordered such that 
	\begin{equation}
	\Sigma^{B} =
	\begin{pmatrix}
	\Sigma^{B_1} & V \\
	V^T & \Sigma^{B_2}
	\end{pmatrix}, 
	\end{equation}
	where $V$ is a matrix of correlations between features in $B_1$ and $B_2$.  
	Since 
	\begin{equation}
	|\Sigma^{B}| = |\Sigma^{B_1}| |\Sigma^{B_2} - V^T(\Sigma^{B_1})^{-1}V|,
	\end{equation}
	we have that 
	\begin{align}
	R =
	\left( 
	\frac{|\Sigma^{B_2} - V^T(\Sigma^{B_1})^{-1}V|}
	{|\Sigma^{B_2}|}
	\right)^{0.5}.
	\end{align}
	Since there exists at least one feature in $B_1$ and one feature in $B_2$ that are correlated, $V$ is non-zero, $|\Sigma^{B_2} - V^T(\Sigma^{B_1})^{-1}V| < |\Sigma^{B_2}|$, and $0 \leq R < 1$.  
	The theorem holds for any $R < r < 1$.  
\end{proof}

\section{Conclusion}
\label{sec:conclusion}

The consistency theory presented herein is important because it provides a richer understanding the type of features selected by OBFS.  
Furthermore, we have characterized rates of convergence for the posterior on feature partitions, and the marginal posterior probabilities on individual features.  

Although here we focus on identifying $\bar{G}$ using the posterior $\pi^*(G)$, the OBFS framework can be used for \emph{optimal Bayesian partition selection}, which aims to identify $\bar{P}$ using the full posterior $\pi^*(P)$.  Partition selection may be of interest, for instance, if one wishes to identify communities of closely interacting genes.  Since Theorem~\ref{sec:thm_conv_1} proves that $\pi^*(P)$ converges to a point mass at $\bar{P}$, this theorem has direct implications on the consistency of optimal Bayesian partition selection as well.  

The conditions in Theorem~\ref{sec:thm_conv_1} are sufficient but not necessary.  
For example, it may be possible to relax Condition (iii) of Theorem~\ref{sec:thm_conv_1}.  
It is also possible for $\pi^*(G)$ to converge to a point mass at $\bar{G}$, but for $\pi^*(P)$ to not converge to a point mass at $\bar{P}$.  
For example, if non-zero correlations are present in the data generation process, then the OBF variant of OBFS sets $\pi(\bar{P}) = 0$, which means that Condition (v) of Theorem~\ref{sec:thm_conv_1} does not hold.  
However, if the independent unambiguous set of good features given in Definition~\ref{def:indep_unambiguous} and the unambiguous set of good features given in Definition~\ref{sec:ded} happen to be equal (the latter always contains the former), then OBF can be strongly consistent relative to the unambiguous set of good features.  

OBFS searches for the unambiguous set of good features, which expands upon the independent unambiguous set of good features targeted by OBF.  In particular, the unambiguous set of good features includes features that are only strongly discriminating when grouped together, features that are individually weak but strongly correlated with strong features, and features that are linked to discriminating features only through a chain of correlation.  The unambiguous feature set is complete in the sense that any features that are not included must not have any first or second order distributional differences between the classes and must be uncorrelated with all features that are included, and minimal in the sense that it is the smallest feature set with this property.  The unambiguous feature set is thus of great interest and importance in bioinformatics and many other application areas, although we are not aware of any selection algorithms besides OBFS that aim to identify this set.  

%% ** Appendix **

%\section*{Appendix}

%% ** Supplements **

%\begin{supplement}
%	\sname{Supplement A}
%	\label{suppA} 
%	\stitle{Supplement A}
%	\slink[url]{http://}
%	\sdescription{Contains...}
%\end{supplement}

%% ** Acknowledgements **

\begin{acknowledgement}
	This work is supported by the National Science Foundation (CCF-1422631 and CCF-1453563).
\end{acknowledgement}

%% ** The bibliograhy **

%\bibliographystyle{ba}
%\bibliography{feat_sel_dep_refs}

\end{document}